\documentclass[11pt]{article}

\usepackage[margin=1in]{geometry}
\usepackage{amsmath,amssymb,amsfonts,mathtools,bm}
\usepackage{booktabs}
\usepackage{longtable}
\usepackage{graphicx}
\usepackage{enumitem}
\usepackage{xcolor}
\usepackage{microtype}
\usepackage{array}
\usepackage{caption}
\usepackage{placeins}
\usepackage{float}
\usepackage{mathrsfs}
\usepackage{tikz,pgfplots}
\newcommand{\BEAS}{\begin{eqnarray*}}
\newcommand{\EEAS}{\end{eqnarray*}}
\newcommand{\BEA}{\begin{eqnarray}}
\newcommand{\EEA}{\end{eqnarray}}
\newcommand{\BEQ}{\begin{equation}}
\newcommand{\EEQ}{\end{equation}}
\newcommand{\BIT}{\begin{itemize}}
\newcommand{\EIT}{\end{itemize}}
\newcommand{\BNUM}{\begin{enumerate}}
\newcommand{\ENUM}{\end{enumerate}}
\newcommand{\BA}{\begin{array}}
\newcommand{\EA}{\end{array}}
\newcommand{\Diag}{\mathop{\rm Diag}}

\newcommand{\idm}{I}
\newcommand{\rb}{\mathbb{R}}
 
\newcommand{\mysec}[1]{Section~\ref{sec:#1}}
\newcommand{\eq}[1]{Eq.~(\ref{eq:#1})}
\newcommand{\myfig}[1]{Figure~\ref{fig:#1}}

\newcommand{\BlackBox}{\rule{1.5ex}{1.5ex}}  % end of proof

\newenvironment{proof}
  {\par\noindent{\bf Proof\ }}
  {\hfill\BlackBox\par\vspace{2mm}}

\newtheorem{theorem}{Theorem}
\newtheorem{proposition}{Proposition}

\newtheorem{assumption}{Assumption}

\newtheorem{remark}{Remark}

\newcommand{\R}{\mathbb R}
\newcommand{\tr}{\operatorname{tr}}
\newcommand{\diag}{\operatorname{diag}}

\newcommand{\conv}{\operatorname{conv}}
\newcommand{\Normal}{\mathcal N}
\newcommand{\dd}{  d }
\newcommand{\op}{\mathrm{op}}
\newcommand{\artanh}{\operatorname{artanh}}

\newcommand{\J}{\mathcal J}
\newcommand{\K}{\mathcal K}

\newcommand{\Zq}{Z_q}

\newcommand{\Nquad}{N_{\rm quad}}

\newcommand{\E}{\mathbb E}
\renewcommand{\P}{\mathbb P}

\newcommand{\df}{\operatorname{df}}

\providecommand{\norm}[1]{\left\lVert #1\right\rVert}
\providecommand{\ip}[2]{\left\langle #1,#2\right\rangle}
\providecommand{\hDelta}{\hat\Delta}
\providecommand{\hmu}{\hat\mu}
\providecommand{\hC}{\hat C}
\providecommand{\hM}{\hat M}
\providecommand{\hCp}{\hat C_p}
\providecommand{\hCq}{\hat C_q}
\providecommand{\hmup}{\hat\mu_p}
\providecommand{\hmuq}{\hat\mu_q}
\providecommand{\RD}{R}
\providecommand{\Sone}{\mathfrak S_1}
\providecommand{\Tr}{\operatorname{Tr}}

\usepackage[mathcal]{eucal}

\DeclareMathAlphabet\mathbfcal{OMS}{cmsy}{b}{n}

\usepackage[colorlinks=true, allcolors=blue, pagebackref]{hyperref}
\renewcommand*{\backrefalt}[4]{%
    \ifcase #1 \footnotesize{(not cited)}%
    \or        \footnotesize{(cited on page~#2)}%
    \else      \footnotesize{(cited on pages~#2)}%
    \fi}

\title{Regularized Variational and Spectral Log-Density-Ratio Estimation\\ in the Gaussian Location Model}

\author{Francis Bach \\
Inria - Ecole Normale Sup\'erieure\\ 
PSL Research University
\\
{ \url{francis.bach@inria.fr}}}

\begin{document}

\maketitle

%\linenumbers

\begin{abstract}
We study ridge-regularized log-density-ratio estimation in the Gaussian location model with a common covariance matrix. By affine invariance, the model is written as \(q\sim\Normal(0,\idm)\), \(p\sim\Normal(\Delta,\idm)\), with linear features, where $\Delta \in \mathbb{R}^m$ and the signal strength \(s=\|\Delta\|^2\) is fixed.  The variational estimator is the empirical Kullback-Leibler (KL) log-normalized fit with a squared \(\ell_2\)-penalty on its nonconstant coefficient, and the spectral estimator recently introduced in \cite{Bach2026} replaces a single variational problem by a continuum of ridge-regularized least-squares problems.   

We derive high-dimensional deterministic asymptotic equivalents when the numbers of observations and dimension tend to infinity with fixed ratios.  The regularized variational limit is characterized by a 
scalar entropy minimization problem derived from the convex-Gaussian-min-max theorem (CGMT), while the regularized spectral limit follows from deterministic equivalents for resolvents of weighted sums of two independent Gaussian sample covariance matrices.  We use these formulas to compare population risks, with experiments focused on fixed-signal aspect-ratio sweeps and optimized regularization.   Our conclusion is that with many observations, under the criteria and asymptotic regimes analyzed here, the well-specified variational estimator has the smaller risk, while with fewer observations, the spectral estimator is favored because its covariance-based construction has lower variance. We also study how a nuclear penalty can be used and partially analyzed to perform feature learning.
\end{abstract}

\section{Introduction}

\addcontentsline{toc}{section}{Introduction and objective}

Estimating a log-density ratio from two samples is a central primitive in machine learning and data science, with applications to covariate-shift correction, two-sample testing, importance weighting, likelihood-free inference, and variational estimation of divergences \cite{SugiyamaNakanoKawanabe2008,KanamoriHidoSugiyama2009,SugiyamaSuzukiKanamori2012,NguyenWainwrightJordan2010}.  A common approach is to use a variational representation of the Kullback--Leibler (KL) divergence, fit a real-valued potential \(v\) from samples of \(p\) and \(q\), and interpret \(v\) as an estimate of \(\log(dp/dq)\) \cite{DonskerVaradhan1975,NguyenWainwrightJordan2010,Qin1998}.  Without regularization this approach is exposed to the variance of empirical exponential normalization: a few sample points from $q$ can dominate the ``log-sum-exp'' term, and in high dimension, the empirical objective can even be unbounded, like in logistic regression~\cite{CandesSur2020,ChardonLerasleMourtada2026}. Ridge regularization is the standard remedy, going back to the classical ridge regression literature \cite{HoerlKennard1970}; it makes the optimization problem finite and stable, but it also introduces shrinkage bias.  The central question of this paper is to understand how the performance of regularized estimators depends on signal strength, aspect ratios (ratios between dimension and number of observations), and the ridge parameters.

Our main goal is to understand the performance of the recent spectral framework for relative density estimation \cite{Bach2026}, which represents the KL divergence as a mixture of weighted chi-square divergences.  This turns the estimation of a log-density potential into a family of least-squares problems indexed by a mixture parameter \(\rho\in[0,1]\).  The integral with respect to $\rho$ can be carried out in closed form using a generalized eigenvalue decomposition of second-order moments; this spectral construction has thus closed-form computation through empirical first and second moments, with a ridge penalty needed to stabilize estimation. The reformulation as a continuum of least-squares problems immediately brings to bear the large literature on algorithms and analyses for least-squares regression to control estimation variance, but this closed-form estimator comes at a cost of potential additional bias.   

We compare the ridge-regularized variational and spectral estimators in a simple Gaussian location model.  By affine invariance and whitening, it is enough to consider, in dimension \(m\),
\[
q\sim\Normal(0,\idm),\qquad p\sim\Normal(\Delta,\idm),\]
with linear models (with unregularized intercept).\footnote{Note that the affine invariance allows this simple reparameterization for unregularized estimation, while for regularized estimation, this is a modeling choice.}  We consider \(n_p\) i.i.d.~samples from \(p\), and \(n_q\) from \(q\), in the high-dimensional proportional regime, that is, with the following dimension-to-sample ratios, which will be referred to as aspect ratios from now on,
\[
\frac{m}{n_p}\to\alpha_p\in[0,\infty),
\qquad
\frac{m}{n_q}\to\alpha_q\in[0,\infty).\]
The signal strength \( s=\|\Delta\|^2\) is fixed in the main asymptotic statements and in our experiments.  This model is deliberately favorable to the variational estimator because the true log-density ratio is affine.  The spectral estimator is, however, not exactly well-specified, but it can reduce estimation variance by replacing empirical exponential normalization with covariance-based least squares. The goal of this paper is precisely to determine when, in this simple model, this variance reduction translates into smaller population risk.

\paragraph{Contributions.} 
The first contribution of this paper is a set of finite-sample identities for the regularized and unregularized estimators, presented in \mysec{finite}, complemented by a population analysis in \mysec{pop-bench} to understand the shrinkage bias due to regularization.  The second and main contribution is a high-dimensional fixed-signal analysis presented in \mysec{hd}:  The variational limit is a scalar entropy minimization problem obtained from CGMT. The spectral limit is a deterministic equivalent for ridge-regularized weighted sums of two Wishart matrices.  The third contribution is a risk comparison under three different natural criteria for unregularized and regularized estimation, which we perform 
in \mysec{weak} for small $\alpha_p$ and $\alpha_q$ with further asymptotic expansions, and in
\mysec{comparison} by plotting deterministic asymptotic limits in the $(\alpha_p,\alpha_q)$-plane, highlighting that the variational estimator outperforms the spectral estimator when $\alpha_p$ and $\alpha_q$ are small (many observations), but underperforms when these two parameters grow. In \mysec{experiments}, we illustrate the asymptotic limit by comparing empirical estimates with their asymptotic equivalents. Finally, in Appendix~\ref{app:nuclear}, we also study how a nuclear penalty can be used and partially analyzed to perform feature learning (for simplicity, we only consider discrete measures on $\rho$ instead of continuous measures needed for the KL divergence).

\subsection{Related work}
\addcontentsline{toc}{section}{Related work}

\paragraph{Direct density-ratio estimation.} In this paper, we compare only two estimators for log-density ratios, a natural one based on a classical variational representation of the KL divergence~\cite{NguyenWainwrightJordan2010}, and the recently introduced spectral estimators~\cite{Bach2026}. Other estimators could be considered as well, see, e.g., 
\cite{SugiyamaNakanoKawanabe2008,KanamoriHidoSugiyama2009,SugiyamaSuzukiKanamori2012,HuangSmolaGrettonBorgwardtScholkopf2007,MenonOng2016}.  

\paragraph{Asymptotic analysis of maximum-likelihood estimators.} Our analysis of the variational estimator shares many features with the analysis of maximum-likelihood estimators, such as logistic regression, with similar potential unboundedness when not regularized~\cite{SurCandes2019,CandesSur2020,ChardonLerasleMourtada2026}; we use similar tools such as the convex-Gaussian-min-max theorem (CGMT)~\cite{salehi2019impact}.

\paragraph{Random matrix theory.}  Our analysis for the spectral estimator essentially corresponds to doing a
separate analysis for each $\rho \in [0,1]$ and a single regularized least-squares problem, for which there
exists a significant literature, which we reuse here \cite{BaiSilverstein2010,HachemLoubatonNajim2007}.
Closest in goal are analyses of ridge regression and regularized
discriminant analysis, where random-matrix methods yield explicit limiting prediction-risk formulas
\cite{dobriban2018prediction}. Our setting differs in that the spectral estimator requires a continuum of
$\rho$-weighted covariance resolvents from two independent samples, with cross-resolvent quantities entering
the final risk formulas. We use degree-of-freedom quantities in the spirit of
\cite{montanari2022interpolation,Bach2024RandomProjections}.

\section{Problem setup}
\label{sec:finite}

This section describes in detail the Gaussian model, the two estimators, and the criteria that we consider in this paper, without imposing proportional-asymptotic assumptions on \(m,n_p,n_q\). The KL variational identities are the Donsker-Varadhan and Fenchel representations of KL \cite{DonskerVaradhan1975,NguyenWainwrightJordan2010}, and the spectral construction follows the spectral relative-density framework of \cite{Bach2026}.

\subsection{Gaussian model and samples}

We consider the Gaussian model
\[
q\sim \Normal(0,\idm),\qquad p\sim\Normal(\Delta,\idm),
\]
with $\Delta\in\R^m$ and $s = \| \Delta \|^2$.
The true normalized log-density ratio is
\[
 v_\ast(x)= \log \frac{dp}{dq}(x) = \frac{1}{2} \| x\|^2 - \frac{1}{2} \| x - \Delta\|^2 = \Delta^\top x-\frac{s}{2},
 \qquad  \mbox{ with } \qquad D(p\|q)=\frac{s}{2}.
\]
We observe independent samples
\[
x_i\sim p,
\quad i=1,\ldots,n_p,
\qquad
y_j\sim q,
\quad j=1,\ldots,n_q.
\]
We can then write the empirical means and their difference as
\[
\hat\mu_p=\frac1{n_p}\sum_{i=1}^{n_p}x_i,
 \qquad
 \hat\mu_q=\frac1{n_q}\sum_{j=1}^{n_q}y_j,
 \qquad
\hat{\Delta}=\hat\mu_p-\hat\mu_q.
\]
The centered sample covariances, normalized by \(1/n_p\) and \(1/n_q\), are
\begin{equation*}
 \hat C_p=\frac1{n_p}\sum_{i=1}^{n_p}(x_i-\hat\mu_p)(x_i-\hat\mu_p)^\top,
 \qquad
 \hat C_q=\frac1{n_q}\sum_{j=1}^{n_q}(y_j-\hat\mu_q)(y_j-\hat\mu_q)^\top.
\end{equation*}

\subsection{Variational estimator and its ridge-regularized version}

The variational estimator is the KL special case of the convex risk-minimization framework of \cite{NguyenWainwrightJordan2010}, which states, for general probability distributions,\footnote{In this paper, unless otherwise specified when ambiguous, $\E_p[ H(x)]$ is the expectation of $H(x)$ for $x$ distributed as $p$.}
\[
D(p\|q) = \sup_{v: \mathbb{R}^m \to \mathbb{R}} \mathbb E_p[v(x)]-\mathbb E_q[e^{v(y)}-1].
\]
This leads, for a measurable function \(v\), to the population \emph{Fenchel score} for KL:
\[
 \K(v)=\mathbb E_p[v(x)]-\mathbb E_q[e^{v(y)}-1].
\]
  If we consider a potential with an additional intercept, as \( \tilde{v} (x)=v(x)+c\), then \(
 \K(\tilde{v} )= \mathbb E_p[v(x)]+c+1-e^c\mathbb E_q [e^{v(y)}].
\)
For fixed \(v\), the intercept is optimized in closed form as
\(
 c_\ast =-\log\mathbb E_q[ e^{v(y)}].
\)
Substitution gives the \emph{log-normalized} population objective
\[
 \J(v)= \mathbb E_p[v(x)]-\log\big( \mathbb  E_q [e^{v(y)}] \big).
\]
The empirical version replaces \(\mathbb E_p\) and \(\mathbb E_q\) by sample averages, and we now consider a linear model $v(x) = \theta^\top x$.  Thus, over these affine potentials, the empirical intercept for a fixed \(\theta\) is
\[
 \hat c(\theta)=-\log\bigg(\frac1{n_q}\sum_{j=1}^{n_q}\exp(\theta^\top y_j)\bigg),
\]
and the optimized empirical criterion is the log-normalized objective in \(\theta\), on which we add a ridge penalty (only on the linear term, because the intercept has already been optimized by normalization), leading to, for $\tau >0$,
\begin{equation}
 \widehat{\J}(\theta)
 =\theta^\top\hat\mu_p
 -\log\bigg(\frac1{n_q}\sum_{j=1}^{n_q}\exp(\theta^\top y_j)\bigg)
 -\frac{\tau}{2}\|\theta\|^2 .
 \label{eq:Jhat-ridge}
\end{equation}
The ridge estimator is
\(
 \hat\theta\in\arg\max_{\theta\in\R^m}\widehat{\J}(\theta).
\) For every sample and every \(\tau>0\), the objective is strongly concave, hence the maximizer exists and is unique.  

For criterion \(\K\), which is sensitive to additive constants, we use the empirically normalized representative
\begin{equation*}
 \hat v_{{\rm var}}(x)=\hat\theta^\top x+\hat c,
 \qquad \mbox{ with } 
 \hat c =-\log\bigg(\frac1{n_q}\sum_{j=1}^{n_q}\exp(\hat\theta^\top y_j)\bigg).
\end{equation*}

The unregularized estimator is obtained by dropping the last term in \eqref{eq:Jhat-ridge}.  Unlike the ridge estimator, the unregularized objective can be unbounded; Proposition~\ref{prop:var-dual} in \mysec{zeroridge} gives the exact convex-hull condition for finite value and finite attainment.

\paragraph{Estimation algorithm.}
The objective function in \eq{Jhat-ridge} is smooth and strongly concave if $\tau>0$, so damped Newton~\cite{mccullagh2019generalized}, trust-region Newton, or L-BFGS with backtracking line search (as used in the experiments) are globally well-behaved choices \cite{BoydVandenberghe2004,NocedalWright2006}.  

\subsection{Spectral estimator}
\label{sec:est2}

The spectral estimator of \cite{Bach2026} starts from an equivalent two-potential variational formulation~\cite{broniatowski2006minimization}
\BEQ
\label{eq:DKL2}
D(p\|q) = \sup_{v,w: \mathbb{R}^m \to \mathbb{R}} 
\E_p [ v(x)]+\mathbb E_q [w(y)], \mbox{ such that } 
 \forall y \in \rb^m, \ w(y)\leqslant 1-e^{v(y)}.
\EEQ
 The optimum of the associated variational problem is attained by
\(
 v_\ast=\log(dp/dq)\) and \(w_\ast=1-dp/dq.
\)
We will estimate these two potentials through a continuum of least-squares problems.

The key contribution of~\cite{Bach2026} is to represent the KL divergence as an integral of weighted chi-square divergences as
\BEA
 \notag &\!\! \!\! & D(p\|q)  \\
 \notag &\!\!\!\!\! = \!\!\!\!\!& \int_{\rb^m} \int_0^1  \frac{ (\frac{dp}{dq}(x) - 1)^2}{\rho \frac{dp}{dq}(x) + 1-\rho} (1-\rho)d \rho dq(x)
\\
\label{eq:AAAA}& \!\!\!\!\!= \!\!\!\!\! &  \int_0^1 \bigg[ \sup_{u(\rho,\cdot):\rb^m\to \rb}\  \int_{\rb^m} \Big[
   u(\rho,x) \Big( \frac{dp}{dq}(x) - 1 \Big) - \frac{u(\rho,x)^2}{2} \Big( \rho \frac{dp}{dq}(x) + 1-\rho\Big)
   \Big] dq(x) \bigg]  2 (1-\rho) d\rho \\ 
   \notag&\!\!\!\!\! =  \!\!\!\!\!&  \int_0^1 \bigg[  \sup_{u(\rho,\cdot):\rb^m \to \rb}\  \int_{\rb^m} \Big[ u(\rho,x) - \frac{\rho}{2} u(\rho,x)^2 \Big] dp(x)
   + \int_{\rb^m} \Big[ - u(\rho,x) - \frac{1-\rho}{2} u(\rho,x)^2 \Big] dq(x) \bigg]  2 (1-\rho) d\rho,
\EEA
with the integrand having its own variational formulation through a function \(u(\rho,\cdot): \rb^m \to \rb \) for  \(\rho\in[0,1]\). The population optimal variational function for each $\rho \in [0,1]$ is
\[
 u_\ast(\rho,x)=\frac{dp/dq(x)-1}{(1-\rho)+\rho dp/dq(x)}.
\]
Given any fitted functions \(u(\rho,\cdot)\), the corresponding spectral candidate potentials are then
\begin{equation}
 v(x)=\int_0^1 2(1-\rho)\left[u(\rho,x)-\frac{\rho}{2}u(\rho,x)^2\right] d \rho,
 \label{eq:spectral-candidate-v}
\end{equation}
\[
 w(y)=\int_0^1 2(1-\rho)\Big[-u(\rho,y)-\frac{1-\rho}{2}u(\rho,y)^2\Big] d \rho.
\]
At the population optimum these formulas recover \((v_\ast,w_\ast)\).  Since the variational formulation in \eq{AAAA} only involves the function $u$ and its square, the empirical spectral estimator fits \(u(\rho,\cdot)\) by least squares with an affine function with an unregularized intercept.  This yields a continuum of covariance-based linear systems indexed by \(\rho\).

For a fixed \(\rho\), define
\BEQ
\label{eq:murho}
 \hat\mu(\rho)=\rho\hat\mu_p+(1-\rho)\hat\mu_q\in\R^m,
 \qquad
 \hat C(\rho)=\rho\hat C_p+(1-\rho)\hat C_q\in\R^{m\times m},
 \qquad
 \hat M(\rho)=\hat C(\rho)+\rho(1-\rho)\hat{\Delta}\hat{\Delta}^\top\in\R^{m\times m}.
\EEQ
The ridge-regularized spectral coefficient is, for \(\zeta>0\), using \cite[Appendix F]{Bach2026},
\begin{equation}
 \hat\beta(\rho)
 = (\hat M(\rho)+\zeta \idm)^{-1}\hat{\Delta}
 =\frac{(\hat C(\rho)+\zeta \idm)^{-1}\hat{\Delta}}
 {1+\rho(1-\rho)\hat{\Delta}^\top(\hat C(\rho)+\zeta \idm)^{-1}\hat{\Delta}},
 \label{eq:beta-ridge-spectral}
\end{equation}
where the second equality is obtained from the Sherman-Morrison formula.  Then, with 
\[
 \hat u(\rho,x)=\hat\beta(\rho)^\top(x-\hat\mu(\rho)),
\]
the ridge spectral potential is obtained from Eq.~(\ref{eq:spectral-candidate-v}) as
\begin{equation}
 \hat v_{{\rm spec}}(x)
 =\int_0^1 2(1-\rho)
 \left[\hat u(\rho,x)-\frac{\rho}{2}\hat u(\rho,x)^2\right] d \rho.
 \label{eq:vhat-spec-ridge}
\end{equation}
Since \(\hat M(\rho)+\zeta \idm\succcurlyeq \zeta \idm\), the coefficient and the integral are finite for every finite sample and every \(\rho\in[0,1]\).

Expanding \eqref{eq:vhat-spec-ridge} gives a quadratic potential
\begin{equation}
 \hat v_{{\rm spec}}(x)=x^\top\hat A x+\hat\ell^\top x+\hat c,
 \label{eq:vhat-spec-ridge-quad}
\end{equation}
where
\begin{align}
 \hat A&=-\int_0^1 \rho(1-\rho)\hat\beta(\rho)\hat\beta(\rho)^\top d \rho,
 \label{eq:Ahat-ridge}\\
 \hat\ell&=\int_0^1 2(1-\rho)\big(1+\rho \hat\beta(\rho)^\top\hat\mu(\rho)\big)\hat\beta(\rho) d \rho,
 \\
 \hat c&=\int_0^1 2(1-\rho)
 \left(-\hat \beta(\rho)^\top\hat\mu(\rho)-\frac{\rho}{2}(\hat\beta(\rho)^\top\hat\mu(\rho))^2\right) d \rho.
 \label{eq:chat-ridge}
\end{align}
These coefficients are the quantities inserted into the quadratic scoring identities of Proposition~\ref{prop:quadratic-scores}.

\paragraph{Unregularized limit.}
The unregularized full-potential estimator is the zero-ridge limit obtained by setting \(\zeta=0\), leading to 
\begin{equation}
 \hat\beta_\rho=\hat M(\rho)^{-1}\hat{\Delta},
 \qquad
 \hat v_{\rm spec}(x)=\int_0^1 2(1-\rho)
 \left(\hat u(\rho,x)-\frac{\rho}{2}\hat u(\rho,x)^2\right) d \rho.
 \label{eq:vhat-cont}
\end{equation}
This limit is not automatically finite.  The behavior of \(\hat M(\rho)^{-1}\hat{\Delta}\) at \(\rho=0\) and \(\rho=1\) controls whether the continuum integral defines a finite unregularized function: if \(\hat C_q\) is singular, an inverse component can diverge like \(\rho^{-1}\) near \(0\), and if \(\hat C_p\) is singular the analogous singularity occurs near \(1\).  Proposition~\ref{prop:chi-feasible} gives the corresponding endpoint rank conditions.

\paragraph{Exact integrals through generalized eigenvalue implementation.}
For a fixed \(\zeta>0\), all $\rho$-dependent linear systems can be reduced to a single generalized eigendecomposition of the pair \((\hat C_p+\zeta \idm,\hat C_q+\zeta \idm)\); see details in \cite{Bach2026}. Note, however, that all analyses will always be carried through the integral representation.

\paragraph{Quadrature.} In computations of asymptotic limits in \mysec{hd}, there is no simple closed form, and the continuum integral in \eqref{eq:vhat-spec-ridge} can be estimated by a \emph{deterministic} quadrature rule \(\sum_{a=1}^{\Nquad} w_a F(\rho_a) \approx \int_0^1 F(\rho)d\rho\), with $\Nquad$ nodes \(\rho_a\in(0,1)\), and $F$ a function to integrate (we use Gauss-Legendre quadrature~\cite{Gautschi2004} in experiments).  This is not needed to define the estimator, but it is needed for computing the asymptotic equivalents when no simple closed form is available.

\paragraph{Companion potential $w$.}
The spectral construction also produces the second potential for the two-potential Fenchel form introduced in \eq{DKL2} that will be used for the $\mathcal{L}$ criterion below:
\[
 \hat w_{{\rm spec}}(y)
 =\int_0^1 2(1-\rho)
 \Big[-\hat u(\rho,y)-\frac{1-\rho}{2}\hat u(\rho,y)^2\Big] d \rho.
\]
As shown by \cite{Bach2026}, this construction ensures the two learned potentials satisfy, for all $y \in \rb^m$, $\hat w_{{\rm spec}}(y) \leqslant 1 - e^{\hat v_{{\rm spec}}(y)}$, which is needed for the variational formulation in \eq{DKL2}.

\subsection{Evaluation criteria}

We use three population KL variational criteria, with corresponding nonnegative gaps. 

\paragraph{Criterion \(\mathcal J\): population log-normalized variational score.} We define
\[
\J(v)=\mathbb{E}_p[v(x)]-\log\big(\mathbb{E}_q[e^{v(y)}]\big),
 \qquad
 \mathcal R_{\J}(v)=D(p\|q)-\J(v).
\]
This is the population Donsker-Varadhan variational form of KL \cite{DonskerVaradhan1975,NguyenWainwrightJordan2010}. It is invariant under adding constants to \(v\). For unrestricted measurable \(v\), the maximizer is \(v_\ast\) modulo constants and the maximum of \(\J(v)\) is \(D(p\|q)\).

\paragraph{Criterion \(\mathcal K\): population exponential/Fenchel KL score.} We define
\[
\K(v)=\mathbb{E}_p[v(x)]-\mathbb{E}_q[e^{v(y)}-1],
 \qquad
 \mathcal R_{\K}(v)=D(p\|q)-\K(v),
\]
corresponding to  the variational divergence-estimation framework of~\cite{NguyenWainwrightJordan2010}. Unlike criterion \(\mathcal J\), it is not invariant to additive constants, and therefore checks whether the normalizing constant is well learned.

Indeed, define the population exponential normalizer
\(
\Zq(v)=\mathbb{E}_q [e^{v(y)}].
\)
Then
\begin{equation*}
\K(v)=\J(v)+\log \Zq(v)-\Zq(v)+1,
\quad \mbox{ and } \quad
 \mathcal R_{\K}(v)=\mathcal R_{\J}(v)+\Zq(v)-\log\Zq(v)-1.
\end{equation*}
Since \(z-\log z-1\geqslant0\) for \(z>0\), criterion \(\mathcal K\) is at least as stringent as criterion \(\mathcal J\), that is,
\(
\mathcal R_{\K}(v)\geqslant\mathcal R_{\J}(v),
\)
with equality if and only if \(\Zq(v)=1\) (i.e., a normalized potential).

\paragraph{Criterion $\mathcal L$: two-potential Fenchel lower-bound score.}
The spectral framework also returns a pair of potentials $(v,w)$ such that for all $y \in \rb^m$, \(
 w(y)\leqslant  1-e^{v(y)}.
\)
The lower-bound score from \eq{DKL2} defines the criterion
\[
 \mathcal L(v,w)=\mathbb E_p[v(x)]+\mathbb E_q[w(y)],
 \qquad
 \mathcal R_{\mathcal L}(v,w)=D(p\|q)-\mathcal L(v,w).
\]
Because feasibility implies $\mathcal L(v,w)\leqslant\mathcal K(v)$, the corresponding excess risk satisfies
\(
 \mathcal R_{\mathcal L}(v,w)\geqslant \mathcal R_{\mathcal K}(v).
\)
Thus $\mathcal L$ is a stricter diagnostic for a two-potential spectral fit: it penalizes not only the quality of $v$ but also whether the companion $w$ realizes a tight Fenchel lower bound.

\subsection{Population scoring identities}
\label{sec:scoring}

Given affine and quadratic scores, we can compute all criteria. We only consider the criteria $\mathcal{J}$, $\mathcal{K}$, $\mathcal{L}$, with the nonnegative gaps being obtained by taking the complement to $D(p\|q)$.
 
\begin{proposition}[Affine scoring identities]
\label{prop:affine-scores}
Let \(v(x)=\theta^\top x+c\).
Then
\begin{align}
 \J(v)
 &=\theta^\top\Delta-\frac12\|\theta\|^2,
 \label{eq:affine-J}\\
% \mathcal R_{\J}(v)
% &=\frac12\|\theta-\Delta\|^2,
% \label{eq:affine-RJ}\\
 \K(v)
 &=\theta^\top\Delta+c+1-e^{c+\|\theta\|^2/2}.
 \label{eq:affine-K}
% \mathcal R_{\K}(v)
% &=\frac12\|\theta-\Delta\|^2+e^{c+\|\theta\|^2/2}-c-\|\theta\|^2/2-1
 %\label{eq:affine-RK}
\end{align}
\end{proposition}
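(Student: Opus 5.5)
The plan is to reduce both identities to the Gaussian moment generating function. The one fact needed is that for $z\sim\Normal(\mu,\idm)$ and any $\theta\in\R^m$ we have $\theta^\top z\sim\Normal(\theta^\top\mu,\|\theta\|^2)$. Consequently
\[
\E[e^{\theta^\top z}]=\exp\big(\theta^\top\mu+\tfrac12\|\theta\|^2\big),
\]
which is finite for every $\theta$, so no integrability issue arises. I would state this once and then apply it with $\mu=\Delta$ for $p$ and $\mu=0$ for $q$.

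For \eqref{eq:affine-J}, I first compute the linear term. Since $\E_p[x]=\Delta$, we get $\E_p[v(x)]=\theta^\top\Delta+c$. For the normalizer, the moment generating function with $\mu=0$ gives $\E_q[e^{v(y)}]=e^{c}\,\E_q[e^{\theta^\top y}]=e^{c+\|\theta\|^2/2}$. Taking the logarithm yields $c+\tfrac12\|\theta\|^2$. Subtracting it from the linear term cancels $c$, as the invariance of $\J$ under additive constants requires, and leaves $\theta^\top\Delta-\tfrac12\|\theta\|^2$.

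For \eqref{eq:affine-K}, the same two quantities are inserted into $\K(v)=\E_p[v(x)]-\E_q[e^{v(y)}]+1$. This gives $\theta^\top\Delta+c+1-e^{c+\|\theta\|^2/2}$ directly. As a cross-check, I would verify that this agrees with the relation $\K(v)=\J(v)+\log\Zq(v)-\Zq(v)+1$ stated earlier, using $\Zq(v)=e^{c+\|\theta\|^2/2}$.

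There is no real obstacle here. The only step requiring care is the Gaussian moment generating function, which is a one-line completion of the square in the exponent.
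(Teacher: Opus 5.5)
Your proposal is correct and follows essentially the same route as the paper: the paper also computes $\E_p[\theta^\top x+c]=\theta^\top\Delta+c$ and uses $\theta^\top y\sim\Normal(0,\|\theta\|^2)$ under $q$ to get $\E_q[e^{\theta^\top y+c}]=e^{c+\|\theta\|^2/2}$, then substitutes both into $\J$ and $\K$. Your cross-check against $\K(v)=\J(v)+\log\Zq(v)-\Zq(v)+1$ is an optional extra consistency check that the paper does not include.
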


\begin{proof}
Under \(q\), \(\theta^\top y\sim\Normal(0,\|\theta\|^2)\). Hence
\(
 \mathbb{E}_q [e^{\theta^\top y+c}]=e^{c+\|\theta\|^2/2}.
\)
Under \(p\), \(\mathbb{E}_p[\theta^\top x+c]=\theta^\top\Delta+c\). Therefore
\(
 \J(v)=\theta^\top\Delta+c-(c+\|\theta\|^2/2)=\theta^\top\Delta-\|\theta\|^2/2,
\)
which proves \eqref{eq:affine-J}. The Fenchel score is
\(
 \K(v)=\theta^\top\Delta+c-(e^{c+\|\theta\|^2/2}-1),
\)
which gives \eqref{eq:affine-K}.
\end{proof}

\begin{proposition}[Quadratic scoring identities]
\label{prop:quadratic-scores}
Let
\(
v(x)=x^\top A x+\xi^\top x+c,
\)  with $A=A^\top$
and  \(\idm-2A\succ0\). Define
\begin{equation}
 \Lambda(A,\xi,c)=c-\frac12\log\det(\idm-2A)+\frac12\xi^\top(\idm-2A)^{-1}\xi,
 \label{eq:L-logZ}
\end{equation}
so that \(\Zq(v)=e^{\Lambda(A,\xi,c)}\). Then
\begin{align}
 \J(v)
 =&\ \tr(A)+\Delta^\top A\Delta+\xi^\top\Delta
 +\frac12\log\det(\idm-2A)
 -\frac12\xi^\top(\idm-2A)^{-1}\xi,
 \label{eq:J-quad}\\
 \K(v)
 =&\ \tr(A)+\Delta^\top A\Delta+\xi^\top\Delta+c+1-e^{\Lambda(A,\xi,c)}.
 \label{eq:K-quad}
\end{align}
\end{proposition}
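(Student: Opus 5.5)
The plan mirrors the proof of Proposition~\ref{prop:affine-scores}: compute the Gaussian moment generating quantity $\Zq(v)=\E_q[e^{v(y)}]$ and the mean $\E_p[v(x)]$ in closed form, then substitute them into the definitions of $\J$ and $\K$.

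First, the expectation under $p$. Write $x=\Delta+z$ with $z\sim\Normal(0,\idm)$. Then
\[
\E_p[x^\top A x]=\E[z^\top A z]+\Delta^\top A\Delta=\tr(A)+\Delta^\top A\Delta,
\]
since the cross term $2\Delta^\top A z$ has zero mean. Adding the linear and constant parts gives
\[
\E_p[v(x)]=\tr(A)+\Delta^\top A\Delta+\xi^\top\Delta+c.
\]

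Second, the normalizer under $q$, which is the main step. We have
\[
\E_q[e^{v(y)}]=(2\pi)^{-m/2}\int\exp\Big(-\tfrac12 y^\top(\idm-2A)y+\xi^\top y+c\Big)\,dy .
\]
Since $B=\idm-2A\succ0$, complete the square:
\[
-\tfrac12 y^\top B y+\xi^\top y=-\tfrac12(y-B^{-1}\xi)^\top B(y-B^{-1}\xi)+\tfrac12\xi^\top B^{-1}\xi .
\]
The standard Gaussian integral $\int e^{-\frac12 u^\top B u}\,du=(2\pi)^{m/2}\det(B)^{-1/2}$ then gives
\[
\Zq(v)=\exp\Big(c-\tfrac12\log\det B+\tfrac12\xi^\top B^{-1}\xi\Big)=e^{\Lambda(A,\xi,c)},
\]
which is \eqref{eq:L-logZ}. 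The only delicate point is integrability. The condition $\idm-2A\succ0$ is exactly what makes the integrand decay, and symmetry of $A$ justifies the quadratic-form manipulations. Without this condition, $\Zq(v)=+\infty$ and both criteria equal $-\infty$.

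Finally, substitute. For $\J(v)=\E_p[v]-\Lambda$, the constant $c$ cancels and we obtain \eqref{eq:J-quad}. For $\K(v)=\E_p[v]-\Zq(v)+1$, we obtain \eqref{eq:K-quad} directly.
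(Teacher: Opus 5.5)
Your proposal is correct and follows essentially the same route as the paper: you compute $\E_p[v]$ by writing $x=\Delta+z$, obtain $\Zq(v)$ by completing the square in the Gaussian integral (valid since $\idm-2A\succ0$), and substitute both into the definitions of $\J$ and $\K$. The only difference is that you write out the Gaussian integral explicitly, which the paper leaves implicit.
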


\begin{proof}
For \(x\sim\Normal(\Delta,\idm)\), write \(x=\Delta+G\), \(G\sim\Normal(0,\idm)\). Then
\(
 \mathbb{E}_p[x^\top A x]=\tr(A)+\Delta^\top A\Delta\), and 
\( \mathbb{E}_p[\xi^\top x]=\xi^\top\Delta.
\)
Thus,
\(
 \mathbb{E}_p[v(x)]=\tr(A)+\Delta^\top A\Delta+\xi^\top\Delta+c.
\)
For \(y\sim\Normal(0,\idm)\), the Gaussian integral is finite exactly when \(\idm-2A\succ0\), and completing the square gives
\(
 \mathbb{E}_q [e^{y^\top A y+\xi^\top y+c}]
 =e^c\det(\idm-2A)^{-1/2}
 \exp\big(\frac12\xi^\top(\idm-2A)^{-1}\xi\big).
\)
Taking logs gives \eqref{eq:L-logZ}. Substituting \(\mathbb{E}_p [v]\) and \(\log\Zq(v)\) into \(\J\) proves \eqref{eq:J-quad}; substituting \(\Zq(v)=e^{\Lambda(A,\xi,c)}\) into \(\K\) proves \eqref{eq:K-quad}.
\end{proof}

\begin{remark}[Infinite Gaussian exponential moment]
If \(\idm-2A\) is not positive definite, then \(\mathbb{E}_q [e^{v(y)}]=+\infty\). In that case \(\J(v)=-\infty\), \(\K(v)=-\infty\), and the corresponding population gaps are \(+\infty\). The spectral estimator in \eqref{eq:vhat-spec-ridge-quad} has \(\hat A\preccurlyeq0\) whenever the integral is finite, because \(\hat A\) is an integral of negative semidefinite rank-one matrices. Hence \(\idm-2\hat A\succ0\) automatically for the fitted spectral potential.
\end{remark}

\begin{proposition}[Quadratic two-potential scoring identity]
\label{prop:quadratic-L-scores}
Let
\(
 v(x)=x^\top A_vx+\ell_v^\top x+c_v
 \)  and 
\( w(y)=y^\top A_wy+\ell_w^\top y+c_w,
\)
with $A_v=A_v^\top$ and $A_w=A_w^\top$. Then
\begin{equation}
 \mathcal L(v,w)
 =\tr(A_v)+\Delta^\top A_v\Delta+\ell_v^\top\Delta+c_v+\tr(A_w)+c_w.
 \label{eq:L-two-quad}
\end{equation}
\end{proposition}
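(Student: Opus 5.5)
The identity follows directly from the definition \(\mathcal L(v,w)=\mathbb E_p[v(x)]+\mathbb E_q[w(y)]\) and linearity of expectation. No exponential moment appears, so unlike Proposition~\ref{prop:quadratic-scores} there is no condition such as \(\idm-2A\succ0\). Only Gaussian first and second moments are needed, and these are finite for any symmetric \(A_v,A_w\). I would compute the two expectations separately and then add them.

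For the \(p\)-term, I would use the same decomposition as in the proof of Proposition~\ref{prop:quadratic-scores}. Write \(x=\Delta+G\) with \(G\sim\Normal(0,\idm)\), and expand
\[
x^\top A_v x=\Delta^\top A_v\Delta+2\Delta^\top A_v G+G^\top A_v G .
\]
The cross term has mean zero. The last term has mean \(\tr(A_v\,\mathbb E[GG^\top])=\tr(A_v)\). Adding \(\mathbb E_p[\ell_v^\top x]=\ell_v^\top\Delta\) and the constant \(c_v\) gives
\[
\mathbb E_p[v(x)]=\tr(A_v)+\Delta^\top A_v\Delta+\ell_v^\top\Delta+c_v .
\]

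For the \(q\)-term, \(y\sim\Normal(0,\idm)\), so \(\mathbb E_q[y^\top A_w y]=\tr(A_w)\) and \(\mathbb E_q[\ell_w^\top y]=0\). Hence \(\mathbb E_q[w(y)]=\tr(A_w)+c_w\). Summing the two expectations gives \eqref{eq:L-two-quad}.

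I expect no real obstacle. The only points to state are that symmetry of \(A_v\) lets the cross term be written as \(2\Delta^\top A_v G\), and that the linear coefficient \(\ell_w\) drops out because \(q\) is centered. I would also add a remark that the identity holds for arbitrary symmetric \(A_v,A_w\). Feasibility \(w\leqslant 1-e^{v}\) is not used in computing \(\mathcal L\); it matters only for interpreting \(\mathcal R_{\mathcal L}\) as a gap.
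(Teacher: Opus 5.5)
Your proposal is correct and matches the paper's proof essentially line for line. Both compute $\mathbb E_p[v(x)]=\tr(A_v)+\Delta^\top A_v\Delta+\ell_v^\top\Delta+c_v$ via $x=\Delta+G$, and $\mathbb E_q[w(y)]=\tr(A_w)+c_w$ from the centered standard Gaussian moments, then add the two; as a minor aside, symmetry of $A_v$ is not actually needed, since the cross terms $\Delta^\top A_vG$ and $G^\top A_v\Delta$ each have mean zero regardless.
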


\begin{proof}
For $x\sim\Normal(\Delta,\idm)$,
$\mathbb E_p[x^\top A_v x]=\tr(A_v)+\Delta^\top A_v\Delta$ and
$\mathbb E_p[\ell_v^\top x]=\ell_v^\top\Delta$.  For $y\sim\Normal(0,\idm)$,
$\mathbb E_q[y^\top A_wy]=\tr(A_w)$ and $\mathbb E_q[\ell_w^\top y]=0$.
Substitution in the definition of $\mathcal L$ gives \eqref{eq:L-two-quad}.  
\end{proof}

\subsection{Finite-sample unregularized feasibility}
\label{sec:zeroridge}
We consider here the limit when regularization parameters go to zero (see proofs in Appendix~\ref{app:proof-var-dual} and Appendix~\ref{app:proof-chi-feasible}).  
The following proposition introduces the finite-sample feasibility condition for the unregularized
log-normalized KL fit in the present linear-feature model. It is the analogue, in this Gaussian linear
setting, of the convex-hull existence criterion for related direct density-ratio estimators
\cite{banzato2025existence}.

\begin{proposition}[Variational dual, finite value, and attainment]
\label{prop:var-dual}
For fixed \(y_1,\ldots,y_{n_q} \in \rb^m \), let
\[
 \mathcal{C}=\conv\{y_1,\ldots,y_{n_q}\},
\]
the convex hull of the $n_q$ points $y_1,\ldots,y_{n_q}$ in $\mathbb{R}^m$.
The supremum is finite if and only if \(\hat\mu_p\in \mathcal{C}\). A finite maximizer \(\hat\theta_{\rm var}\) exists if and only if \(\hat\mu_p\in\operatorname{ri} \mathcal{C}\), where the relative interior is taken in \(\operatorname{aff} \mathcal{C}\). When \(\operatorname{aff} \mathcal{C}\ne\R^m\), the maximizer is identifiable only modulo \((\operatorname{span}(\mathcal{C}-\mathcal{C}))^\perp\). If \(\hat\mu_p\in \mathcal{C}\setminus\operatorname{ri} \mathcal{C}\), the supremum is finite but is not attained in the identifiable quotient; maximizing sequences escape to infinity in directions exposing the smallest face of \(\mathcal{C}\) that contains \(\hat\mu_p\).
\end{proposition}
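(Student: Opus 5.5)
The plan is to use convex duality for the log-sum-exp function. Write the unregularized objective as
\[
\widehat{\J}_0(\theta)=\theta^\top\hat\mu_p-\log\Big(\frac1{n_q}\sum_{j}e^{\theta^\top y_j}\Big).
\]
Let \(\mathrm{LSE}(\theta)=\log\frac1{n_q}\sum_j e^{\theta^\top y_j}\). Its Fenchel conjugate is
\[
\mathrm{LSE}^*(z)=\min\Big\{\sum_j\pi_j\log(n_q\pi_j):\ \pi\in\Delta_{n_q},\ \sum_j\pi_jy_j=z\Big\},
\]
with the convention \(+\infty\) if \(z\notin\mathcal C\). This is the standard conjugate of a log-partition function of the linear map \(\theta\mapsto(\theta^\top y_j)_j\) composed with the log-sum-exp; I will derive it by the usual infimal-image / composition rule.

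Then \(\sup_\theta\widehat{\J}_0(\theta)=\mathrm{LSE}^*(\hat\mu_p)\), by definition of the conjugate. This gives the dual problem, a minimum relative entropy over weights \(\pi\) reproducing \(\hat\mu_p\). Since this entropy is bounded on the simplex (between \(0\) and \(\log n_q\)), the supremum is finite if and only if the constraint set is nonempty, i.e.\ \(\hat\mu_p\in\mathcal C\).

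For attainment, I reduce to the identifiable quotient. Write \(y_j=y_1+d_j\) with \(d_j\in L=\operatorname{span}(\mathcal C-\mathcal C)\). If \(\hat\mu_p\in\mathcal C\), then \(\hat\mu_p-y_1\in L\) as well, so for \(\theta'\in L^\perp\) we get \(\widehat{\J}_0(\theta+\theta')=\widehat{\J}_0(\theta)\); this proves the non-identifiability statement. Restricted to \(L\), the objective is strictly concave: the Hessian of LSE is the covariance of \(y\) under the Gibbs weights \(\pi_j\propto e^{\theta^\top y_j}\), which are all positive, so it is positive definite on \(L\). Existence of a maximizer then amounts to coercivity of \(-\widehat{\J}_0\) on \(L\). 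The recession function of \(-\widehat{\J}_0\) in a direction \(u\in L\) is
\[
\max_j u^\top y_j-u^\top\hat\mu_p=\max_{z\in\mathcal C}u^\top(z-\hat\mu_p),
\]
which is the support function of \(\mathcal C-\hat\mu_p\). This quantity is strictly positive for all nonzero \(u\in L\) if and only if \(\hat\mu_p\in\operatorname{ri}\mathcal C\), a standard fact about relative interiors. Coercivity, and hence attainment, follows in that case.

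The main obstacle is the boundary case \(\hat\mu_p\in\mathcal C\setminus\operatorname{ri}\mathcal C\). Let \(F\) be the smallest face containing \(\hat\mu_p\), so \(\hat\mu_p\in\operatorname{ri}F\), and let \(u\in L\) be an exposing direction: \(u^\top y_j=u^\top\hat\mu_p\) for \(y_j\in F\), and \(u^\top y_j<u^\top\hat\mu_p\) otherwise. Such a \(u\) exists because \(F\) is a proper face of the polytope \(\mathcal C\), and faces of polytopes are exposed.

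To show the supremum is not attained, I use the dual. Any optimal \(\pi\) must put zero mass off \(F\), since \(u^\top\sum_j\pi_jy_j=u^\top\hat\mu_p\) forces it. A primal maximizer would instead give Gibbs weights that are strictly positive, and by the first-order condition \(\sum_j\pi_j(\theta)y_j=\hat\mu_p\) those weights would be dual feasible. Strong duality then makes them dual optimal, which contradicts the zero-mass requirement.

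For the escape, take the restricted maximizer (or a near-maximizer) \(\theta_F\) on the face, which exists by the relative-interior argument applied within \(\operatorname{aff}F\). Along \(\theta_F+tu\), the points off \(F\) have exponent \(t\,u^\top(y_j-\hat\mu_p)\to-\infty\), so their contribution to the normalizer vanishes. The value therefore increases to the face value, which equals the global supremum by the dual characterization. This exhibits maximizing sequences escaping along directions exposing \(F\).
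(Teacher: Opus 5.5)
Your proof is correct, but it takes a partly different route from the paper's. Writing $\Phi$ for the unregularized objective, the paper's argument is entirely primal and elementary:
\begin{itemize}
\item For $\hat\mu_p\notin\mathcal C$, it uses strict separation and lets $\theta=ta$ with $t\to\infty$.
\item For $\hat\mu_p\in\mathcal C$, it uses $\theta^\top\hat\mu_p\leqslant\max_j\theta^\top y_j$ to get $\Phi\leqslant\log n_q$.
\item In the relative-interior case, it proves the explicit decay $\Phi(\theta)\leqslant\log n_q-\delta\|\theta\|$ on $\operatorname{span}(\mathcal C-\mathcal C)$. This is a quantitative version of your recession-function argument.
\item In the boundary case, it compares $\Phi$ with the face-restricted objective $\Phi_F$, whose normalizer sums only over $y_j\in F$. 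The off-face terms are strictly positive, so $\Phi<\Phi_F$ pointwise, and $\Phi(\theta_F+ta)\to\max\Phi_F$. This single comparison gives both the value of the supremum and non-attainment.
\end{itemize}

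You instead go through the conjugate of log-sum-exp. Finiteness and the value then come out at once as a minimum relative entropy over weights representing $\hat\mu_p$. This is the same entropy dual used in Step~1 of the proof of Theorem~\ref{thm:var-ridge}. You get non-attainment from the first-order condition: a maximizer would give strictly positive Gibbs weights reproducing $\hat\mu_p$. That is impossible, because every representing weight vector must vanish off the proper exposed face $F$.

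Your route buys a conceptual explanation: the image of the gradient map $\theta\mapsto\nabla\log\sum_j e^{\theta^\top y_j}$ is exactly $\operatorname{ri}\mathcal C$. Strict concavity also gives you uniqueness in the quotient. The cost is invoking the conjugate-of-composition rule, or a minimax swap over the compact simplex. The paper's argument is self-contained and quantitative, and its pointwise inequality $\Phi\leqslant\Phi_F$ is a simpler way to identify the face value with the global supremum than re-invoking the dual.

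Two small remarks:
\begin{itemize}
\item Strong duality is not needed in your non-attainment step. Dual feasibility of the strictly positive Gibbs weights already contradicts zero mass off $F$, since some $y_j\notin F$ because $F\ne\mathcal C$.
\item To match the claim that maximizing sequences escape to infinity, add, as the paper does, that every maximizing sequence is unbounded in the quotient. Otherwise compactness and continuity would produce a finite maximizer.
\end{itemize}
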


\begin{proposition}[Spectral endpoint feasibility for the full potential]
\label{prop:chi-feasible}
Under the Gaussian sampling model, the spectral potential in \eqref{eq:vhat-cont} is finite almost surely if and only if $n_p, n_q \geqslant m+1$.
\end{proposition}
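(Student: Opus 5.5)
The strategy is to reduce finiteness of the continuum integral \eqref{eq:vhat-cont} to the behavior of \(\hat\beta_\rho=\hat M(\rho)^{-1}\hat\Delta\) near the two endpoints \(\rho=0\) and \(\rho=1\). On any compact subinterval \([\epsilon,1-\epsilon]\), \(\hat M(\rho)\) is continuous in \(\rho\). The integrand of \(\hat v_{\rm spec}(x)\) is a polynomial in \(\hat u(\rho,x)=\hat\beta_\rho^\top(x-\hat\mu(\rho))\), weighted by \(2(1-\rho)\) and \(\rho(1-\rho)\). Finiteness therefore rests on two facts:
\begin{itemize}
\item almost surely \(\hat M(\rho)\) is invertible on \((0,1)\);
\item the coefficients \(\hat A,\hat\ell,\hat c\) in \eqref{eq:Ahat-ridge}--\eqref{eq:chat-ridge}, with \(\zeta=0\), are finite integrals.
\end{itemize}
Since \(\hat A=-\int_0^1\rho(1-\rho)\hat\beta_\rho\hat\beta_\rho^\top d\rho\), the key quantity is \(\int_0^1\rho(1-\rho)\|\hat\beta_\rho\|^2d\rho\), together with the linear-in-\(\hat\beta\) terms weighted by \(2(1-\rho)\).

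\emph{Sufficiency.} Assume \(n_p,n_q\ge m+1\). For i.i.d.\ Gaussian samples, \(n_q\hat C_q\) is Wishart with \(n_q-1\ge m\) degrees of freedom, so \(\hat C_q\succ0\) almost surely, and likewise \(\hat C_p\succ0\). Then \(\hat M(\rho)\succeq\rho\hat C_p+(1-\rho)\hat C_q\succeq\min(\lambda_{\min}(\hat C_p),\lambda_{\min}(\hat C_q))\,\idm\) uniformly on \([0,1]\). Hence \(\hat\beta_\rho\) is bounded and continuous on the closed interval, all integrals are finite, and \(\hat v_{\rm spec}\) is a finite quadratic.

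\emph{Necessity.} Suppose, say, \(n_q\le m\); the case \(n_p\le m\) is symmetric under \(\rho\leftrightarrow1-\rho\). Then \(\hat C_q\) has a kernel \(K_q\) of dimension \(m-n_q+1\ge1\). I will show \(\hat\beta_\rho\) blows up like \(\rho^{-1}\) along \(K_q\) as \(\rho\to0\). Let \(P\) denote the projection onto \(K_q\) and write \(\hat M(\rho)=\hat C_q+\rho E(\rho)\), where
\[E(\rho)=\hat C_p-\hat C_q+(1-\rho)\hat\Delta\hat\Delta^\top.\]
By a Schur-complement (block inverse) expansion in the decomposition \(K_q\oplus K_q^\perp\),
\[\hat M(\rho)^{-1}=\rho^{-1}P\big(PE(0)P\big)^{-1}P+O(1).\]
This expansion requires \(PE(0)P=P(\hat C_p+\hat\Delta\hat\Delta^\top)P\) to be invertible on \(K_q\), which holds almost surely because \(\hat C_p\) and \(\hat\Delta\) are independent of \(K_q\) with continuous distributions. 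This is the step where I must handle the case \(n_p\) also small: \(P(\hat C_p+\hat\Delta\hat\Delta^\top)P\) has rank \(\min(\dim K_q,n_p)\). If it is deficient, I would instead use the next order, which only makes the blow-up worse, or the generic position of \(\hat\Delta\).

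From the expansion, \(\hat\beta_\rho\approx\rho^{-1}P(PE(0)P)^{-1}P\hat\Delta\), and this vector is nonzero almost surely since \(P\hat\Delta\neq0\) a.s. Consequently
\[\int_0\rho(1-\rho)\|\hat\beta_\rho\|^2d\rho\sim\int_0\rho^{-1}d\rho=\infty,\]
so \(\hat A\) diverges. Similarly, the linear term \(\int_0 2(1-\rho)\hat\beta_\rho\,d\rho\) diverges.

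To conclude that \(\hat v_{\rm spec}(x)\) itself is infinite, rather than the divergences cancelling, I will examine the integrand directly. It is \(2(1-\rho)[\hat u-\tfrac\rho2\hat u^2]\), where \(\hat u\approx\rho^{-1}b^\top(x-\hat\mu_q)\) with \(b\) the leading vector above. The two terms give \(\rho^{-1}b^\top(x-\hat\mu_q)-\tfrac12\rho^{-1}(b^\top(x-\hat\mu_q))^2\), both of order \(\rho^{-1}\). Their sum vanishes only on the set where \(b^\top(x-\hat\mu_q)\in\{0,2\}\), a union of two hyperplanes. Hence the potential is not a finite function for Lebesgue-almost every \(x\), and the resulting quadratic form is \(-\infty\) in the direction \(b\).

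The main obstacle is making the singular-perturbation expansion rigorous almost surely, in particular when both sample sizes are deficient, and ruling out the exceptional cancellations; the generic-position arguments handle these.
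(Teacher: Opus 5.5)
Your sufficiency argument is the paper's. For necessity you take a different route. You derive the leading-order behavior $\hat M(\rho)^{-1}=\rho^{-1}P(PE(0)P)^{-1}P+O(1)$ by a Schur complement on $K_q\oplus K_q^\perp$, whereas the paper uses only a one-sided bound. It projects $\hat M(\rho)\hat\beta_\rho=\hat\Delta$ onto $\ker\hat C_q$ and uses $\hat P_q\hat C_q=0$ to get $\hat P_q\hat\Delta=\rho\hat P_q\hat C_p\hat\beta_\rho+\rho(1-\rho)\hat P_q\hat\Delta\hat\Delta^\top\hat\beta_\rho$. This gives $\|\hat\beta_\rho\|\geqslant c_q/\rho$ with $c_q=\|\hat P_q\hat\Delta\|/(\|\hat C_p\|+\|\hat\Delta\|^2)$, so $\int_0^{\varepsilon}\rho(1-\rho)\|\hat\beta_\rho\|^2\,d\rho=\infty$. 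That argument needs only $\hat P_q\hat\Delta\neq0$, which holds almost surely by independence of $\hat\Delta$ and $\hat C_q$. It does not care how small $n_p$ is, so it needs neither invertibility of $PE(0)P$ nor a case split. In exchange, your expansion gives the sharp asymptotic $\hat\beta_\rho=\rho^{-1}b+O(1)$. That lets you show $\hat v_{\rm spec}(x)$ itself diverges for every $x$ off the two hyperplanes $b^\top(x-\hat\mu_q)\in\{0,2\}$. The paper stops at non-finiteness of the coefficient $\hat A$, so your conclusion is the stronger, pointwise one.

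The step that does not work as written is the case where $PE(0)P$ is singular on $K_q$. ``Using the next order'' or generic position cannot help there, because $\hat M(\rho)$ then has no inverse at all. Suppose $v\in K_q\setminus\{0\}$ satisfies $PE(0)Pv=0$. Then $v^\top(\hat C_p+\hat\Delta\hat\Delta^\top)v=v^\top E(0)v=0$, and positive semidefiniteness forces $\hat C_pv=0$ and $\hat\Delta^\top v=0$. Together with $\hat C_qv=0$, this gives $\hat M(\rho)v=0$ for every $\rho\in[0,1]$. So the ordinary-inverse construction is undefined and the potential is trivially not finite. Given $n_q\leqslant m$, this happens almost surely exactly when $m\geqslant n_p+n_q$, where $\operatorname{rank}\hat M(\rho)\leqslant n_p+n_q-1<m$. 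Otherwise your claim that $PE(0)P$ is almost surely invertible on $K_q$ is correct. With this deterministic observation replacing the hand-waving, your proof is complete.
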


A different spectral estimator could be defined by replacing inverses with Moore-Penrose pseudoinverses, or equivalently by taking a ridge limit \((\hat M(\rho)+\zeta I)^{-1}\hat{\Delta}\) as \(\zeta\downarrow0\) whenever the limit exists pointwise. This is analogous to the implicit bias of unregularized gradient methods, which select particular limiting solutions in underdetermined or separable problems \cite{SoudryHofferNacsonGunasekarSrebro2018,GunasekarLeeSoudrySrebro2018}.  In this paper, for unregularized spectral full-potential risk comparisons, we only consider the case $n_p, n_q \geqslant m+1$, that is, $\alpha_p, \alpha_q < 1$, leaving the overparameterized case to future work.

\section{Population behavior (\(\alpha_p=\alpha_q=0\))}
\label{sec:pop-bench}

This section studies the population limits of the two estimators, that is, \(\alpha_p=\alpha_q=0\). We first compute the positive-ridge population criteria and then recover the unregularized criteria by sending the ridge parameters to zero.

\subsection{Variational estimator}

For the variational estimator with ridge parameter \(\tau>0\), the population objective over affine coefficients is
\(
  \theta^\top\Delta-\frac12\|\theta\|^2-\frac{\tau}{2}\|\theta\|^2.
\)
It is strictly concave, and its unique maximizer is
\(
  \theta^{(0)}=\frac{\Delta}{1+\tau}.
\)
With the population-normalized representative obtained from $\theta^{(0)}$, the intercept is
\(
  c^{(0)}=-\frac12\|\theta^{(0)}\|^2=-\frac{s}{2(1+\tau)^2}
\), leading to the potential
\(
v_{\mathrm{var}}^{(0)}(x)
=
 x^\top \theta ^{(0)}+c^{(0)}
=
\frac{\Delta^\top x}{1+\tau}
-
\frac{s}{2(1+\tau)^2}.
\)  Proposition~\ref{prop:affine-scores} then gives the risks
\[
  \mathcal R_{\J,{\rm var}}^{(0)}
  =\mathcal R_{\K,{\rm var}}^{(0)}
  =\frac{s}{2}\left(\frac{\tau}{1+\tau}\right)^2 .
\]
Letting \(\tau\downarrow0\) gives \(\theta^{(0)}\to\Delta\), \(c^{(0)}\to -s/2\), and hence the zero-ridge population fitted potential is (unsurprisingly) exactly
\(
  v_{\rm var}^{(0)}(x)=v_\ast(x)=\Delta^\top x-\frac{s}{2}.
\)
Thus, the zero-ridge population fit recovers the true log-density ratio exactly, and the population risks vanish,  that is,
\(
  \mathcal R_{\J,{\rm var}}^{(0)} = 
  \mathcal R_{\K,{\rm var}}^{(0)}=0.
\)

\subsection{Spectral estimator}
\label{sec:specpop}

At the population level, we have, from Eq.~(\ref{eq:murho}),
\(
  C(\rho)=\idm,  \mu(\rho)=\rho\Delta,\) 
  and \(
   \hat{\Delta} \to \Delta.
\)
For fixed \(\zeta\geqslant 0\) and $\rho \in [0,1]$, the affine functions in \eq{beta-ridge-spectral} are determined by
\begin{equation}
  \beta^{(0)}(\rho)=
  \frac{\Delta}{1+\zeta+s\rho(1-\rho)},
  \qquad
  u^{(0)}(\rho)(x)=
  \frac{\Delta^\top x-\rho s}{1+\zeta+s\rho(1-\rho)}.
  \label{eq:u-pop-ridge}
\end{equation}
Integrating with respect to $\rho$ will require the following integral which has a closed-form expression:
\begin{equation*}
  h(s)=\int_0^1\frac{\dd\rho}{1+\zeta+s\rho(1-\rho)} =
  \frac{4}{\sqrt{s(s+4(1+\zeta))}}
  \artanh\sqrt{\frac{s}{s+4(1+\zeta)}},
\end{equation*}
Substituting \eqref{eq:u-pop-ridge} into the continuum spectral integral gives, after some computations (see  Appendix~\ref{app:proof-pop-spec-ridge}), the scalar quadratic potentials
\BEAS
  \!v_{{\rm spec}}^{(0)}(x)
 &\!\!\!\!=\! \!\!\!& h'(s)(\Delta^\top x)^2+(h(s)-s h'(s))(\Delta^\top x)   +(1+\zeta)h(s)-1+
  \Big((1+\zeta)s+\frac{s^2}{2}\Big)h'(s) \\
 \! w_{{\rm spec}}^{(0)}(x)
  &\!\!\!\!= \!\!\!\!&  \Big( \!-\frac{h(s)+(s+2(1+\zeta))h'(s)}{2(1+\zeta)} \Big) (\Delta^\top x)^2 -  ( h(s)+s h'(s) ) (\Delta^\top x)   +   
  1-(1+\zeta)h(s)-(1+\zeta)s h'(s).
\EEAS
We can then use the scoring identities from \mysec{scoring}.  A key deviation from the variational estimator is the presence of a bias even when $\zeta=0$. As shown in \mysec{hd} below, this comes with a lower variance.

\section{Proportional high-dimensional limits with fixed signal}
\label{sec:hd}
A classical analysis of our estimator can be performed with $m$ fixed, and $n_p, n_q$ tending to infinity, that is, $\alpha_p, \alpha_q$ tending to zero, with classical tools from asymptotic statistics~\cite{van2000asymptotic}. This is only valid in small-dimensional problems (and considered in \mysec{weak}). In this section, we consider the high-dimensional limit with \emph{any} $\alpha_p,\alpha_q$. This section thus imposes the following classical assumptions.

\begin{assumption}[Proportional regime with fixed signal]
\label{ass:hd}
As \(m,n_p,n_q\to\infty\),
\[
\frac{m}{n_p}\to\alpha_p\in[0,\infty),
 \qquad
 \frac{m}{n_q}\to\alpha_q\in[0,\infty).\]
The samples from \(p\) and \(q\) are independent.
\end{assumption}

Under this assumption, we have limits in probability\footnote{These limits are immediate consequences of the law of large numbers for chi-square variables and standard Gaussian concentration \cite{Vershynin2018}. Indeed,
\(\hat\mu_p=\Delta+n_p^{-1/2}G_p\) and \(\hat\mu_q=n_q^{-1/2}G_q\), with independent standard Gaussian vectors \(G_p,G_q\in\R^m\). Hence
\(
 \|\hat\mu_p\|^2=s+2n_p^{-1/2}\Delta^\top G_p+n_p^{-1}\|G_p\|^2\to s+\alpha_p,
\)
because \(n_p^{-1/2}\Delta^\top G_p\to0\) in probability and \(n_p^{-1}\|G_p\|^2=(m/n_p)(m^{-1}\|G_p\|^2)\to\alpha_p\). The proof for \(\hat{\Delta}\) is identical, with noise covariance matrix $(n_p^{-1}+n_q^{-1})\idm  = (\alpha_p + \alpha_q) \frac{1}{m} \idm$. }
\begin{equation*}
 \|\hat\mu_p\|^2 \to s+\alpha_p,
 \qquad
 \|\hat{\Delta}\|^2\to s+\alpha_p+\alpha_q.
\end{equation*}
For this model where only $\Delta$ is unknown, a simple estimator is $\hat{\Delta} = \hat{\mu}_p - \hat{\mu}_q$ for which $\frac{1}{2} \| \Delta - \hat{\Delta} \|^2 \to \frac{1}{2}(\alpha_p+\alpha_q)$ (which is then equal to the population gap between the estimate of the $f$-divergence and the true value), and is both much simpler to compute and typically has better performance. Our goal in this paper is to study and compare estimators that apply beyond linear features and beyond the Gaussian model, keeping in mind that they can be far from optimal in this simple setup.

\subsection{Regularized variational deterministic equivalent}
\label{subsec:var-ridge-hd}
\label{sec:var-ridge-hd}

The CGMT reduction theorem below will make use of a Gaussian random variable $H \sim \mathcal{N}(0,1)$.
For a nonnegative deterministic measurable function \(A:\R\to[0,\infty)\), write \(A=A(H)\).  Thus \(A(H)\) is a random variable only through the scalar Gaussian input \(H\), while the map \(A(\cdot)\) itself is deterministic.  

\begin{theorem}[Variational deterministic equivalent]
\label{thm:var-ridge}
Under Assumption~\ref{ass:hd}, assume  \(\tau>0\)
and
\(s+\alpha_p+\alpha_q>0\).  The ridge variational estimator has a deterministic limit characterized by \begin{equation}
 \inf_{A\geqslant0,\,\E [A] =1}
    \E[A\log A - A + 1]
    +\frac{1}{2\tau}\big( ( s+\alpha_p+\alpha_q\E[A^2])^{1/2}-\E[AH]\big)_+^2.
 \label{eq:ridge-cgmt-main}
\end{equation}
 Moreover, there are (deterministic) scalars \(\xi\geqslant 0\), \(B>0\), and \(\eta\in\R\) such that the optimizer satisfies
\begin{align}
 \log A+\frac{\xi\alpha_q}{B}A&=\eta+\xi H \mbox{ almost surely} ,
  \notag\\
 \E [A] &=1,
 \label{eq:ridge-mass-main}\\
s+\alpha_p+\alpha_q\E[A^2] &= B^2,
 \label{eq:ridge-B-main}\\
 \E[AH]&=B-\tau\xi.
 \label{eq:ridge-alignment-main}
\end{align}
The estimator obeys the following projection limits (in probability):   \begin{equation}
 \|\hat\theta\|^2\to \xi^2,
 \qquad
 \hat\theta^\top\Delta\to \frac{\xi s}{B}.
 \label{eq:ridge-var-projections-main}
\end{equation}
 The empirically normalized intercept has a limit (in probability)
\begin{equation}
 \hat c\to \E[A\log A]+\tau\xi^2-\frac{\xi(s+\alpha_p)}{B}.
 \label{eq:c-tau-limit}
\end{equation}
Consequently, the population log-normalizer limit is
\(
 z
 =\E[A\log A]+\tau\xi^2-\frac{\xi (s+\alpha_p)}{B}+\frac12\xi^2,
\)
and the evaluation criteria are
$
 \mathcal J_{\rm var}
 = \frac{\xi s}{B} - \frac{\xi^2}{2}
$ and $
\mathcal K_{\rm var}
 =\mathcal J_{\rm var} - \exp(z)+z+ 1.
$
\end{theorem}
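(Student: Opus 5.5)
We write the ridge problem as a min–max amenable to the CGMT, using the Legendre dual of log-sum-exp. For fixed \(\theta\),
\[
\log\Big(\tfrac1{n_q}\sum_j e^{\theta^\top y_j}\Big)=\sup_{a\in\Sigma}\Big\{\sum_j a_j\theta^\top y_j-\sum_j a_j\log(n_q a_j)\Big\},
\]
where \(\Sigma\) is the probability simplex. Substituting this into \eq{Jhat-ridge}, minus the ridge objective becomes a min over \(\theta\) and max over \(a\), convex in \(\theta\) and concave in \(a\). Writing \(Y=[y_1,\dots,y_{n_q}]^\top\), the only Gaussian-matrix coupling is the bilinear term \(a^\top Y\theta\). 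The remaining data enter through \(\theta^\top\hat\mu_p\). We condition on \(\hat\mu_p\), which is independent of \(Y\) and satisfies \(\|\hat\mu_p\|^2\to s+\alpha_p\).

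We then decompose \(\theta\) into its component along \(\hat\mu_p\) and its orthogonal part. Because \(\hat\mu_p\) is correlated with \(\Delta\), we rotate so that \(\hat\mu_p\) and \(\Delta\) span a two-dimensional subspace; the projection limits are later read off from there. We apply the CGMT (as in \cite{salehi2019impact}), first restricting \(\theta\) and \(a\) to compact sets, which is allowed by strong concavity (\(\tau>0\)). This replaces \(a^\top Y\theta\) by
\[
\|\theta\|\,g^\top a+\|a\|\,h^\top\theta,\qquad g\in\R^{n_q},\ h\in\R^m \text{ standard Gaussian}.
\]

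The auxiliary problem is scalarized in two steps. Optimizing over the direction of \(\theta\) at fixed norm \(\xi\) combines \(\hat\mu_p\) and \(\|a\|h\) into a vector of norm
\[
\Big(\|\hat\mu_p\|^2+\|a\|^2 m\Big)^{1/2}\to\big(s+\alpha_p+\alpha_q\E[A^2]\big)^{1/2}=B,
\]
with \(A_j=n_q a_j\), so that \(\|a\|^2m=\alpha_q\cdot\frac1{n_q}\sum_jA_j^2\) to leading order. The linear term then reads \(\xi\big(B-\frac1{n_q}\sum_j A_jg_j\big)\). The entropy term \(\sum_ja_j\log(n_qa_j)\) becomes \(\frac1{n_q}\sum_j A_j\log A_j\), and the simplex constraint becomes \(\frac1{n_q}\sum_jA_j=1\). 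Replacing empirical averages over \(j\) by expectations over \(H\), we optimize the ridge term \(\frac\tau2\xi^2\) against the linear term \(\xi(B-\E[AH])_+\) over \(\xi\ge0\). This gives \(\xi=(B-\E[AH])_+/\tau\) and produces \eqref{eq:ridge-cgmt-main}, with the \(-A+1\) term vanishing under \(\E A=1\). Standard CGMT arguments then transfer concentration of the auxiliary optimum to the primary one, so \(\|\hat\theta\|^2\to\xi^2\), and the \(\Delta\)-component gives \(\hat\theta^\top\Delta\to\xi s/B\).

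The stationarity conditions of \eqref{eq:ridge-cgmt-main} follow pointwise in \(H\). We introduce a multiplier \(\eta\) for \(\E A=1\) and differentiate \(\sqrt{s+\alpha_p+\alpha_q\E A^2}\), which yields the \(\xi\alpha_qA/B\) term. Together with \(\xi=(B-\E[AH])/\tau\), these give \eqref{eq:ridge-mass-main}–\eqref{eq:ridge-alignment-main}.

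The intercept uses the identity that, at the maximizer, \(-\hat c=\log\frac1{n_q}\sum_je^{\hat\theta^\top y_j}\) equals the optimal value of the inner maximization. By the KKT conditions this equals \(\hat\theta^\top\hat\mu_p-\tau\|\hat\theta\|^2\) plus the entropy, suitably signed. This requires the extra limit
\[
\hat\theta^\top\hat\mu_p\to \frac{\xi(s+\alpha_p)}{B},
\]
which comes from the same alignment computation, and gives \eqref{eq:c-tau-limit}. Finally, Proposition~\ref{prop:affine-scores} yields \(z=\hat c+\xi^2/2\), \(\mathcal J_{\rm var}\), and \(\mathcal K_{\rm var}=\mathcal J_{\rm var}+1+z-e^{z}\).

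The main obstacle is making the CGMT rigorous with an infinite-dimensional dual variable \(a\). This requires exchanging min and max, proving compactness of the optimal \(A\) uniformly in \(n_q\), and establishing convergence of the empirical entropy functional to its Gaussian-expectation limit. I would first truncate \(A\) to a bounded range, using the implicit equation \(\log A+\frac{\xi\alpha_q}{B}A=\eta+\xi H\), which gives sub-Gaussian growth in \(H\). I would then argue uniform convergence on these compacts and remove the truncation at the end.
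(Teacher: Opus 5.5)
Your proposal is correct and follows essentially the paper's route: the entropy dual of log-sum-exp, conditioning on $\hat\mu_p$, a compact/truncated CGMT on the bilinear Gaussian term, self-averaging of $\|\hat\mu_p+\|a\|g\|$ into $B$, the scalar Gaussian entropy problem with its KKT system, and the KKT identity expressing $\hat c$ through the dual entropy, $\hat\theta^\top\hat\mu_p$ and $\tau\|\hat\theta\|^2$.

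Three points to tidy. First, you optimize the direction of $\theta$ for fixed $a$; that computes the auxiliary problem of the primary written as a minimum over $a$ of a maximum over $\theta$. This is the order the paper uses, and switching to it is legitimate by Sion because the truncated primary is convex--concave on compacts. Second, the entropy limit you need for $\hat c$ follows from value convergence together with $\|\hat\theta\|\to\xi$. Third, you transfer the projections by a CGMT localization argument, whereas the paper uses a perturbation/envelope argument; either works.
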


\paragraph{Proof sketch.} For a detailed proof, see Appendix~\ref{app:var-ridge}. The proof proceeds in 5 steps, which we develop here without full checks of regularity conditions. We use the notation $r^2 = s+\alpha_p$, $\mu = \hat{\mu}_p$ and $n = n_q$, $\alpha = \alpha_q$, and $Y \in \rb^{n \times m}$ the data matrix of the samples from $q$, which is a standard Gaussian matrix.
We condition on \(\hat\mu_p=\mu\). It is enough to prove the result
for deterministic sequences such that
\(\|\mu\|^2\to r^2\) and \(\Delta^\top\mu\to s\), since
\(\hat\mu_p\) is independent of the \(q\)-sample and satisfies these
limits in probability.

\emph{Step 1: ridge entropy dual.}
The optimization problem for the variational method has a traditional primal-dual representation as (with an unusual normalization for $a$ that will be useful later)
\BEAS
\max_{\theta \in \rb^m} \theta^\top   \mu - \log \Big( \frac{1}{n} \sum_{j=1}^n e^{\theta^\top y_j} \Big) - \frac{\tau}{2} \| \theta\|^2& = &\!\!\! \min_{a\in\R_+^n, \frac{1}{n}\sum_{j=1}^n a_j=1}
 \frac1n\sum_{j=1}^n a_j\log a_j
 +\frac{1}{2\tau}\Big\|\mu-\frac1nY^\top a 
 \Big\|^2\\
 & = & \!\!\!\!\min_{a\in\R_+^n, \frac{1}{n}\sum_{j=1}^n a_j=1}  \max_{\theta \in \rb^m}
 \frac1n\sum_{j=1}^n a_j\log a_j +
\theta^\top  \mu-\frac1n\theta^\top Y^\top a - \frac{\tau}{2} \| \theta\|^2
,
\EEAS
 with the optimal $\hat{\theta}$ obtained as  $\hat{\theta} = \frac{1}{\tau}( \mu - \frac1nY^\top a)$.
 Note that under the constraint \(n^{-1}\sum_{j=1}^n a_j=1\), the term
\(a_j\log a_j\) may equivalently be replaced by
\(a_j\log a_j-a_j+1\).
 
\emph{Step 2: CGMT min-max reduction.} 
Following the classical CGMT reduction~\cite{thrampoulidis2014gaussian,TOH2015}, we define a new min-max optimization problem with $g \in \rb^m,h \in \rb^n$ two independent standard Gaussian vectors, where the term 
$-\frac1n\theta^\top Y^\top a$ is replaced with $ \frac1n \| a\| g^\top \theta - \frac{1}{n} \| \theta\| h^\top a$:
\BEAS
& &  \min_{a\in\R_+^n, \frac{1}{n}\sum_{j=1}^n a_j=1}  \max_{\theta \in \rb^m}\ 
 \frac1n\sum_{j=1}^n a_j\log a_j +
 \theta^\top  \mu + \frac1n \| a\| g^\top \theta - \frac{1}{n} \| \theta\| h^\top a  - \frac{\tau}{2} \| \theta\|^2  .
 \EEAS
With proper regularity conditions (see Appendix~\ref{app:var-ridge}), this problem will enable us to approximate the limit in probability of the original problem. We went from $nm$ Gaussian entries to  $n + m$ Gaussian entries.
 The auxiliary problem can then be reduced by optimizing over the
direction of the maximization variable:
\BEAS
&   &   \min_{a\in\R_+^n, \frac{1}{n}\sum_{j=1}^n a_j=1}  \max_{\xi  \in \rb_+}
\max_{\theta \in \rb^m, \| \theta\| = \xi}
 \frac1n\sum_{j=1}^n a_j\log a_j +
 \theta^\top  \mu + \frac1n \| a\| g^\top \theta - \frac{1}{n} \| \theta\| h^\top a  - \frac{\tau}{2} \| \theta\|^2 
\\
& = &   \min_{a\in\R_+^n, \frac{1}{n}\sum_{j=1}^n a_j=1}  \max_{\xi  \in \rb_+}
 \frac1n\sum_{j=1}^n a_j\log a_j +
 \xi \big\|   \mu + \frac1n \| a\| g \big\|  - \frac{1}{n} \xi h^\top a  - \frac{\tau}{2} \xi^2 
\\
& = &   \min_{a\in\R_+^n, \frac{1}{n}\sum_{j=1}^n a_j=1}  
 \frac1n\sum_{j=1}^n a_j\log a_j +
  \frac{1}{2\tau} \Big(\big\|   \mu + \frac1n \| a\| g \big\|  - \frac{1}{n}   h^\top a \Big)_+^2     ,
 \EEAS
 where the last equality optimizes over \(\xi\geqslant 0\). When the denominator
is nonzero, the corresponding auxiliary maximizer $\theta$ is equal to
\(
\xi\,
\frac{\mu+n^{-1}\|a\|g}
{\|\mu+n^{-1}\|a\|g\|}.
\)

 \emph{Step 3: Reduction of stochastic program.} 
We have
\[
\Big\|   \mu + \frac1n \| a\| g \Big\|^2 =
\| \mu\|^2 + \frac{1}{n^2} \|a\|^2 \| g\|^2 + \frac{2}{n} \mu^\top g \| a \|,
\]
with $\mu^\top g = O_\P(1)$ (in probability) and thus the last term is negligible, and $\frac{1}{m}\| g\|^2$ tends to one in probability, while $\| \mu\|^2 \to r^2$, leading to the asymptotically equivalent problem (in min-max and min forms)
\BEAS
 & & \min_{a\in\R_+^n, \frac{1}{n}\sum_{j=1}^n a_j=1}  \max_{\xi  \in \rb_+}
 \frac1n\sum_{j=1}^n a_j\log a_j +
 \xi \Big(  r^2  +\frac{\alpha }{n} \| a\|^2\Big)^{1/2}  - \frac{1}{n} \xi h^\top a  - \frac{\tau}{2} \xi^2  \\
 & = & 
  \min_{a\in\R_+^n, \frac{1}{n}\sum_{j=1}^n a_j=1}   
 \frac1n\sum_{j=1}^n a_j\log a_j +
\frac{1}{2\tau} \Big( \big(  r^2  + \frac{\alpha }{n} \| a\|^2\big)^{1/2}  - \frac{1}{n}   h^\top a \Big)_+^2 ,
  \EEAS
with now a single Gaussian vector $h \in \rb^n$.  From optimality conditions, there exists a deterministic\linebreak
function $\varphi$ such that $a_j = \varphi(h_j)$ for all $j \in \{1,\dots,n\}$. 
When $n$ tends to infinity, empirical averages $\frac{1}{n} \sum_{j=1}^n a_j$,
$\frac{1}{n} \sum_{j=1}^n a_j^2$, $\frac{1}{n} \sum_{j=1}^n a_j \log a_j$, $\frac{1}{n} \sum_{j=1}^n a_j h_j$ become expectations $\E[A]$, $\E[A^2]$, $\E[ A \log A] $, $\E[AH]$,\linebreak
with $A$ a deterministic function of $H$, a standard Gaussian variable, solving the following problem (which is exactly 
\eq{ridge-cgmt-main}):
\[
 \min_{A \geqslant 0,\  \E[A] = 1 }   
 \E[ A \log A - A + 1] +
\frac{1}{2\tau} \Big( \big(  r^2  + \alpha\E[A^2] \big)^{1/2}  -\E[ AH] \Big)_+^2.
\]

\emph{Step 4: KKT equations.} Using the Lagrange multiplier $\xi$ from above and a multiplier $\eta$ for the constraint $\E[A] = 1$, the stationary equation becomes
\[
\log A + \alpha \frac{ \xi }{ B} A = \xi H + \eta \mbox{ almost surely},
\]
for $B = \big(  r^2  + \alpha \E[A^2] \big)^{1/2}$, with $\xi = \frac{1}{\tau} ( B - \E[AH])_+$, leading to the stated optimality conditions.

\emph{Step 5: estimator observables and scoring.}
By CGMT localization (see Appendix~\ref{app:var-ridge} for details), the primary optimizer has the same limiting
observables as the auxiliary maximizer for the norm and the projections
onto \(\mu\) and \(\Delta\). For the auxiliary maximizer,
\(
\theta_{\rm }
=
\xi\frac{\mu+n^{-1}\|a\|g}{\|\mu+n^{-1}\|a\|g\|}
+o_\P(1).
\)
Moreover, we have  the limit 
\(
\|\mu+n^{-1}\|a\|g\|\to B\) in probability, as well as  
\(
\mu^\top(\mu+n^{-1}\|a\|g)\to r^2, 
\Delta^\top(\mu+n^{-1}\|a\|g)\to s.
\)
Therefore
\(
\|\hat\theta\|\to\xi, 
\hat\theta^\top\mu\to\frac{\xi r^2}{B}, 
\hat\theta^\top\Delta\to\frac{\xi s}{B}.
\)

Value convergence also gives
\(
\frac1n\sum_{j=1}^n   a_j\log  a_j
\to
\E[A\log A].
\)
At the saddle point, we have the identities
\(
  a_j=\exp\{y_j^\top\hat\theta+\hat c\},
\frac1nY^\top  a=\mu-\tau\hat\theta .
\)
Hence, the exact identity
\(
\hat c
=
\frac1n\sum_{j=1}^n   a_j\log  a_j
-\hat\theta^\top\mu+\tau\|\hat\theta\|^2
\)
implies
\(
\hat c
\to
\E[A\log A]+\tau\xi^2-\frac{\xi r^2}{B}
=
\E[A\log A]+\tau\xi^2-\frac{\xi(s+\alpha_p)}{B}.
\)
Consequently,
\(
\log Z_q(\hat v_{\rm var})
=
\hat c+\frac12\|\hat\theta\|^2
\to
z.
\)
Using the affine scoring identities,
\(
\mathcal J(\hat v_{\rm var})
=
\hat\theta^\top\Delta-\frac12\|\hat\theta\|^2
\to
\frac{\xi s}{B}-\frac12\xi^2
=
\mathcal J_{\rm var}.
\)
Finally, by the identity
\(\mathcal K(v)=\mathcal J(v)+\log Z_q(v)-Z_q(v)+1\), we get
\(
\mathcal K(\hat v_{\rm var})
\to
\mathcal J_{\rm var}+z-e^z+1
=
\mathcal K_{\rm var}.
\)

\paragraph{Empirical evaluation.}
In our  experiments, the variational limit is computed by Gauss-Hermite quadrature~\cite{Gautschi2004} for the expectations over $H$.  For fixed $\xi$ and $B$, set $c=\xi\alpha_q/B$.  The scalar equation in Theorem~\ref{thm:var-ridge} is $\log A+cA=\eta+\xi H$.  If $c>0$, exponentiating gives $cA e^{cA}=c e^{\eta+\xi H}$, so the solution is
\(
 A=\frac{1}{c}W_0(c e^{\eta+\xi H}),
\)
where $W_0$ is the principal branch of the Lambert $W$ function \cite{CorlessGonnetHareJeffreyKnuth1996}.  When $c=0$, this is replaced by the limiting expression $A=e^{\eta+\xi H}$.  The multiplier $\eta$ is chosen by one-dimensional bisection to enforce $\mathbb E[A]=1$, and the remaining two equations
\[
 B^2=s + \alpha_p +\alpha_q\mathbb E[A^2],
 \qquad
 \mathbb E[AH]=B-\tau\xi,
\]
are solved by a safeguarded Newton method for nonlinear equations, using line-search or trust-region globalization~\cite{NocedalWright2006}.  The unregularized computation is recovered by the zero-ridge limiting convention $\tau\downarrow0$, for which the second equation becomes $\mathbb E[AH]=B$ on the strict feasible side. Note that the equation $\log A+cA=\eta+\xi H$ can be given an interpretation in terms of the proximal operator for the entropy~\cite{salehi2019impact}.

\subsection{Zero-ridge variational limit}
\label{subsec:zero-ridge-var}
\label{sec:zero-ridge-var}

We simply take the limit of \eq{ridge-cgmt-main} when $\tau$ tends to zero, where the term
$\frac{1}{2\tau}\big( ( s+\alpha_p+\alpha_q\E[A^2])^{1/2}-\E[AH]\big)_+^2$ imposes the constraint
$(( s+\alpha_p+\alpha_q\E[A^2])^{1/2}-\E[AH])_+=0$, that is,
\( ( s+\alpha_p+\alpha_q\E[A^2])^{1/2}-\E[AH] \leqslant 0. \) This leads to the optimization problem
\[
\inf_{A\geqslant0,\,\E [A] =1}
    \E[A\log A - A + 1]
\   \mbox{ such that }\  ( s+\alpha_p+\alpha_q\E[A^2])^{1/2}-\E[AH] \leqslant 0,
\]
which is not always feasible. It is if and only if there exists $A$ such that 
$ ( s+\alpha_p+\alpha_q\E[A^2])^{1/2}-\E[AH] \leqslant 0$, that is,
$ s+\alpha_p+\alpha_q \E[A^2] \leqslant ( \E[AH] )^2$ (we can always assume $\E[AH] \geqslant 0$ by symmetry). Hence, the strict zero-ridge feasible phase is characterized by
\[
s+\alpha_p
<
\sup_{A\geqslant 0, \ \mathbb E [A]=1}
    \bigl(\mathbb E[AH]\bigr)^2
    -
    \alpha_q\mathbb E[A^2] : = R_{\rm hull}^2(\alpha_q).
\]
It remains only to evaluate the scalar variational problem defining $R_{\rm hull}^2(\alpha_q)$ above.
We give an explicit formula in Appendix~\ref{app:Rhull}, where we show it is negative for $\alpha_q>1/2$.
Therefore, for the two-sample
problem, the strict zero-ridge feasible region is
\[
        0\leqslant \alpha_q< \frac12,
        \qquad
        s+\alpha_p<R_{\rm hull}^2(\alpha_q).
\]
The function $\alpha \mapsto R_{\rm hull}^2(\alpha)$ is decreasing on $(0,1/2]$, equal to zero at $1/2$ and tending to infinity as $2 \log (1/\alpha)$ around zero (see illustrative plot in \myfig{regimes}).

On this region, Proposition~\ref{prop:var-dual} gives finite attainment of the unregularized
variational estimator with probability tending to one. Moreover, all expressions for the limiting estimators and scorings from Theorem~\ref{thm:var-ridge} apply directly with $\tau=0$. Outside this region,
away from the critical boundary, the convex-hull constraint fails with
probability tending to one and the unregularized variational objective is
unbounded above.

\subsection{Regularized spectral deterministic equivalent}
\label{subsec:spec-ridge-hd}

We now state the deterministic asymptotic equivalent for the spectral estimator.
The result is finite for all fixed finite aspect ratios because the ridge parameter
keeps every regularized covariance matrix with strictly positive eigenvalues. To obtain
a high-dimensional limit, we study the $m \times m$ matrix
\[
 \hat C(\rho)=\rho\hat C_p+(1-\rho)\hat C_q,
\]
which, up to the sample-mean vectors, is a weighted sum of two independent Wishart
matrices, for which existing results from random matrix theory could be used~\cite{BaiSilverstein2010}.
The proof is instead written in a ``random-feature style'': the two centered covariance
matrices are embedded into a single standard Gaussian matrix, while the mixture
parameter \(\rho\) appears only through a deterministic diagonal matrix. This allows to treat the statistical dependence between $\hat{C}(\rho)$ and $\hat{C}(\sigma)$, for $\rho,\sigma \in [0,1]$ potentially different.

The scalar
\(\omega(\rho)\) below is the inverse limiting Stieltjes transform at the negative ridge
point \(-\zeta\); equivalently, it is the effective, or ``self-induced'', ridge parameter in
the random-matrix formulation of ridge regression in
\cite{montanari2022interpolation,Bach2024RandomProjections}.

The following theorem characterizes the limit of the $\rho$-dependent least-squares estimator. All performance criteria can then be obtained by integration in \mysec{chi-quadrature}. Note that we obtain almost sure limits instead of limits in probability obtained in \mysec{var-ridge-hd} for the variational estimator.

\begin{theorem}[Regularized spectral resolvent limits]
\label{thm:spec-ridge}
Under Assumption~\ref{ass:hd}, assume  \(\zeta>0\).
For each \(\rho\in[0,1]\), let \(\omega(\rho)\) be the unique solution on
\((\zeta,\infty)\) of
\begin{equation}
 \omega(\rho)
 =
 \zeta+
 \frac{\rho\omega(\rho)}{\omega(\rho)+\alpha_p\rho}
 +\frac{(1-\rho)\omega(\rho)}{\omega(\rho)+\alpha_q(1-\rho)}.
 \label{eq:omega-zeta-hd}
\end{equation}
Define the normalized first and cross second degrees of freedom by
\begin{align}
 \df_1(\rho)
 &=
 \frac{\rho}{\omega(\rho)+\alpha_p\rho}
 +\frac{1-\rho}{\omega(\rho)+\alpha_q(1-\rho)}
 =1-\frac{\zeta}{\omega(\rho)} \in (0,1),
  \notag\\
 \df_2(\rho,\sigma)
 &=
 \frac{\alpha_p\rho\sigma}
 {(\omega(\rho)+\alpha_p\rho)(\omega(\sigma)+\alpha_p\sigma)}
 +
 \frac{\alpha_q(1-\rho)(1-\sigma)}
 {(\omega(\rho)+\alpha_q(1-\rho))(\omega(\sigma)+\alpha_q(1-\sigma))}
 \in [0,1).
 \label{eq:df2-zeta-hd}
\end{align}
Set
$
 \chi_2(\rho,\sigma)
 =
 \frac{1}{\omega(\rho)\omega(\sigma)(1-\df_2(\rho,\sigma))},
 $
$
 R=s+\alpha_p+\alpha_q,
 $ and $
 \kappa(\rho)=1+\frac{\rho(1-\rho)R}{\omega(\rho)}.
$
Then the following uniform convergences hold \emph{almost surely} for $\hat{\mu}(\rho)$ and $\hat{\beta}(\rho)$ defined in \eq{murho} and \eq{beta-ridge-spectral}:
\begin{align}
 \sup_{\rho\in[0,1]}
 \big|
 \Delta^\top\hat\beta(\rho)
 -
 \ell_\Delta(\rho)
 \big|
 &\longrightarrow 0,
 &
 \ell_\Delta(\rho)
 &:=
 \frac{s}{\omega(\rho)\kappa(\rho)},
 \label{eq:h-zeta-hd}\\
 \sup_{\rho\in[0,1]}
 \big|
 \hat\mu(\rho)^\top\hat\beta(\rho)
 -
 m(\rho)
 \big|
 &\longrightarrow 0,
 &
 m(\rho)
 &:=
 \frac{\rho(s+\alpha_p)-(1-\rho)\alpha_q}{\omega(\rho)\kappa(\rho)},
 \label{eq:m-zeta-hd}\\
  \sup_{\rho,\sigma\in[0,1]}
 \big|
 \hat\beta(\rho)^\top\hat\beta(\sigma)
 -
 k(\rho,\sigma)
 \big|
 &\longrightarrow 0,
 &
 k(\rho,\sigma)
 &:=
 \frac{R\chi_2(\rho,\sigma)}
 {\kappa(\rho)\kappa(\sigma)}.
 \label{eq:k-zeta-hd}
\end{align}
\end{theorem}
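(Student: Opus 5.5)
We reduce everything to resolvent quadratic forms of $\hat C(\rho)$ and $\hat C(\sigma)$ against the random vectors $\hat\Delta$ and $\hat\mu_q$, and then apply a deterministic equivalent for these forms.

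\textbf{Step 1: reduction to resolvent forms.} Write $Q(\rho)=(\hat C(\rho)+\zeta\idm)^{-1}$. The Sherman--Morrison form in \eqref{eq:beta-ridge-spectral} gives $\hat\beta(\rho)=Q(\rho)\hat\Delta/(1+\rho(1-\rho)\hat\Delta^\top Q(\rho)\hat\Delta)$. Since $\hat\mu(\rho)=\hat\mu_q+\rho\hat\Delta$, all three targets are rational functions of the following scalars:
\[
\Delta^\top Q(\rho)\hat\Delta,\qquad \hat\mu_q^\top Q(\rho)\hat\Delta,\qquad \hat\Delta^\top Q(\rho)\hat\Delta,\qquad \hat\Delta^\top Q(\rho)Q(\sigma)\hat\Delta .
\]
The key structural fact is that, for Gaussian samples, the centered covariances $\hat C_p,\hat C_q$ are independent of the means $\hat\mu_p,\hat\mu_q$. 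Write $\hat\Delta=\Delta+n_p^{-1/2}G_p-n_q^{-1/2}G_q$ and $\hat\mu_q=n_q^{-1/2}G_q$, with $G_p,G_q$ standard Gaussian vectors independent of $Q$.

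\textbf{Step 2: conditional evaluation of the forms.} Conditioning on $Q$ and using Hanson--Wright concentration, each quadratic form concentrates on its conditional mean. Every deterministic term is a normalized trace, or $\Delta^\top Q\Delta$, and the cross terms vanish. This gives
\begin{align*}
\hat\Delta^\top Q(\rho)\hat\Delta&\approx \Delta^\top Q(\rho)\Delta+(\alpha_p+\alpha_q)\tfrac1m\tr Q(\rho),\\
\Delta^\top Q(\rho)\hat\Delta&\approx \Delta^\top Q(\rho)\Delta,\\
\hat\mu_q^\top Q(\rho)\hat\Delta&\approx -\alpha_q\tfrac1m\tr Q(\rho),\\
\hat\Delta^\top Q(\rho)Q(\sigma)\hat\Delta&\approx \Delta^\top Q(\rho)Q(\sigma)\Delta+(\alpha_p+\alpha_q)\tfrac1m\tr Q(\rho)Q(\sigma).
\end{align*}
The covariance of $(\hat C_p,\hat C_q)$ is rotationally invariant, and $\Delta$ is independent of it. Hence it suffices to show
\[
\Delta^\top Q(\rho)\Delta\approx \frac{s}{\omega(\rho)},\qquad \tfrac1m\tr Q(\rho)\to \frac{1}{\omega(\rho)},
\]
\[
\Delta^\top Q(\rho)Q(\sigma)\Delta\approx s\,\chi_2(\rho,\sigma),\qquad \tfrac1m\tr Q(\rho)Q(\sigma)\to\chi_2(\rho,\sigma).
\]
Granting these limits, the formulas follow by direct substitution. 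The denominator becomes $1+\rho(1-\rho)R/\omega=\kappa$, so $\Delta^\top\hat\beta\to s/(\omega\kappa)$. The mean term satisfies $\hat\mu(\rho)^\top\hat\beta\to(\rho(s+\alpha_p)-(1-\rho)\alpha_q)/(\omega\kappa)$, using $\rho(s+\alpha_p+\alpha_q)-\alpha_q$. Finally $\hat\beta(\rho)^\top\hat\beta(\sigma)\to R\chi_2/(\kappa(\rho)\kappa(\sigma))$.

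\textbf{Step 3: deterministic equivalents.} Stack the centered samples into $Z=[Z_p;Z_q]$, a Gaussian matrix with $n_p+n_q-2$ rows after a Helmert rotation. Then $\hat C(\rho)=Z^\top D(\rho)Z$, where $D(\rho)$ is diagonal with entries $\rho/n_p$ and $(1-\rho)/n_q$.

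\emph{First order.} The standard leave-one-out (Sherman--Morrison) argument on the rows gives $Q(\rho)\simeq\omega(\rho)^{-1}\idm$, with $\omega$ solving \eqref{eq:omega-zeta-hd}. Equivalently, this is the Marchenko--Pastur-type equation for weighted sample covariance matrices \cite{BaiSilverstein2010,HachemLoubatonNajim2007}. Uniqueness on $(\zeta,\infty)$ follows because the right side minus $\omega$ is decreasing.

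\emph{Second order.} For the product $Q(\rho)Q(\sigma)$, expand $Q(\rho)-\omega(\rho)^{-1}\idm$ by the resolvent identity and again remove one row at a time. Each row contributes its weight $D_{ii}(\rho)D_{ii}(\sigma)$ times the factor $(1+D_{ii}(\rho)\tr Q(\rho))^{-1}(1+D_{ii}(\sigma)\tr Q(\sigma))^{-1}$. Summing over rows yields the self-consistent equation $T=\omega(\rho)^{-1}\omega(\sigma)^{-1}(1+\df_2(\rho,\sigma)T)$, whence $T=\chi_2$. The same identity holds for the $\Delta$-quadratic form, by isotropy.

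\textbf{Step 4: almost sure and uniform convergence.} Almost sure convergence at fixed $(\rho,\sigma)$ follows from concentration bounds with summable tails (polynomial moments via Poincaré or Burkholder inequalities), together with Borel--Cantelli. Uniformity holds because all quantities are Lipschitz in $\rho$ and $\sigma$, with constants depending only on $\zeta^{-1}$ and on $\|\hat C_p\|_{\op}$ and $\|\hat C_q\|_{\op}$, which are a.s.\ bounded. For instance, $\|Q(\rho)-Q(\rho')\|_{\op}\le\zeta^{-2}|\rho-\rho'|\,\|\hat C_p-\hat C_q\|_{\op}$. The limits are continuous on the compact set $[0,1]^2$, so convergence on a countable dense grid extends to uniform convergence.

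The \textbf{main obstacle} is the second-order equivalent for $\frac1m\tr Q(\rho)Q(\sigma)$ at two distinct $\rho\neq\sigma$. Its error terms must be controlled so that the leave-one-out fluctuations vanish. I would handle this through the random-feature embedding, treating $D(\rho)$ and $D(\sigma)$ as two deterministic diagonal weightings of the same $Z$.
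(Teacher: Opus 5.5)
Your proposal is correct in substance and follows the paper's route. You reduce $\hat\beta(\rho)$ by Sherman--Morrison. You use independence of the sample means from the centered covariances, plus rotational invariance, to reduce every bilinear form to $\frac1m\tr Q(\rho)$ and $\frac1m\tr Q(\rho)Q(\sigma)$. Your expansion of $Q(\rho)-\omega(\rho)^{-1}I$ is, after taking $\frac1m\tr[\,\cdot\,Q(\sigma)]$, exactly the identity $\frac1m\tr Q(\sigma)=\frac1m\tr[\hat C(\rho)Q(\rho)Q(\sigma)]+\zeta\,\frac1m\tr Q(\rho)Q(\sigma)$ that the paper closes by leave-one-out in its two-resolvent appendix. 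Uniformity then comes from the same grid/equicontinuity argument.

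Two local slips need fixing.

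\emph{The closed equation for $T$.} Put $c_i(\rho)=(1+D_{ii}(\rho)\tr Q(\rho))^{-1}$. Leave-one-out gives $z_i^\top Q(\rho)Q(\sigma)z_i\approx mT\,c_i(\rho)c_i(\sigma)$. The row sum you describe therefore enters as $m\sum_i D_{ii}(\rho)D_{ii}(\sigma)c_i(\rho)c_i(\sigma)$, and this equals $\omega(\rho)\omega(\sigma)\df_2(\rho,\sigma)$, not $\df_2(\rho,\sigma)$. The correct equation is
\[
T=\frac{1}{\omega(\rho)\omega(\sigma)}+\df_2(\rho,\sigma)\,T ,
\]
which gives $T=\chi_2$. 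Taken literally, your displayed equation $T=\omega(\rho)^{-1}\omega(\sigma)^{-1}(1+\df_2T)$ would give $T=1/(\omega(\rho)\omega(\sigma)-\df_2)\neq\chi_2$.

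\emph{Uniqueness of $\omega$.} It is not true in general that the right side minus $\omega$ is decreasing. For $\rho=1$ and $\alpha_p=\zeta<1/4$, its derivative tends to $1/(4\zeta)-1>0$ as $\omega\downarrow\zeta$. Divide by $\omega$ instead: $\zeta/\omega+\df_1(\omega)-1$ is strictly decreasing on $(\zeta,\infty)$. It equals $\df_1(\zeta)>0$ at $\omega=\zeta$ and tends to $-1$, so it has a unique root.

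Finally, your Lipschitz constants in Step~4 should also use that $\|\hat\Delta\|$ and $\|\hat\mu_q\|$ are almost surely bounded and that $\hat\kappa(\rho)\geq 1$, as the paper notes.
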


\begin{proof} The proof proceeds in four steps.

\emph{Step 1: Reduction to standard Gaussian matrix.} 
Let \(N=n_p+n_q\). We can realize the Gaussian sampling model on one matrix
\(Z\in\R^{N\times m}\) with i.i.d.~standard Gaussian entries. After an
orthogonal change of coordinates within the \(p\)-sample and within the \(q\)-sample,
the first row and the \((n_p+1)\)-th row generate the two sample means, while the
remaining rows generate the centered sample covariances. Define
\[
 \Pi_p=\Diag\big(0,I_{n_p-1},0,0_{n_q-1}\big)\in\R^{N\times N},
 \qquad
 \Pi_q=\Diag\big(0,0_{n_p-1},0,I_{n_q-1}\big)\in\R^{N\times N},
\]
and
\[
 \Gamma(\rho)
 =
 \rho\alpha_{p}\Pi_p+(1-\rho)\alpha_{q}\Pi_q\in\R^{N\times N}.
\]
Then, in distribution, with \(e_i\in\rb^N\) the \(i\)-th canonical basis vector,
\begin{align*}
 \hat\mu_p&=\Delta+n_p^{-1/2}Z^\top e_1,
 &
 \hat\mu_q&=n_q^{-1/2}Z^\top e_{n_p+1},\\
 \hat\Delta&=\hat\mu_p-\hat\mu_q,
 &
 \hat\mu(\rho)&=\rho\hat\mu_p+(1-\rho)\hat\mu_q,
\end{align*}
and
\[
 \hat C(\rho)
 =
 \rho\hat C_p+(1-\rho)\hat C_q
 =
 \frac1m Z^\top\Gamma(\rho)Z .
\]
The finite-rank rows corresponding to the sample means do not affect the
normalized trace limits. 

Set
\[
 Q(\rho)=(\hat C(\rho)+\zeta I)^{-1},
 \qquad
 \hat\kappa(\rho)
 =
 1+\rho(1-\rho)\hat\Delta^\top Q(\rho)\hat\Delta .
\]
The Sherman-Morrison formula gives, with the notations from
\mysec{est2}, in particular \eq{beta-ridge-spectral},
\begin{equation}
 \hat\beta(\rho)
 =
 \frac{Q(\rho)\hat\Delta}{\hat\kappa(\rho)}.
 \label{eq:beta-sm-hd}
\end{equation}
To compute scores required in \mysec{scoring}, we thus need limits for the quantities $\Delta^\top\hat\beta(\rho)$, $\hat\mu(\rho)^\top\hat\beta(\rho)
$, and $ \hat\beta(\rho)^\top\hat\beta(\sigma)$, for $\rho,\sigma\in[0,1]$. 

\emph{Step 2: Asymptotic limits using existing random matrix theory results.}
For the first two, which involve a single $\rho$, we can directly  apply Proposition~3.2 of \cite{Bach2024RandomProjections} to the transposed
matrix \(Z^\top\in\R^{m\times N}\), with sample size \(m\), ambient dimension
\(N\), deterministic covariance profile \(\Gamma(\rho)\), and spectral
parameter \(-\zeta\). The corresponding Stieltjes-transform equation gives
\[
 \omega(\rho)-\zeta
 =
 \lim_{m\to\infty}
 \frac1m\tr\!\left[
 \Gamma(\rho)
 (I+\omega(\rho)^{-1}\Gamma(\rho))^{-1}
 \right].
\]
Evaluating the two diagonal blocks gives exactly \eq{omega-zeta-hd}. The
positive solution is unique by the standard uniqueness of the Stieltjes-transform
solution at a negative spectral parameter.

For any bounded deterministic diagonal matrix \(A\in\R^{N\times N}\), set
$
 C_{A}=\frac1m Z^\top A Z.
$
The deterministic equivalent gives, for each fixed \(\rho\),
\begin{equation*}
 \frac1m\tr[C_{A}Q(\rho)]
 -
 \frac1m\tr\!\left[
 A(\Gamma(\rho)+\omega(\rho)I)^{-1}
 \right]
 \longrightarrow 0.
\end{equation*}
In particular,
\(
 \frac1m\tr Q(\rho)\longrightarrow \frac1{\omega(\rho)}.
\)
Taking \(A=\Gamma(\rho)\) yields
\[
 \frac1m\tr[\hat C(\rho)Q(\rho)]
 \longrightarrow
 \frac{\rho}{\omega(\rho)+\alpha_p\rho}
 +
 \frac{1-\rho}{\omega(\rho)+\alpha_q(1-\rho)}
 =
 \df_1(\rho).
\]
Since \(\hat C(\rho)Q(\rho)=I-\zeta Q(\rho)\), this also gives the identity
\(
 \df_1(\rho)=1-\frac{\zeta}{\omega(\rho)}.
\)

\emph{Step 3: correlations between two different values of $\rho$, $\sigma$.}
We next compute terms that involve two values $\rho,\sigma \in [0,1]$. Let
\(
 t(\rho,\sigma)=\frac1m\tr(Q(\rho)Q(\sigma)).
\)
We show in Appendix~\ref{app:newprop} an extension of Proposition~3.2 of \cite{Bach2024RandomProjections} that leads to, for every bounded
deterministic diagonal~\(A\),  
\BEQ
\label{eq:AA}
\frac1{m^2}\tr[AZQ(\sigma)Q(\rho)Z^\top]
   -
 t(\rho,\sigma)
 \frac1m\tr\!\left[
 A
 (I+\omega(\rho)^{-1}\Gamma(\rho))^{-1}
 (I+\omega(\sigma)^{-1}\Gamma(\sigma))^{-1}
 \right]
 \longrightarrow 0.
 \EEQ
From
\(
 Q(\rho)(\hat C(\rho)+\zeta I)Q(\sigma)=Q(\sigma),
\)
we get
\(
 \frac1m\tr Q(\sigma)
 =
 \frac1m\tr(Q(\rho)\hat C(\rho)Q(\sigma))
 +\zeta t(\rho,\sigma).
\)
Using \eq{AA} with \(A=\Gamma(\rho)\), we get
\[
 \frac1{\omega(\sigma)}
 =
 t(\rho,\sigma)(\zeta+a(\rho,\sigma)),
\]
where
\[
 a(\rho,\sigma)
 =
 \lim_{m\to\infty}
 \frac1m\tr\!\left[
 \Gamma(\rho)
 (I+\omega(\rho)^{-1}\Gamma(\rho))^{-1}
 (I+\omega(\sigma)^{-1}\Gamma(\sigma))^{-1}
 \right].
\]
Using
\(
 (I+\omega(\sigma)^{-1}\Gamma(\sigma))^{-1}
 =
 I-
 \Gamma(\sigma)
 (\omega(\sigma)I+\Gamma(\sigma))^{-1},
\)
we obtain
\[
 a(\rho,\sigma)
 =
 \omega(\rho)\df_1(\rho)
 -\omega(\rho)\df_2(\rho,\sigma)
 =
 \omega(\rho)-\zeta-\omega(\rho)\df_2(\rho,\sigma),
\]
where evaluating the two diagonal blocks gives \eq{df2-zeta-hd}. Therefore
we get
\[
 \frac1m\tr\{Q(\rho)Q(\sigma)\}
 \longrightarrow
 \frac1{\omega(\rho)\omega(\sigma)(1-\df_2(\rho,\sigma))}
 =
 \chi_2(\rho,\sigma).
\]
It remains to transfer the trace limits to the mean-dependent bilinear forms.
Define
\[
 \varepsilon_p=n_p^{-1/2}Z^\top e_1,
 \qquad
 \varepsilon_q=n_q^{-1/2}Z^\top e_{n_p+1}.
\]
Then
\[
 \hat\Delta=\Delta+\varepsilon_p-\varepsilon_q,
 \qquad
 \hat\mu(\rho)=\rho\Delta+\rho\varepsilon_p+(1-\rho)\varepsilon_q.
\]
The vectors \(\varepsilon_p,\varepsilon_q\) are independent of the covariance rows
entering \(Q(\rho)\). The right-orthogonal invariance of the resolvent family and the trace limits above gives, for each
fixed \(\rho,\sigma\), almost surely,
\begin{align*}
 \Delta^\top Q(\rho)\Delta
 &\to \frac{s}{\omega(\rho)},  \quad
 \Delta^\top Q(\rho)\varepsilon_p,\;
 \Delta^\top Q(\rho)\varepsilon_q,\;
 \varepsilon_p^\top Q(\rho)\varepsilon_q
 \to 0,  \quad 
 \varepsilon_p^\top Q(\rho)\varepsilon_p
 \to \frac{\alpha_p}{\omega(\rho)},
 \quad
 \varepsilon_q^\top Q(\rho)\varepsilon_q
 \to \frac{\alpha_q}{\omega(\rho)}.
\end{align*}
The same argument with \(Q(\rho)Q(\sigma)\) gives
\begin{align*}
 \Delta^\top Q(\rho)Q(\sigma)\Delta
  \to s\chi_2(\rho,\sigma), \quad 
 \varepsilon_p^\top Q(\rho)Q(\sigma)\varepsilon_p
  \to \alpha_p\chi_2(\rho,\sigma), \quad 
 \varepsilon_q^\top Q(\rho)Q(\sigma)\varepsilon_q
  \to \alpha_q\chi_2(\rho,\sigma),
\end{align*}
with all corresponding cross terms converging to zero. Therefore, with
\(
 R=s+\alpha_p+\alpha_q,
\)
we have
\(
 \hat\Delta^\top Q(\rho)\hat\Delta
 \to
 \frac{R}{\omega(\rho)},
\)
and hence
\(
 \hat\kappa(\rho)
 \to
 1+\frac{\rho(1-\rho)R}{\omega(\rho)}
 =
 \kappa(\rho).
\)
Moreover,
\begin{align*}
 \Delta^\top Q(\rho)\hat\Delta
 \to \frac{s}{\omega(\rho)},\quad
 \hat\mu(\rho)^\top Q(\rho)\hat\Delta
\to
 \frac{\rho(s+\alpha_p)-(1-\rho)\alpha_q}{\omega(\rho)}, \quad
 \hat\Delta^\top Q(\rho)Q(\sigma)\hat\Delta
 \to R\chi_2(\rho,\sigma).
\end{align*}
Substituting these limits into \eq{beta-sm-hd} proves the pointwise
almost-sure versions of Eqs.~\eqref{eq:h-zeta-hd}-\eqref{eq:k-zeta-hd}.

\emph{Step 4: uniform convergence.}
Finally, we upgrade pointwise convergence to uniform convergence. The pointwise
limits above hold on a common probability-one event for all \(\rho,\sigma\) in a
fixed countable dense subset of \([0,1]\). On the same event, the Gaussian
sample-covariance spectral norms \(\|\hat C_p\|\) and \(\|\hat C_q\|\) are eventually
bounded. Since \(\zeta>0\),
\[
 \|Q(\rho)-Q(\sigma)\|
 \leq
 \zeta^{-2}|\rho-\sigma|\cdot \|\hat C_p-\hat C_q\|.
\]
The mean vectors have almost-surely bounded norms, and
\(\hat\kappa(\rho)\geqslant 1\). Hence the random maps
\[
 \rho\mapsto \Delta^\top\hat\beta(\rho),\qquad
 \rho\mapsto \hat\mu(\rho)^\top\hat\beta(\rho),\qquad
 (\rho,\sigma)\mapsto \hat\beta(\rho)^\top\hat\beta(\sigma)
\]
are almost surely eventually equicontinuous. The deterministic limits \(\ell_\Delta,m,k\)
are continuous on their compact domains. A standard grid argument then gives the
stated uniform almost-sure convergences over \([0,1]\) and \([0,1]^2\). 
\end{proof}

For fixed \(\zeta>0\), the quantities defined in Theorem~\ref{thm:spec-ridge} are computed by computing $\omega(\rho)$ at each quadrature node. Once \(\omega(\rho)\)
and \(\omega(\sigma)\) are known, \(\df_1(\rho)\), \(\df_2(\rho,\sigma)\),
\(\kappa(\rho)\), \(\ell_\Delta(\rho)\), \(m(\rho)\), \(k(\rho,\rho)\), and \(k(\rho,\sigma)\)
are explicit rational functions of \(\omega(\rho)\), \(\omega(\sigma)\), \(\zeta\),
and the parameters \(s,\alpha_p,\alpha_q\). Thus the only numerical task in the
regularized spectral deterministic equivalent is to compute these quantities
at the quadrature nodes; no high-dimensional optimization remains.

\paragraph{Recovering the population calculation of \mysec{specpop}.}
The population spectral estimator of \mysec{specpop} is the
zero-aspect-ratio specialization of Theorem~\ref{thm:spec-ridge}. Write
$
  D(\rho)=1+\zeta+s \rho(1-\rho).
$
When \(\alpha_p=\alpha_q=0\),  
\eq{omega-zeta-hd} gives
\(
  \omega(\rho)=1+\zeta,
\)
and the cross-degrees-of-freedom term in \eq{df2-zeta-hd} gives
\(
  \df_2(\rho,\sigma)=0.
\)
Moreover \(R=s\), and therefore
\(
  \kappa(\rho)
  =
  1+\frac{s \rho(1-\rho)}{1+\zeta}
  =
  \frac{D(\rho)}{1+\zeta}.
\)
Substituting these identities into the deterministic equivalents
Eqs.~\eqref{eq:h-zeta-hd}-(\ref{eq:k-zeta-hd}) yields
\[
  \ell_\Delta(\rho)
  =
  \frac{s}{D(\rho)},
  \qquad
  m(\rho)
  =
  \frac{\rho s}{D(\rho)},
  \qquad
  k(\rho,\sigma)
  =
  \frac{s}{D(\rho)D(\sigma)} .
\]
These are exactly the contractions generated by the population coefficient
in \eq{u-pop-ridge}. Indeed, since \(\mu(\rho)=\rho\Delta\) and
\(
  \beta_0(\rho)
  =
  \frac{\Delta}{D(\rho)},
\)
we have
\(
  \Delta^\top\beta^{(0)} (\rho)=\ell_\Delta(\rho),
  \mu(\rho)^\top\beta^{(0)} (\rho)=m(\rho),
  \beta^{(0)} (\rho)^\top\beta^{(0)} (\sigma)=k(\rho,\sigma).
\)
Thus Theorem~\ref{thm:spec-ridge} recovers the affine population functions
of \mysec{specpop}: 
\(
  u^{(0)} (\rho)(x)
  =
  \beta^{(0)} (\rho)^\top(x-\mu(\rho))
  =
  \frac{\Delta^\top x-\rho s}{D(\rho)} .
\)
The same specialization also recovers the population quadratic potentials.

\subsection[Integral and quadrature formulas for criteria J, K, and L]{Integral and quadrature formulas for criteria \(\mathcal J\), \(\mathcal K\), and \(\mathcal L\)}
\label{sec:chi-quadrature}

This subsection records the almost-sure deterministic limits of the population scores
obtained after fitting the empirical spectral estimator. Define
\[
 d(\rho)=2\rho(1-\rho),
 \qquad
 b(\rho)=2(1-\rho)(1+\rho m(\rho)).
\]
In the deterministic limit, write
\[
 v_{\rm spec}(x)=x^\top A_vx+\ell_v^\top x+c_v.
\]
Theorem~\ref{thm:spec-ridge} and dominated convergence give the following almost-sure
limits of the empirical trace, signal-quadratic, linear, and intercept terms:
\begin{align*}
 \tr(A_v)
 &=-\frac12\int_0^1 d(\rho)k(\rho,\rho)\,\dd\rho,
 \\
 \Delta^\top A_v\Delta
 &=-\frac12\int_0^1 d(\rho)\ell_\Delta(\rho)^2\,\dd\rho,
 \\
 \ell_v^\top\Delta
 &=\int_0^1 b(\rho)\ell_\Delta(\rho)\,\dd\rho,
 \\
 c_v
 &=\int_0^1 2(1-\rho)
 \left(-m(\rho)-\frac{\rho}{2}m(\rho)^2\right)\dd\rho.
\end{align*}
Most of the difficulty comes from the log-determinant term in \eq{J-quad}.

\paragraph{Log-determinant term.}  
Let \(\mathcal T\) be the integral operator on \(L_2([0,1])\) with kernel
\(k(\rho,\sigma)\), and let \(\mathcal D\) be multiplication by \(d(\rho)=2\rho(1-\rho)\).
The determinant and inverse-quadratic terms are
\begin{align*}
 \log\det(I-2A_v)
 &=
 \log\det(I+\mathcal D\mathcal T)
 =
 \log\det(I+\mathcal D^{1/2}\mathcal T\mathcal D^{1/2}),
 \\
 \ell_v^\top(I-2A_v)^{-1}\ell_v
 &=
 \left\langle b,(I+\mathcal T\mathcal D)^{-1}\mathcal T b
 \right\rangle_{L_2([0,1])}.
\end{align*}
Here \(\det\) is the Fredholm determinant~\cite{SimonTraceIdeals2005}, with the operator
\(\mathcal D^{1/2}\mathcal T\mathcal D^{1/2}\) that is positive trace class, with trace
\(
 \int_0^1 d(\rho)k(\rho,\rho)\,\dd\rho.
\)
The identities above are the continuum versions of the matrix determinant lemma and
Woodbury identity. Formally,
\[
 I-2A_v=I+B\mathcal D B^\ast,
 \qquad
 (Bf)=\int_0^1\beta(\rho)f(\rho)\,\dd\rho,
 \qquad
 \mathcal T=B^\ast B.
\]
The empirical kernels converge uniformly almost surely to \(k\) by
Theorem~\ref{thm:spec-ridge}; the corresponding positive trace-class operators converge
in trace norm, so the Fredholm determinants and inverse-quadratic terms above are the
almost-sure limits of their empirical analogues.

The scalar below is the population log-normalizer \(\log Z_q(v_{\rm spec})\). The continuum log-normalizer and scores are
\begin{align}
 \Lambda_{\rm spec}
 &=
 c_v-\frac12\log\det(I+\mathcal D\mathcal T)
 +\frac12
 \left\langle b,(I+\mathcal T\mathcal D)^{-1}\mathcal T b\right\rangle_{L_2([0,1])},
 \label{eq:Lambda-spec-cont}\\
 \J_{\rm spec}
 &=
 \tr(A_v)+\Delta^\top A_v\Delta+\ell_v^\top\Delta
 +\frac12\log\det(I+\mathcal D\mathcal T)
 -\frac12
 \left\langle b,(I+\mathcal T\mathcal D)^{-1}\mathcal T b\right\rangle_{L_2([0,1])},
 \label{eq:J-spec-cont}\\
 \K_{\rm spec}
 &=
 \tr(A_v)+\Delta^\top A_v\Delta+\ell_v^\top\Delta+c_v+1-e^{\Lambda_{\rm spec}}.
 \label{eq:K-spec-cont}
\end{align}
Equivalently, for the fitted empirical spectral potential
\(\hat v_{{\rm spec}}\),
\[
 \log Z_q(\hat v_{{\rm spec}})
 \xrightarrow{\mathrm{a.s.}}
 \Lambda_{\rm spec},
 \qquad
 \J(\hat v_{{\rm spec}})
 \xrightarrow{\mathrm{a.s.}}
 \J_{\rm spec},
 \qquad
 \K(\hat v_{{\rm spec}})
 \xrightarrow{\mathrm{a.s.}}
 \K_{\rm spec}.
\]
 
\paragraph{Criterion \(\mathcal L\).}
For the spectral pair \((v,w)\), write the deterministic companion potential as
\[
 w(y)=y^\top A_wy+\ell_w^\top y+c_w.
\]
Using the same functions \(m(\rho)\), \(k(\rho,\rho)\), and \(k(\rho,\sigma)\), its
continuum coefficients satisfy
\begin{align*}
 A_w
 &=-\int_0^1(1-\rho)^2\beta(\rho)\beta(\rho)^\top\,\dd\rho,\\
 \ell_w
 &=\int_0^1 2(1-\rho)\{-1+(1-\rho)m(\rho)\}\beta(\rho)\,\dd\rho,\\
 c_w
 &=\int_0^1 2(1-\rho)
 \Big(m(\rho)-\frac{1-\rho}{2}m(\rho)^2\Big) \dd\rho.
\end{align*}
Therefore
\(
 \tr(A_w)=-\int_0^1(1-\rho)^2k(\rho,\rho)\,\dd\rho, 
\) and
Proposition~\ref{prop:quadratic-L-scores} gives the continuum two-potential score
\begin{align}
 \mathcal L_{\rm spec}
 &=
 \tr(A_v)+\Delta^\top A_v\Delta+\ell_v^\top\Delta+c_v+\tr(A_w)+c_w.
 \label{eq:L-spec-cont}
 \end{align}
Moreover, for the fitted empirical spectral pair
\((\hat v_{{\rm spec}},\hat w_{{\rm spec}})\),
\(
 \mathcal L(\hat v_{{\rm spec}},\hat w_{{\rm spec}})
 \xrightarrow{\mathrm{a.s.}}
 \mathcal L_{\rm spec}.
\)

\paragraph{Quadrature.}
For any fixed deterministic quadrature rule, the corresponding finite-quadrature
versions of all displayed quantities converge almost surely by the same uniform
limits in Theorem~\ref{thm:spec-ridge}. For the explicit quadrature formulas used in
simulations, see Appendix~\ref{sec:quadas}.

\section{Weak signal with proportionally small aspect ratios}
\label{sec:weak-signal-proportional-small-alpha}
\label{sec:weak}

This section gives two local comparisons between affine variational fitting and continuum spectral fitting.  The first keeps the signal strength
\(s=\|\Delta\|^2\) fixed and sends \((\alpha_p,\alpha_q)\) to zero.  This is a fixed-signal expansion and keeps only terms that are linear in the two aspect ratios.  The second is the weak-signal scaling
\(s\downarrow0\), \(\alpha_p=s\beta_p\), \(\alpha_q=s\beta_q\), where quadratic terms in \((\alpha_p,\alpha_q)\) also contribute at order \(s^2\).  Note that these two limits are not interchangeable.

\subsection{Fixed signal and small aspect ratios}
\label{subsec:fixed-signal-small-alpha}

In this subsection \(s>0\) is fixed and
\[
    \alpha_p\downarrow0,\qquad \alpha_q\downarrow0.
\]
For a criterion \(\mathcal A \in \{ \mathcal{J}, \mathcal{K}, \mathcal{L}\} \), write the excess risk expansion as
\begin{equation}
\label{eq:fixed-signal-alpha-expansion-template-all-criteria}
    \mathcal R_{\mathcal A,\mathcal{T}}(s,\alpha_p,\alpha_q)
    =\mathcal R_{\mathcal A,\mathcal{T}}^{(0)}(s)
    +C_{\mathcal A,p,\mathcal{T}}(s)\alpha_p
    +C_{\mathcal A,q,\mathcal{T}}(s)\alpha_q
    +o(\alpha_p+\alpha_q),
\end{equation}
where \(\mathcal{T}\in\{\mathrm{var},\mathrm{spec}\}\) and
$\mathcal R_{\mathcal A,\mathcal{T}}^{(0)}$ is the population criterion from \mysec{pop-bench}.  Criterion \(\mathcal L\) is the two-potential spectral criterion, and is therefore only used for the spectral estimator.

We start with the variational estimator, for which we use the classical M-estimation proof~\cite{van2000asymptotic}  (we could also have used expansions of formulas from Theorem~\ref{thm:var-ridge}), with a key dependence in $e^s$.
\begin{proposition}[Fixed-signal small-aspect expansion, variational estimator]
\label{prop:fixed-signal-var-JK-first-order}
For the unregularized affine variational estimator,
\begin{align*}
    \mathcal R_{\mathcal J,\mathrm{var}}(s,\alpha_p,\alpha_q)
    &=\frac12\alpha_p+\frac12 e^s\alpha_q+o(\alpha_p+\alpha_q),\\
    \mathcal R_{\mathcal K,\mathrm{var}}(s,\alpha_p,\alpha_q)
    &=\frac12\alpha_p+\frac12 e^s\alpha_q+o(\alpha_p+\alpha_q). \notag
\end{align*}
Thus \(\mathcal R_{\mathcal J,\mathrm{var}}^{(0)}=\mathcal R_{\mathcal K,\mathrm{var}}^{(0)}=0\),
and \(
    C_{\mathcal J,p,\mathrm{var}}=C_{\mathcal K,p,\mathrm{var}}=\frac12,
    C_{\mathcal J,q,\mathrm{var}}=C_{\mathcal K,q,\mathrm{var}}=\frac12 e^s .
\)
\end{proposition}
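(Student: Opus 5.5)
We follow the classical M-estimation route for fixed $m$ (equivalently, a first-order expansion in $1/n_p$, $1/n_q$), extended to the proportional regime by tracking traces. The goal is to show that the excess risks are driven by the covariance of $\hat\theta-\Delta$.

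\emph{Step 1: reduce both risks to a quadratic error.} By Proposition~\ref{prop:affine-scores}, for $v=\theta^\top x+c$ we have $\mathcal R_{\J}(v)=\frac{s}{2}-\theta^\top\Delta+\frac12\|\theta\|^2=\frac12\|\theta-\Delta\|^2$. For $\mathcal R_\K$, the decomposition $\mathcal R_\K=\mathcal R_\J+\Zq-\log\Zq-1$ applies with $\log\Zq(\hat v_{\rm var})=\hat c+\frac12\|\hat\theta\|^2$. Since $z-\log z-1$ is quadratic in $\log z$ near $z=1$, it suffices to show that $\hat c+\frac12\|\hat\theta\|^2=O_\P((\alpha_p+\alpha_q)^{1/2})$. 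Its square then contributes at order $\alpha_p+\alpha_q$, and we must check that this contribution vanishes to first order, or else compute it. Note that $\hat c$ is the empirical normalizer, whereas the population normalizer would be $-\frac12\|\hat\theta\|^2$. Their difference is $\log\frac1{n_q}\sum_j e^{\hat\theta^\top y_j}-\frac12\|\hat\theta\|^2$. For a single fixed direction this is $O_\P(n_q^{-1/2})$, so its square is $O(1/n_q)=o(\alpha_q)$ when $m\to\infty$. The dependence of $\hat\theta$ on the $y_j$ shifts the mean only by $O(m/n_q)$ in $\log$, which contributes $O(\alpha_q^2)$ after squaring. This gives $\mathcal R_\K=\mathcal R_\J+o(\alpha_p+\alpha_q)$.

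\emph{Step 2: linearize the estimator.} The first-order condition of \eqref{eq:Jhat-ridge} with $\tau=0$ is $\hat\mu_p=\sum_j w_j y_j$, with softmax weights $w_j\propto e^{\hat\theta^\top y_j}$. Expanding around $\Delta$ uses the population tilted mean $\E_q[ye^{\Delta^\top y}]/\E_q[e^{\Delta^\top y}]=\Delta$ and the tilted covariance $I$. Writing $\hat\mu_p=\Delta+n_p^{-1/2}G_p$, this gives
\[
\hat\theta-\Delta\approx n_p^{-1/2}G_p-\frac{1}{n_q}\sum_j \big(e^{\Delta^\top y_j-s/2}(y_j-\Delta)\big),
\]
the standard sandwich form. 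The second term has covariance $n_q^{-1}\E_q[e^{2\Delta^\top y-s}(y-\Delta)(y-\Delta)^\top]$. Computing this Gaussian moment gives $e^{s}(I+\Delta\Delta^\top)/n_q$: indeed $e^{2\Delta^\top y-s}=e^{s}\cdot(\text{density ratio to }\Normal(2\Delta,I))$, and under $\Normal(2\Delta,I)$ we have $y-\Delta\sim\Normal(\Delta,I)$. Taking traces, $\E\|\hat\theta-\Delta\|^2\approx m/n_p+e^s(m+s)/n_q\to\alpha_p+e^s\alpha_q$. The rank-one term $\Delta\Delta^\top$ is negligible after division by $n_q$ as $m\to\infty$. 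Multiplying by $\frac12$ gives the claim.

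\emph{Step 3: justify the expansion.} This is the main obstacle, since the remainder must be $o(\alpha_p+\alpha_q)$ uniformly as $m$ grows proportionally and the expansion is not classical. I would try two routes. The first is to verify directly that Theorem~\ref{thm:var-ridge} with $\tau=0$ (valid in the feasible region of \mysec{zero-ridge-var}, which contains small $\alpha$'s) reproduces these constants. Perturbing the fixed point $A=e^{\xi H-\xi^2/2}$, $B=\xi=\sqrt s$ at $\alpha=0$ to first order gives $B^2=s+\alpha_p+\alpha_q e^{s}$, and then $\mathcal J_{\rm var}=\xi s/B-\xi^2/2$. Expanding via the implicit function theorem on the three scalar equations should yield $\mathcal R_\J=\frac12\alpha_p+\frac12e^s\alpha_q+o(\cdot)$, with $\E[A^2]=e^{s}$ supplying the $e^s$ factor. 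I would commit to this route because it inherits rigor from Theorem~\ref{thm:var-ridge}, and I would use the M-estimation heuristic of Step 2 only as a cross-check. The delicate part is the nondegeneracy of the Jacobian at $\alpha=0$ and the analogous first-order expansion of $z$, which should show $z=o(1)$ with $z^2=o(\alpha)$ for the $\K$ statement.
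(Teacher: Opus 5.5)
Your proposal is correct, and your Step~2 is exactly the paper's proof. The paper linearizes the score equation as $\hat\theta-\Delta\approx(\hat\mu_p-\Delta)-\frac1{n_q}\sum_j e^{\Delta^\top y_j-s/2}(y_j-\Delta)$ and uses $\E_q[e^{2\Delta^\top y-s}\|y-\Delta\|^2]=e^s(m+s)$. It handles $\mathcal K$ by asserting $\log Z_q(\hat v)=O_\P(\alpha_p+\alpha_q)$. It never controls the remainder uniformly in $m$, which is the gap you flag.

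Where you differ is Step~3: you commit to expanding the zero-ridge fixed point of Theorem~\ref{thm:var-ridge}, an option the paper mentions but does not take. That route works directly with the deterministic limit that $\mathcal R_{\mathcal J,\mathrm{var}}(s,\alpha_p,\alpha_q)$ actually denotes. It therefore avoids interchanging $m\to\infty$ with a fixed-$m$ local expansion. Its cost is reliance on the $\tau=0$ extension asserted in \mysec{zero-ridge-var}, which applies here because $R_{\rm hull}^2(\alpha_q)\to\infty$ as $\alpha_q\to0$. The paper's route, by contrast, makes it transparent that $e^s$ is the variance inflation of the exponential weights.

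The steps you call delicate are in fact benign:
\begin{enumerate}
\item In the unknowns $(\eta,\xi,B)$, the mass, radius and alignment equations at $\alpha_p=\alpha_q=0$ have Jacobian determinant $-2B=-2\sqrt s\neq0$, so the implicit function theorem applies.
\item The quantity $\frac12(s+\xi^2-2\xi s/B)$ is stationary in $\xi$ at $\xi=s/B$, and $\xi-B=O(\alpha_p+\alpha_q)$ (in fact $\xi\approx B(1+e^s\alpha_q)$). So you only need $B^2=s+\alpha_p+e^s\alpha_q+O((\alpha_p+\alpha_q)^2)$, exactly as you wrote.
\item Multiply the stationarity equation by $A$, take expectations, and use the radius and alignment equations. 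This gives the exact identity $z=\eta+\xi^2/2$. The mass constraint then yields $z=e^s\alpha_q+O((\alpha_p+\alpha_q)^2)$, so $e^z-z-1$ is second order. This settles $\mathcal K$ more cleanly than your Step~1 heuristic.
\end{enumerate}
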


\begin{proof}
Let \(\hat\theta\) be the unregularized variational affine coefficient.  At \(\alpha_p=\alpha_q=0\), \(\hat\theta=\Delta\).  By Proposition~\ref{prop:affine-scores},
we have \(
    \mathcal R_{\mathcal J,\mathrm{var}}
    =\frac12\|\hat\theta-\Delta\|^2 .
\)
The score equation gives the local M-estimation expansion, with $o_\P$ and $O_\P$ notations in probability~\cite{van2000asymptotic} 
\[
    \hat\theta-\Delta
    = (\hat\mu_p-\Delta)
      -\frac1{n_q}\sum_{j=1}^{n_q}
      \exp(\Delta^\top y_j-s/2)(y_j-\Delta)
      +o_\P(n_p^{-1/2}+n_q^{-1/2}).
\]
The two samples are independent, \(\E[\|\hat\mu_p-\Delta\|^2]=m/n_p=\alpha_p\), and
\(
    \E_q\left[\exp(2\Delta^\top y-s)\|y-\Delta\|^2\right]
    = e^s(m+s).
\)
After division by \(n_q\), the term \(s/n_q\) is lower order while \(m/n_q\to\alpha_q\).  This proves the \(\mathcal J\) expansion.  For \(\mathcal K\), use the normalizer identity
\(
    \mathcal R_{\mathcal K}(v)=\mathcal R_{\mathcal J}(v)+Z_q(v)-\log Z_q(v)-1.
\)
For the empirically normalized variational representative, \(\log Z_q(\hat v)=O_\P(\alpha_p+\alpha_q)\), hence
\(Z_q(\hat v)-\log Z_q(\hat v)-1=O_\P((\alpha_p+\alpha_q)^2)\).  Therefore \(\mathcal J\) and \(\mathcal K\) have the same fixed-signal first-order expansion.
\end{proof}

For the spectral estimator, we can expand the results of Theorem~\ref{thm:spec-ridge} for $\alpha_p, \alpha_q$ close to zero, and derive closed form formulas for each $\rho$, that can then be integrated over $[0,1]$. Since the formulas are overly complicated, we only plot them in Figure~\ref{fig:fixed-signal-small-alpha-JKL-terms}.

Figure~\ref{fig:fixed-signal-small-alpha-JKL-terms} plots the three terms in \eq{fixed-signal-alpha-expansion-template-all-criteria}.  The variational estimator is correctly specified, so its population terms \(\mathcal R^{(0)}_{\mathcal J,\mathrm{var}}\) and \(\mathcal R^{(0)}_{\mathcal K,\mathrm{var}}\) are zero for \(\mathcal J\) and \(\mathcal K\) (while it is not for the spectral estimator).  Its \(\alpha_q\)-coefficient is \(e^s/2\), which is shown in the right panel and reflects the variance of empirical exponential normalization.  The spectral estimator has a nonzero population approximation term (left panel) but much weaker dependence on \(\alpha_q\) at larger fixed signal (middle and right panels).

\begin{figure}[t]
    \centering
    \includegraphics[width=.95\textwidth]{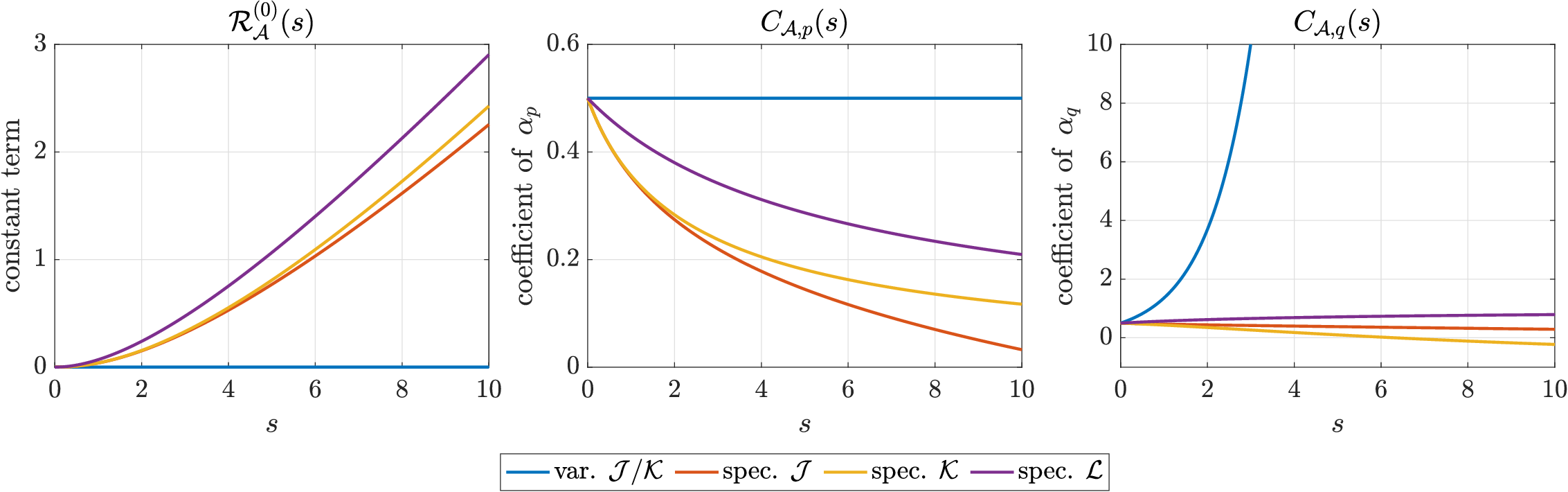} 
    % from make_section5_figures_three_criteria.m
    \caption{Fixed-signal first-order expansion of the unregularized risks under criteria \(\mathcal J\), \(\mathcal K\), and \(\mathcal L\) (denoted~\(\mathcal A\)).  Left: constant term \(\mathcal R_{\mathcal{A}}^{(0)}(s)\), middle: coefficient \(C_{\mathcal{A},p}(s)\) of \(\alpha_p\), right:  coefficient \(C_{\mathcal{A},q}(s)\) of \(\alpha_q\).  Criterion \(\mathcal L\) is shown only for the spectral estimator. }
    \label{fig:fixed-signal-small-alpha-JKL-terms}
\end{figure}

\subsection{Weak signal with proportionally small aspect ratios}
\label{subsec:weak-signal-proportional-small-alpha}

We now consider the joint weak-signal/small-aspect regime
\[
    s\downarrow0,
    \qquad
    \alpha_p=s\beta_p,
    \qquad
    \alpha_q=s\beta_q,
\]
where \(\beta_p,\beta_q\geqslant 0\) are fixed.  Write \(\beta=\beta_p+\beta_q\).  For every fixed finite pair \((\beta_p,\beta_q)\), the zero-ridge feasibility constraints are inactive for all sufficiently small \(s\): the spectral endpoint conditions \(\alpha_p<1\), \(\alpha_q<1\) hold, and the variational convex-hull radius diverges as \(\alpha_q\downarrow0\).  We first start with the unregularized estimators.

\begin{proposition}[Weak-signal expansion of the unregularized risks]
\label{prop:weak-signal-unregularized}
Under \(\alpha_p=s\beta_p\), \(\alpha_q=s\beta_q\), the unregularized affine variational estimator satisfies
\begin{align}
\label{eq:weak-var-unreg-JK}
    \mathcal R_{\mathcal J,\mathrm{var}}
    &=\frac{\beta}{2}s+\frac{\beta_q(1+3\beta)}{2}s^2+o(s^2),\\
    \mathcal R_{\mathcal K,\mathrm{var}}
    &=\frac{\beta}{2}s+
      \Big(\frac{\beta_q(1+3\beta)}{2}+\frac{\beta_q^2}{2}\Big)s^2+o(s^2). \notag
\end{align}
The unregularized spectral estimator satisfies
\begin{align}
\label{eq:weak-spec-unreg-JKL}
    \mathcal R_{\mathcal J,\mathrm{spec}}
    &=\frac{\beta}{2}s+
     \Big(\frac1{36}-\frac{2\beta_p}{9}-\frac{\beta_q}{18}
      +\frac{\beta_p^2}{12}+\frac{\beta_p\beta_q}{3}+\frac{\beta_q^2}{4}\Big) s^2+o(s^2),\\
    \mathcal R_{\mathcal K,\mathrm{spec}}
    &=\frac{\beta}{2}s+
      \Big(\frac1{36}-\frac{2\beta_p}{9}-\frac{\beta_q}{18}
      +\frac{\beta_p^2}{12}+\frac{\beta_p\beta_q}{3}+\frac{3\beta_q^2}{4}\Big) s^2+o(s^2), \notag\\
    \mathcal R_{\mathcal L,\mathrm{spec}}
    &=\frac{\beta}{2}s+
     \Big(\frac1{12}-\frac{\beta_p}{12}+\frac{\beta_q}{12}
      +\frac{\beta_p^2}{6}+\frac{\beta_p\beta_q}{2}+\frac{5\beta_q^2}{6}\Big) s^2+o(s^2). \notag
\end{align}
Consequently,
\begin{equation}
\label{eq:Qunreg-weak}
    \mathcal R_{\mathcal J,\mathrm{spec}}-\mathcal R_{\mathcal J,\mathrm{var}}
    =\mathcal R_{\mathcal K,\mathrm{spec}}-\mathcal R_{\mathcal K,\mathrm{var}}
    =\frac{s^2}{36}Q_{\rm unreg}(\beta_p,\beta_q)+o(s^2),
\end{equation}
where $
    Q_{\rm unreg}(\beta_p,\beta_q)
    =1-8\beta_p-20\beta_q+3\beta_p^2-42\beta_p\beta_q-45\beta_q^2.
$
Thus the unregularized spectral estimator is favored to second order for criteria \(\mathcal J\) and \(\mathcal K\) when \(Q_{\rm unreg}<0\), and the affine variational estimator is favored when \(Q_{\rm unreg}>0\).
\end{proposition}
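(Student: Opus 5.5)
Both expansions will be obtained by specializing the deterministic equivalents to $\tau=0$ and $\zeta=0$ and Taylor expanding in $s$ along the ray $\alpha_p=s\beta_p$, $\alpha_q=s\beta_q$. This is legitimate because, for small $s$, we are inside both feasibility regions: $\alpha_q<1/2$, $s+\alpha_p<R_{\rm hull}^2(\alpha_q)$, and $\alpha_p,\alpha_q<1$.

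\emph{Variational part.} I will work with the zero-ridge form of Theorem~\ref{thm:var-ridge}. With $\tau=0$ the equations become
\[
\log A+\tfrac{\xi\alpha_q}{B}A=\eta+\xi H,\qquad \E[A]=1,\qquad \E[AH]=B,\qquad B^2=s+\alpha_p+\alpha_q\E[A^2].
\]
All unknowns are small, so I will posit $\xi=\sqrt{s}(x_0+x_1 s+\dots)$ and expand $A=1+\xi H+\dots$ in powers of $\xi$ via the Lambert-type equation. To leading order, $\E[AH]=\xi$ and $B^2=s(1+\beta)$. This gives $\xi\approx B$, and hence $\mathcal R_{\J}=s/2-\xi s/B+\xi^2/2\approx\tfrac12(\xi-\sqrt s)^2+\dots$, whose leading term is $\beta s/2$, as expected. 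The $s^2$ coefficient requires carrying $\xi,B,\eta$ to relative order $s$, using Gaussian moments $\E[H^k e^{\xi H}]$. The term $\frac{\xi\alpha_q}{B}A$ contributes at relative order $s\beta_q$, and I expect it to produce the $\beta_q(1+3\beta)/2$ coefficient. For $\mathcal K$, I will expand $z=\E[A\log A]-\xi(s+\alpha_p)/B+\xi^2/2$ to order $s$. Then $e^z-z-1=z^2/2+O(s^3)$, and the extra $\beta_q^2 s^2/2$ should appear as $z\approx -\beta_q s$. A useful consistency check is that the first-order terms agree with Proposition~\ref{prop:fixed-signal-var-JK-first-order}, since $e^s\alpha_q=\alpha_q+s\alpha_q+\dots$.

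\emph{Spectral part.} Theorem~\ref{thm:spec-ridge} is stated for $\zeta>0$. I will use its $\zeta\downarrow0$ limit, which is justified on $\alpha_p,\alpha_q<1$ by Proposition~\ref{prop:chi-feasible} and continuity of the fixed-point equation. At $\zeta=0$, equation~\eqref{eq:omega-zeta-hd} is explicit to the order needed: $\omega(\rho)=1-\rho\alpha_p-(1-\rho)\alpha_q+O(s^2)$. I will then expand $\df_2=O(s)$, $\chi_2$, $\kappa(\rho)=1+\rho(1-\rho)R+O(s^2)$, and the functions $\ell_\Delta$, $m$, $k$ in powers of $s$, with polynomial coefficients in $\rho,\sigma,\beta_p,\beta_q$.

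These expansions are substituted into the integral formulas of \mysec{chi-quadrature}. Since $k=O(s)$, the Fredholm log-determinant can be expanded as $\log\det(I+\mathcal D\mathcal T)=\tr(\mathcal D\mathcal T)-\tfrac12\tr((\mathcal D\mathcal T)^2)+O(s^3)$. The second trace is $\iint d(\rho)d(\sigma)k(\rho,\sigma)^2$, which at leading order is $s^2\iint d\,d$ and is therefore explicit. Similarly, $\langle b,(I+\mathcal T\mathcal D)^{-1}\mathcal T b\rangle=\langle b,\mathcal T b\rangle-\langle b,\mathcal T\mathcal D\mathcal T b\rangle+\dots$. Every resulting integral is an integral of a polynomial in $\rho$ (and $\sigma$) over $[0,1]$, which produces the rational coefficients $1/36$, $2/9$, and so on. The $\mathcal K$ correction comes from $\Lambda_{\rm spec}$, and $\mathcal L$ follows from \eqref{eq:L-spec-cont} by direct integration.

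Subtracting the two expansions then gives \eqref{eq:Qunreg-weak}, with $Q_{\rm unreg}$ equal to $36$ times the difference of the $s^2$ coefficients.

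\emph{Main obstacle.} The main difficulty is bookkeeping: the $s^2$ coefficients need several quantities expanded to two orders, and first-order terms cancel massively since both risks start at $\beta s/2$. For the variational side, I would first do the expansion in the case $\beta_q=0$, where $A=e^{\eta+\xi H}$ is exactly lognormal and everything is in closed form. I would then treat the $\alpha_q A$ term perturbatively.
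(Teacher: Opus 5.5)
Your route is the paper's own: specialize the zero-ridge deterministic equivalents and Taylor-expand along $\alpha_p=s\beta_p$, $\alpha_q=s\beta_q$. The paper only records the outcome of these ``tedious computations''. As written, however, several explicit expansions you commit to are wrong, and they would change the $s^2$ coefficients.

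On the spectral side, dividing \eqref{eq:omega-zeta-hd} at $\zeta=0$ by $\omega$ gives $1=\frac{\rho}{\omega+\alpha_p\rho}+\frac{1-\rho}{\omega+\alpha_q(1-\rho)}$. Hence $\omega(\rho)=1-\alpha_p\rho^2-\alpha_q(1-\rho)^2+O(s^2)$, not $1-\rho\alpha_p-(1-\rho)\alpha_q$; the two agree only at $\rho\in\{0,1\}$. This correction enters $\ell_\Delta$ and $\chi_2$ at order $s^2$. Its net contribution to $\mathcal R_{\J,\mathrm{spec}}$ is $2s^2\beta\int_0^1(1-\rho)\{\beta_p\rho^2+\beta_q(1-\rho)^2\}\,d\rho$. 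Your version would, for example, turn $\beta_p^2/12$ into $\beta_p^2/4$. It also corrupts the $\mathcal L$ terms $\tr(A_v)$, $\tr(A_w)$, $c_v$, $c_w$.

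Likewise, $k(\rho,\sigma)=R+O(s^2)$ with $R=s(1+\beta)$, not $s$. So $\tfrac14\iint d(\rho)d(\sigma)k(\rho,\sigma)^2\,d\rho\,d\sigma=R^2/36+O(s^3)$, and replacing $R^2$ by $s^2$ shifts every $\beta$-dependent coefficient. A simplification worth using: $\tr(A_v)$ exactly cancels the first-order term $\tfrac12\int d\,k(\rho,\rho)$ of $\tfrac12\log\det(I+\mathcal D\mathcal T)$. So for $\J$ the kernel is needed to relative order $s$ only inside $\langle b,\mathcal T b\rangle$, where $k\approx R\,[1+\epsilon(\rho)+\epsilon(\sigma)+\alpha_p\rho\sigma+\alpha_q(1-\rho)(1-\sigma)-R\rho(1-\rho)-R\sigma(1-\sigma)]$ with $\epsilon=1-\omega$.

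On the variational side, the limit $\mathcal R_\J\to\tfrac12(s+\xi^2-2\xi s/B)$ is not $\tfrac12(\xi-\sqrt s)^2$. The vector $\hat\theta$ is not collinear with $\Delta$, and its orthogonal component is exactly what produces $\beta s/2$. At $\xi=B$ the risk is $\tfrac12(B^2-s)$, whereas $\tfrac12(B-\sqrt s)^2\approx\tfrac s2(\sqrt{1+\beta}-1)^2$. Writing $\xi=B(1+\delta)$ gives $\mathcal R_\J=\tfrac12(B^2-s)(1+2\delta)+O(s^3)$.
\begin{itemize}
\item Stein's identity applied to $\log A+cA=\eta+\xi H$, with $c=\xi\alpha_q/B$, gives $\E[AH]=\xi\,\E[A/(1+cA)]=\xi(1-c+O(s^2))$. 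Hence $\delta=\alpha_q+O(s^2)$.
\item $\E[A^2]=1+\xi^2+O(s^2)$ gives $B^2=s(1+\beta)+\beta_q(1+\beta)s^2+O(s^3)$.
\end{itemize}
Thus the lognormal variance in $\E[A^2]$ contributes $\beta_q(1+\beta)/2$ and the $cA$ term only $\beta_q\beta$, summing to $\beta_q(1+3\beta)/2$.

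Your sign for $z$ is also wrong. The KKT relations give exactly $z=\eta+\xi^2/2$. Since $A\le e^{\eta+\xi H}$ and $\E[A]=1$, this forces $z\ge0$, and indeed $z=\alpha_q+O(s^2)$, as in the paper. The sign is harmless for $e^z-z-1=z^2/2+O(s^3)$. The spectral normalizer is likewise $\Lambda_{\rm spec}=\alpha_q+O(s^2)$, which is why the $\J$ and $\K$ differences coincide. With these corrections your plan reproduces the stated $\J$ and $\K$ expansions and $Q_{\rm unreg}$.
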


\begin{proof}
This is a Taylor expansion of the zero-ridge deterministic equivalents.  For the variational estimator, the hard-limit equations of Theorem~\ref{thm:var-ridge} yield, after tedious computations,
\[
    \mathcal R_{\mathcal J,\mathrm{var}}
    =\frac{\alpha_p}{2}+\frac{e^s\alpha_q}{2}
      +\frac32\alpha_p\alpha_q+\frac32\alpha_q^2+o(s^2),
\]
\[
    \mathcal R_{\mathcal K,\mathrm{var}}
    =\frac{\alpha_p}{2}+\frac{e^s\alpha_q}{2}
      +\frac32\alpha_p\alpha_q+2\alpha_q^2+o(s^2),
\]
valid for \(\alpha_p,\alpha_q=O(s)\).  The difference is the normalizer penalty: if \(z=\log Z_q(\hat v_{\rm var})\), then \(z=\alpha_q+o(s)\) and \(e^z-z-1=\alpha_q^2/2+o(s^2)\).  Substituting \(\alpha_p=s\beta_p\), \(\alpha_q=s\beta_q\), and \(e^s=1+s+O(s^2)\) gives \eq{weak-var-unreg-JK}.

For the spectral estimator, we can expand the zero-ridge specialization of Theorem~\ref{thm:spec-ridge} and the score formulas in Eqs.~\eqref{eq:J-spec-cont}-\eqref{eq:L-spec-cont}.   Integrating the resulting polynomials in \(\rho\) gives \eq{weak-spec-unreg-JKL}.  Subtracting the variational expansion gives \eq{Qunreg-weak}.
\end{proof}

We now consider the regularized estimators, with their optimized regularization parameter.
\begin{proposition}[Weak-signal expansion with optimized ridge]
\label{prop:weak-signal-optimized-ridge}
Let the variational ridge be parameterized by \(t=(1+\tau)^{-1}\in[0,1]\), and the spectral ridge by \(\gamma=(1+\zeta)^{-1}\in[0,1]\).  For fixed \(t\), we have
\begin{align*}
    \mathcal R_{\mathcal J,\mathrm{var}}
    &=\frac{s}{2}((1-t)^2+\beta t^2)
      +\beta_qt^3\Big(\frac32(1+\beta)t-1\Big) s^2+o(s^2),\\
    \mathcal R_{\mathcal K,\mathrm{var}}
    &=\mathcal R_{\mathcal J,\mathrm{var}}+\frac{\beta_q^2t^2}{2}s^2+o(s^2). \notag
\end{align*}
The leading shrinkage problem gives
\(
    t^\ast_{\mathcal J,\mathrm{var}}=t^\ast_{\mathcal K,\mathrm{var}}
    =\frac1{1+\beta}+O(s),  
    \tau^\ast_{\mathcal J,\mathrm{var}}=\tau^\ast_{\mathcal K,\mathrm{var}}
    =\beta+O(s),
\)
and therefore
\begin{align}
\label{eq:weak-var-opt-ridge}
    \mathcal R^{\rm opt}_{\mathcal J,\mathrm{var}}
    &=\frac{\beta}{2(1+\beta)}s+\frac{\beta_q}{2(1+\beta)^3}s^2+o(s^2),\\
    \mathcal R^{\rm opt}_{\mathcal K,\mathrm{var}}
    &=\frac{\beta}{2(1+\beta)}s+
     \Big( \frac{\beta_q}{2(1+\beta)^3}
      +\frac{\beta_q^2}{2(1+\beta)^2}\Big) s^2+o(s^2). \notag
\end{align}
For the spectral estimator, for each criterion \(\mathcal A\in\{\mathcal J,\mathcal K,\mathcal L\}\),
\[
    \gamma^\ast_{\mathcal A,\mathrm{spec}}=\frac1{1+\beta}+O(s),
    \qquad
    \zeta^\ast_{\mathcal A,\mathrm{spec}}=\beta+O(s).
\]
Substituting \(\gamma=(1+\beta)^{-1}\) gives
\begin{align}
\label{eq:weak-spec-opt-ridge}
    \mathcal R^{\rm opt}_{\mathcal J,\mathrm{spec}}
    &=\frac{\beta}{2(1+\beta)}s+\frac{3\beta_p+9\beta_q+1}{36(1+\beta)^3}s^2+o(s^2),\\
    \mathcal R^{\rm opt}_{\mathcal K,\mathrm{spec}}
    &=\frac{\beta}{2(1+\beta)}s+
      \left\{\frac{3\beta_p+9\beta_q+1}{36(1+\beta)^3}
      +\frac{(2\beta_q-\beta_p)^2}{18(1+\beta)^2}\right\}s^2+o(s^2), \notag\\
    \mathcal R^{\rm opt}_{\mathcal L,\mathrm{spec}}
    &=\frac{\beta}{2(1+\beta)}s +
      \frac{\beta_p^3-\beta_p^2\beta_q+\beta_p^2+
      \beta_p\beta_q^2-2\beta_p\beta_q+2\beta_p+
      3\beta_q^3+3\beta_q^2+4\beta_q+1}
      {12(1+\beta)^3}s^2+o(s^2). \notag
\end{align}
For criterion \(\mathcal J\),
\begin{equation*}
    \mathcal R^{\rm opt}_{\mathcal J,\mathrm{spec}}
    -\mathcal R^{\rm opt}_{\mathcal J,\mathrm{var}}
    =\frac{s^2}{36(1+\beta)^3}Q_{\mathcal J,{\rm opt}}(\beta_p,\beta_q)+o(s^2),
\end{equation*}
where
$
    Q_{\mathcal J,{\rm opt}}(\beta_p,\beta_q)=1+3\beta_p-9\beta_q.
$
For criterion \(\mathcal K\),
\begin{equation*}
    \mathcal R^{\rm opt}_{\mathcal K,\mathrm{spec}}
    -\mathcal R^{\rm opt}_{\mathcal K,\mathrm{var}}
    =\frac{s^2}{36(1+\beta)^3}Q_{\mathcal K,{\rm opt}}(\beta_p,\beta_q)+o(s^2),
\end{equation*}
where
$
    Q_{\mathcal K,{\rm opt}}(\beta_p,\beta_q)
    =1+3\beta_p-9\beta_q
      +2(1+\beta_p+\beta_q)(\beta_p^2-4\beta_p\beta_q-5\beta_q^2).
$
Optimized spectral fitting is favored under criterion \(\mathcal A\in\{\mathcal J,\mathcal K\}\) when the corresponding \(Q_{\mathcal A,{\rm opt}}\) is negative.
\end{proposition}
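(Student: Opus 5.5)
The plan is to Taylor-expand the two deterministic equivalents (Theorem~\ref{thm:var-ridge} for the variational estimator, Theorem~\ref{thm:spec-ridge} with the integral formulas of \mysec{chi-quadrature} for the spectral one) in the scaling $\alpha_p=s\beta_p$, $\alpha_q=s\beta_q$, $s\downarrow0$. The ridge parameters $t=(1+\tau)^{-1}$ and $\gamma=(1+\zeta)^{-1}$ are held fixed during the expansion, and only afterwards is each risk optimized over $t$ or $\gamma$.

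\emph{Variational side.} With $\tau$ fixed and $s\to0$, the optimizer $A$ of \eqref{eq:ridge-cgmt-main} is close to $1$. I will write $\log A=\eta+\xi H-cA$ with $c=\xi\alpha_q/B$. Both $\xi=O(\sqrt s)$ and $B=O(\sqrt s)$, so $c=O(\sqrt s)$ and $\xi H=O(\sqrt s)$, and I expand $A=1+a_1+a_2+a_3+\dots$ in powers of $\sqrt s$. The constraint $\E[A]=1$ fixes $\eta$ order by order. I then compute $\E[AH]$, $\E[A^2]$, and $\E[A\log A]$ to the order needed.

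Substituting into $B^2=s+\alpha_p+\alpha_q\E[A^2]$ and $\E[AH]=B-\tau\xi$ determines $\xi$ and $B$. At leading order, $\E[AH]\approx\xi$ and $B\approx\sqrt{s(1+\beta)}$, which give $\xi\approx tB$. From these I form $\mathcal J_{\rm var}=\xi s/B-\xi^2/2$ and $z$, and then $\mathcal R_{\mathcal K}=\mathcal R_{\mathcal J}+e^z-z-1\approx\mathcal R_{\mathcal J}+z^2/2$, where $z\approx t\alpha_q$ at leading order. This yields the $\beta_q^2t^2s^2/2$ correction.

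\emph{Optimization over the ridge.} The leading term $\frac s2((1-t)^2+\beta t^2)$ is minimized at $t^\ast=1/(1+\beta)$. The $O(s)$ shift of $t^\ast$ changes the risk only at order $s^3$, by stationarity. I therefore plug $t=1/(1+\beta)$ into the $s^2$ coefficient, which gives \eqref{eq:weak-var-opt-ridge}. As a consistency check, setting $t=1$ must reproduce Proposition~\ref{prop:weak-signal-unregularized}.

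\emph{Spectral side.} Expanding \eqref{eq:omega-zeta-hd} in the small aspect ratios gives $\omega(\rho)=(1+\zeta)+\omega_1(\rho)s+\omega_2(\rho)s^2+\dots$, with explicit polynomials in $\rho,\beta_p,\beta_q,\gamma$. The second-order term $\df_2=O(s)$ enters $k$ at the required order.

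Since $\ell_\Delta,m=O(s)$ and $k=O(s)$, the operator $\mathcal D\mathcal T$ is $O(s)$. I will use $\log\det(I+\mathcal D\mathcal T)=\tr(\mathcal D\mathcal T)-\frac12\tr((\mathcal D\mathcal T)^2)+O(s^3)$, together with the matching expansion of $\langle b,(I+\mathcal T\mathcal D)^{-1}\mathcal T b\rangle$. At leading order, $k(\rho,\sigma)\approx s(1+\beta)\gamma^2$ is a constant kernel, so the trace-square term is an elementary double integral.

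All remaining integrals are polynomial in $\rho$. Collecting terms gives the $\mathcal J$, $\mathcal K$, $\mathcal L$ risks as $\frac s2((1-\gamma)^2+\beta\gamma^2)+s^2 P_{\mathcal A}(\gamma,\beta_p,\beta_q)$. The same stationarity argument then gives $\gamma^\ast=1/(1+\beta)+O(s)$, and substituting yields \eqref{eq:weak-spec-opt-ridge}. The differences $Q_{\mathcal J,\rm opt}$ and $Q_{\mathcal K,\rm opt}$ follow by subtracting the two sides over the common factor $36(1+\beta)^3$.

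\emph{Main obstacle.} I expect the hardest step to be the second-order bookkeeping. On the variational side, the $s^2$ term needs $A$ expanded to third order in $\sqrt s$, including the cross terms between $cA$ and $\xi H$. On the spectral side, the $O(s^2)$ contributions of $\omega_1$, of $\df_2$, and of the Fredholm trace-square term must all be combined correctly across the linear, intercept, and log-normalizer pieces.

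To guard against errors, I will run this bookkeeping symbolically in $\rho$ and check it against two limits. At $\gamma=1$ it must recover Proposition~\ref{prop:weak-signal-unregularized}, and at $\beta=0$ it must recover the population formulas of \mysec{specpop} expanded in $s$.
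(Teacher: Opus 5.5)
Your proposal follows the paper's argument: expand the two deterministic equivalents (Theorems~\ref{thm:var-ridge} and~\ref{thm:spec-ridge}) under $\alpha_p=s\beta_p$, $\alpha_q=s\beta_q$ at fixed ridge, note that all criteria share the leading shrinkage term $\frac s2((1-t)^2+\beta t^2)$, whose minimizer $1/(1+\beta)$ is interior, and use stationarity so that only this leading-order optimizer is needed in the $s^2$ coefficient. One harmless slip: $c=\xi\alpha_q/B\approx t\alpha_q$ is $O(s)$, not $O(\sqrt s)$, which simplifies your bookkeeping; one gets $\E[AH]=\xi(1-c)+O(s^{5/2})$, hence $\xi\approx tB(1+t^2\alpha_q)$, which yields the stated $\beta_q t^3(\frac32(1+\beta)t-1)s^2$ term.
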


\begin{proof}
For the variational estimator, substitute \(t=(1+\tau)^{-1}\) in the regularized fixed-signal expansion and set \(\alpha_p=s\beta_p\), \(\alpha_q=s\beta_q\).  The order-\(s\) term is
\(
    \frac12\{(1-t)^2+\beta t^2\}s,
\)
whose unique minimizer is \(t=(1+\beta)^{-1}\).  Since this minimizer is interior for finite \(\beta\), the order-\(s\) optimizer is enough to evaluate the risk up to order \(s^2\), giving \eq{weak-var-opt-ridge}.  For the spectral estimator, expand Theorem~\ref{thm:spec-ridge} with \(\gamma=(1+\zeta)^{-1}\), \(\alpha_p=s\beta_p\), and \(\alpha_q=s\beta_q\).  The leading term is the same scalar shrinkage risk with \(\gamma\) in place of \(t\), so \(\gamma^\ast=(1+\beta)^{-1}+O(s)\).  Substitution into the order-\(s^2\) coefficients from the spectral score formulas gives \eq{weak-spec-opt-ridge}.  The formulas for \(Q_{\mathcal J,{\rm opt}}\) and \(Q_{\mathcal K,{\rm opt}}\) follow by subtraction.
\end{proof}

Figure~\ref{fig:weak-signal-separating-curves} shows the second-order separating curves defined by $Q_{\mathcal K,{\rm opt}}(\beta_p,\beta_q)=0$, $Q_{\mathcal J,{\rm opt}}(\beta_p,\beta_q)$ and $Q_{\rm unreg}(\beta_p,\beta_q)=0$.  The unregularized curve is common to criteria \(\mathcal J\) and \(\mathcal K\), while the optimized-ridge curves differ because \(\mathcal K\) includes the population normalizer penalty.
We see the advantage of the spectral estimator when $\beta_q$ gets larger (fewer observations), in particular for the criterion $\mathcal{K}$ (which characterizes the proper model normalization).  While we consider here small $s$, $\alpha_p$, $\alpha_q$, with explicit formulas, we consider in \mysec{comparison} plots of the deterministic equivalents for larger values and similar conclusions.

\begin{figure}[t]
    \centering
    \includegraphics[width=.4\textwidth]{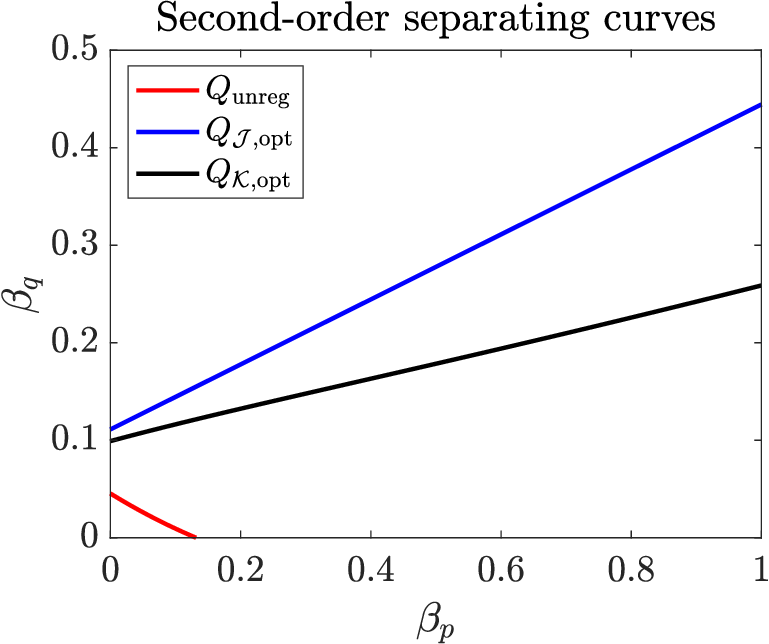}
    % from make_section5_figures_three_criteria.m
    \caption{Second-order separating curves in the weak-signal proportional regime \(\alpha_p=s\beta_p\), \(\alpha_q=s\beta_q\).  The curves are the zero-level sets of \(Q_{\rm unreg}\), \(Q_{\mathcal J,{\rm opt}}\), and \(Q_{\mathcal K,{\rm opt}}\).  Spectral fitting is favored on the side where the corresponding \(Q\)-function is negative (above each curve); variational fitting is favored where it is positive (below each curve). }
    \label{fig:weak-signal-separating-curves}
\end{figure}

\section{Comparisons between estimators }
\label{sec:comparison}

The comparison depends on whether one asks about feasibility or population risk.  The regimes below are direct consequences of Proposition~\ref{prop:var-dual}, Proposition~\ref{prop:chi-feasible}, Theorems~\ref{thm:var-ridge} and~\ref{thm:spec-ridge}, and \mysec{zero-ridge-var}.

\paragraph{Feasibility-first unregularized regimes.}
We consider here the two unregularized (variational and spectral) estimators. Away from
the boundary cases
\(
    \alpha_q=\frac12,  \alpha_p=1,  \alpha_q=1,
\)
the zero-ridge variational estimator is finite with high probability if
and only if
\[
 \qquad
    \alpha_q<\frac12
    \quad\text{and}\quad
    s+\alpha_p<R^2_{\rm hull}(\alpha_q),
\]
whereas the unregularized continuum spectral full potential is finite
with high probability if and only if
\[
   \qquad
    \alpha_p<1
    \quad\text{and}\quad
    \alpha_q<1 .
\]
This leads to the phase diagram in \myfig{regimes}.  

\begin{figure}
\centering
\definecolor{bothfail}{HTML}{D9D9D9}
\definecolor{speconly}{HTML}{8FB9E8}
\definecolor{bothfeas}{HTML}{88C999}
\definecolor{varonly}{HTML}{F3C178}
\definecolor{boundaryred}{HTML}{9D1B1E}

\begin{tikzpicture}
\begin{axis}[
    width=12cm,
    height=12cm,
    xmin=0, xmax=2,
    ymin=0, ymax=2,
    axis lines=left,
    xlabel={$\alpha_p=m/n_p$},
    ylabel={$\alpha_q=m/n_q$},
    xtick={0,0.5,1,1.5,2},
    ytick={0,0.5,1,1.5,2},
    grid=major,
    grid style={line width=.2pt, draw=black!18},
    axis on top,
    clip=false,
    legend style={
        at={(1.03,0.98)}, anchor=north west,
        draw=black!35, fill=white, font=\small,
        cells={anchor=west}
    },
    legend image code/.code={\draw[#1] (0cm,-0.08cm) rectangle (0.35cm,0.08cm);},
]

% Base region: both unregularized full-potential constructions fail.
\addplot[draw=none, fill=bothfail] coordinates {(0,0) (2,0) (2,2) (0,2)} \closedcycle;
\addlegendentry{both fail}

% Spectral endpoint-feasible rectangle: alpha_p<1 and alpha_q<1.
% The variational-feasible subset is overwritten below.
\addplot[draw=none, fill=speconly] coordinates {(0,0) (1,0) (1,1) (0,1)} \closedcycle;
\addlegendentry{spectral only}

% Variational-only: alpha_p>=1, alpha_q<1/2, and s+alpha_p<R_hull^2(alpha_q).
\addplot[draw=none, fill=varonly] coordinates {
      (1,0)
      (2,0)
      (2,0.08466)
      (2.00000,0.08466)
      (1.97239,0.08599)
      (1.94523,0.08733)
      (1.91849,0.08866)
      (1.89216,0.09000)
      (1.86623,0.09133)
      (1.84069,0.09267)
      (1.81553,0.09401)
      (1.79073,0.09534)
      (1.76630,0.09668)
      (1.74221,0.09801)
      (1.71845,0.09935)
      (1.69503,0.10068)
      (1.67192,0.10202)
      (1.64912,0.10335)
      (1.62663,0.10469)
      (1.60443,0.10602)
      (1.58253,0.10736)
      (1.56090,0.10869)
      (1.53954,0.11003)
      (1.51846,0.11136)
      (1.49763,0.11270)
      (1.47706,0.11403)
      (1.45673,0.11537)
      (1.43665,0.11670)
      (1.41681,0.11804)
      (1.39720,0.11938)
      (1.37781,0.12071)
      (1.35865,0.12205)
      (1.33970,0.12338)
      (1.32097,0.12472)
      (1.30244,0.12605)
      (1.28412,0.12739)
      (1.26599,0.12872)
      (1.24806,0.13006)
      (1.23032,0.13139)
      (1.21277,0.13273)
      (1.19540,0.13406)
      (1.17821,0.13540)
      (1.16119,0.13673)
      (1.14435,0.13807)
      (1.12768,0.13940)
      (1.11117,0.14074)
      (1.09482,0.14208)
      (1.07864,0.14341)
      (1.06261,0.14475)
      (1.04673,0.14608)
      (1.03101,0.14742)
      (1.01543,0.14875)
      (1.00000,0.15009)
      (1,0.15009)
} \closedcycle;
\addlegendentry{variational only}

% Both feasible: spectral endpoint feasible and variational hull feasible.
\addplot[draw=none, fill=bothfeas] coordinates {
      (0,0)
      (1,0)
      (1,0.15009)
      (1.00000,0.15009)
      (0.98259,0.15161)
      (0.96537,0.15313)
      (0.94832,0.15465)
      (0.93145,0.15617)
      (0.91475,0.15769)
      (0.89821,0.15921)
      (0.88185,0.16074)
      (0.86564,0.16226)
      (0.84959,0.16378)
      (0.83370,0.16530)
      (0.81795,0.16682)
      (0.80236,0.16834)
      (0.78692,0.16986)
      (0.77162,0.17138)
      (0.75646,0.17291)
      (0.74144,0.17443)
      (0.72656,0.17595)
      (0.71182,0.17747)
      (0.69720,0.17899)
      (0.68272,0.18051)
      (0.66836,0.18203)
      (0.65413,0.18355)
      (0.64002,0.18508)
      (0.62604,0.18660)
      (0.61217,0.18812)
      (0.59842,0.18964)
      (0.58479,0.19116)
      (0.57127,0.19268)
      (0.55786,0.19420)
      (0.54456,0.19573)
      (0.53137,0.19725)
      (0.51829,0.19877)
      (0.50532,0.20029)
      (0.49244,0.20181)
      (0.47967,0.20333)
      (0.46700,0.20485)
      (0.45443,0.20637)
      (0.44196,0.20790)
      (0.42958,0.20942)
      (0.41730,0.21094)
      (0.40511,0.21246)
      (0.39301,0.21398)
      (0.38100,0.21550)
      (0.36909,0.21702)
      (0.35726,0.21854)
      (0.34551,0.22007)
      (0.33386,0.22159)
      (0.32229,0.22311)
      (0.31080,0.22463)
      (0.29939,0.22615)
      (0.28807,0.22767)
      (0.27682,0.22919)
      (0.26565,0.23071)
      (0.25457,0.23224)
      (0.24355,0.23376)
      (0.23262,0.23528)
      (0.22176,0.23680)
      (0.21097,0.23832)
      (0.20026,0.23984)
      (0.18962,0.24136)
      (0.17905,0.24288)
      (0.16855,0.24441)
      (0.15811,0.24593)
      (0.14775,0.24745)
      (0.13746,0.24897)
      (0.12723,0.25049)
      (0.11707,0.25201)
      (0.10697,0.25353)
      (0.09694,0.25506)
      (0.08697,0.25658)
      (0.07707,0.25810)
      (0.06722,0.25962)
      (0.05744,0.26114)
      (0.04772,0.26266)
      (0.03806,0.26418)
      (0.02846,0.26570)
      (0.01891,0.26723)
      (0.00943,0.26875)
      (-0.00000,0.27027)
      (0,0.27027)
} \closedcycle;
\addlegendentry{both feasible}

% Endpoint feasibility boundaries for the spectral continuum full potential.
\addplot[black, very thick] coordinates {(1,0) (1,1)};
\node[anchor=west, font=\small] at (axis cs:1.02,0.92) {$\alpha_p=1$};
\addplot[black, very thick] coordinates {(0,1) (1,1)};
\node[anchor=south west, font=\small] at (axis cs:0.04,1.02) {$\alpha_q=1$};

% Variational hard cutoff alpha_q=1/2.
\addplot[black, dashed, thick] coordinates {(0,0.5) (2,0.5)};
\node[anchor=south east, font=\small] at (axis cs:1.96,0.515) {$\alpha_q=1/2$};

% Variational convex-hull boundary: alpha_p=R_hull^2(alpha_q)-s.
\addplot[boundaryred, very thick] coordinates {
      (2.00000,0.08466)
      (1.96780,0.08622)
      (1.93619,0.08778)
      (1.90516,0.08934)
      (1.87468,0.09090)
      (1.84474,0.09246)
      (1.81531,0.09402)
      (1.78639,0.09558)
      (1.75795,0.09714)
      (1.72998,0.09870)
      (1.70246,0.10026)
      (1.67538,0.10182)
      (1.64873,0.10338)
      (1.62249,0.10493)
      (1.59666,0.10649)
      (1.57121,0.10805)
      (1.54614,0.10961)
      (1.52144,0.11117)
      (1.49709,0.11273)
      (1.47309,0.11429)
      (1.44943,0.11585)
      (1.42610,0.11741)
      (1.40309,0.11897)
      (1.38039,0.12053)
      (1.35799,0.12209)
      (1.33589,0.12365)
      (1.31407,0.12521)
      (1.29254,0.12677)
      (1.27128,0.12833)
      (1.25029,0.12989)
      (1.22956,0.13145)
      (1.20908,0.13301)
      (1.18885,0.13457)
      (1.16887,0.13613)
      (1.14912,0.13769)
      (1.12960,0.13925)
      (1.11032,0.14081)
      (1.09125,0.14237)
      (1.07240,0.14393)
      (1.05376,0.14549)
      (1.03533,0.14705)
      (1.01710,0.14861)
      (0.99907,0.15017)
      (0.98124,0.15173)
      (0.96359,0.15329)
      (0.94614,0.15485)
      (0.92886,0.15641)
      (0.91177,0.15797)
      (0.89485,0.15953)
      (0.87810,0.16109)
      (0.86152,0.16265)
      (0.84511,0.16421)
      (0.82886,0.16576)
      (0.81277,0.16732)
      (0.79684,0.16888)
      (0.78106,0.17044)
      (0.76543,0.17200)
      (0.74995,0.17356)
      (0.73462,0.17512)
      (0.71942,0.17668)
      (0.70437,0.17824)
      (0.68946,0.17980)
      (0.67468,0.18136)
      (0.66003,0.18292)
      (0.64552,0.18448)
      (0.63113,0.18604)
      (0.61687,0.18760)
      (0.60274,0.18916)
      (0.58872,0.19072)
      (0.57483,0.19228)
      (0.56105,0.19384)
      (0.54739,0.19540)
      (0.53385,0.19696)
      (0.52042,0.19852)
      (0.50710,0.20008)
      (0.49389,0.20164)
      (0.48078,0.20320)
      (0.46778,0.20476)
      (0.45489,0.20632)
      (0.44210,0.20788)
      (0.42941,0.20944)
      (0.41682,0.21100)
      (0.40432,0.21256)
      (0.39193,0.21412)
      (0.37963,0.21568)
      (0.36742,0.21724)
      (0.35531,0.21880)
      (0.34328,0.22036)
      (0.33135,0.22192)
      (0.31950,0.22348)
      (0.30775,0.22504)
      (0.29608,0.22659)
      (0.28449,0.22815)
      (0.27299,0.22971)
      (0.26157,0.23127)
      (0.25023,0.23283)
      (0.23897,0.23439)
      (0.22779,0.23595)
      (0.21669,0.23751)
      (0.20567,0.23907)
      (0.19472,0.24063)
      (0.18385,0.24219)
      (0.17305,0.24375)
      (0.16233,0.24531)
      (0.15167,0.24687)
      (0.14109,0.24843)
      (0.13058,0.24999)
      (0.12014,0.25155)
      (0.10977,0.25311)
      (0.09947,0.25467)
      (0.08923,0.25623)
      (0.07906,0.25779)
      (0.06896,0.25935)
      (0.05892,0.26091)
      (0.04894,0.26247)
      (0.03903,0.26403)
      (0.02918,0.26559)
      (0.01939,0.26715)
      (0.00967,0.26871)
      (-0.00000,0.27027)
};
\node[anchor=south west, rotate=-7, font=\normalsize, text=boundaryred]
  at (axis cs:0.17,0.245) {$s+\alpha_p=R_{\rm hull}^2(\alpha_q)$};

% Region labels.
\node[font=\normalsize, align=center] at (axis cs:0.48,0.095) {both\\feasible};
\node[font=\normalsize, align=center] at (axis cs:0.45,0.67) {spectral\\only};
\node[font=\normalsize, align=center] at (axis cs:1.52,0.055) {variational only};
\node[font=\normalsize, align=center] at (axis cs:1.50,1.45) {both\\fail};

\end{axis}
\end{tikzpicture}

\vspace*{-.25cm}

\caption{Regimes of feasibility for unregularized estimation ($s=1$). \label{fig:regimes}}
\end{figure}

\begin{figure}
\centering
\includegraphics[width=0.4\linewidth]{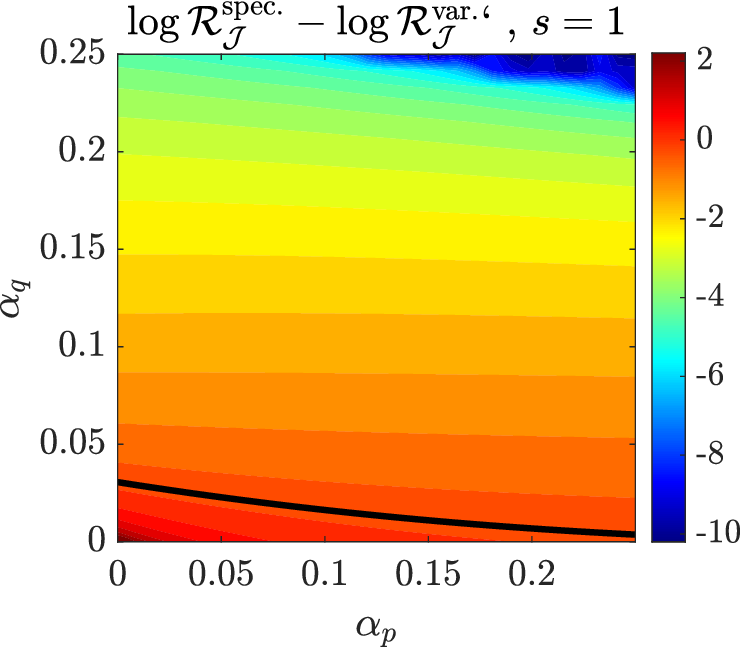}
\includegraphics[width=0.4\linewidth]{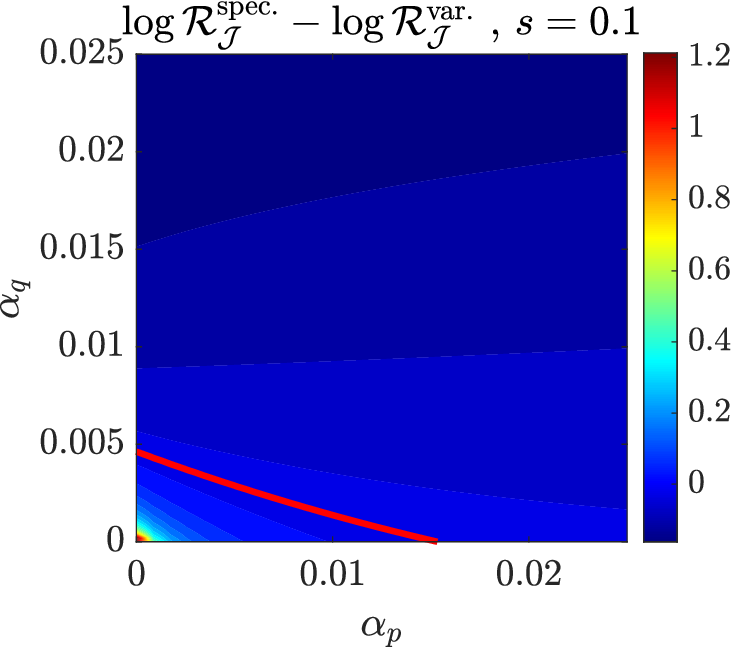}

\vspace*{-.25cm}

\caption{Comparison of two estimators for a given $s=1$ (left) and $s=0.1$ (right) for a fixed small regularization parameter (equal to $10^{-8}$). Only the difference in criterion $\mathcal{J}$ is reported as the difference in criterion $\mathcal{K}$ explodes. The black line (left plot) and red line (right plot) are zero-level lines.
\label{fig:criterion-k-contours-unreg}}
\end{figure}

\begin{figure}
\centering
\includegraphics[width=0.4\linewidth]{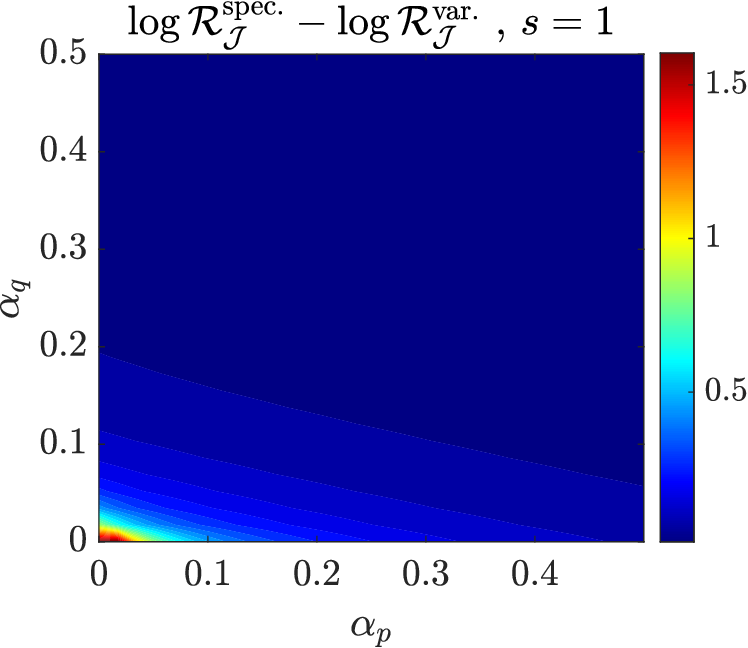}
\includegraphics[width=0.4\linewidth]{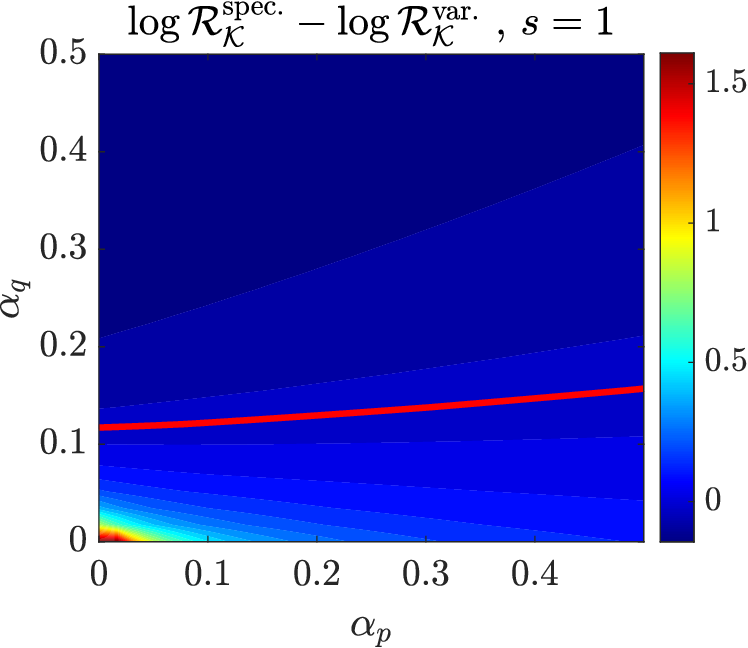}

\includegraphics[width=0.4\linewidth]{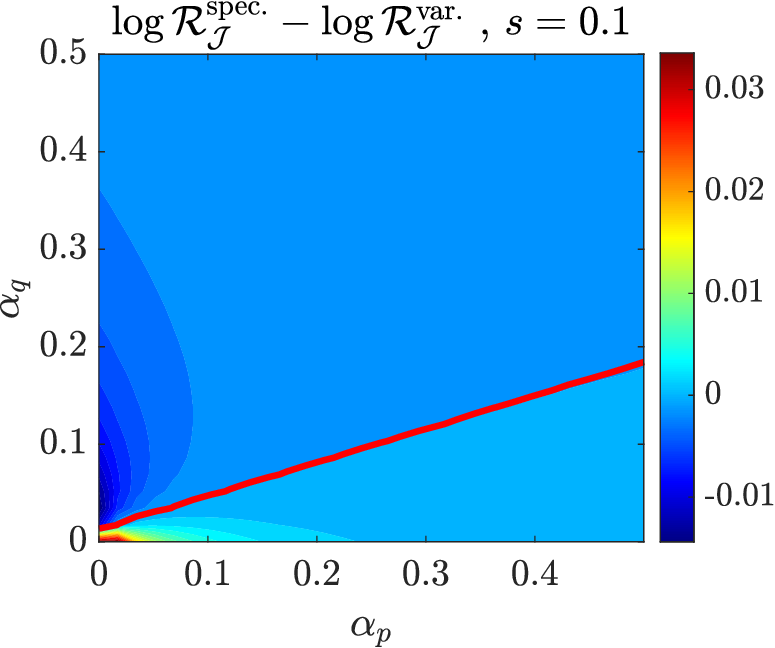}
\includegraphics[width=0.4\linewidth]{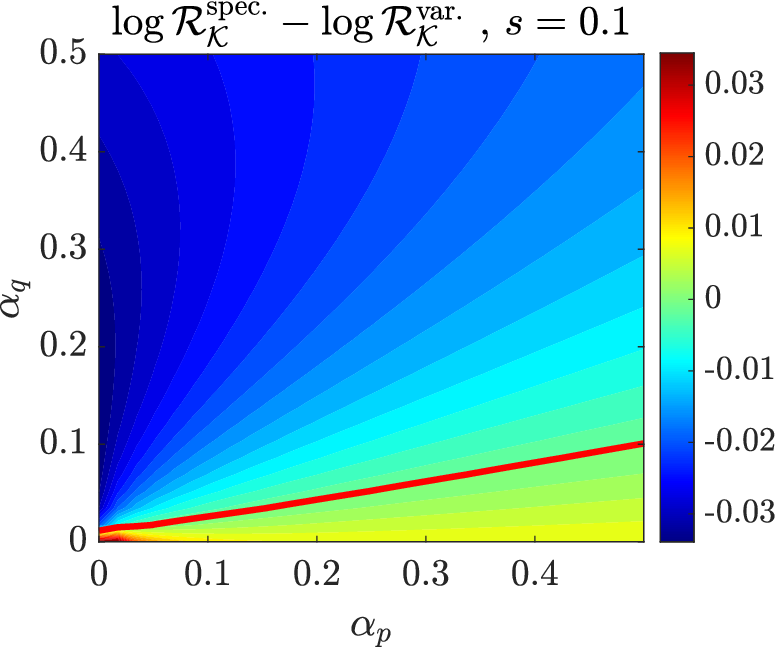}

\vspace*{-.25cm}

\caption{Comparison of two estimators for a given $s=1$ (top) and $s=0.1$ (bottom) when regularization parameters are optimized. Difference in criterion $\mathcal{J}$ (left) and criterion $\mathcal{K}$ (right). The red lines denote the zero line, above which the spectral method is better than the variational method. Note that  on the top left plot ($s=1$, criterion $\mathcal{J}$ that is insensitive to constants), the variational method is always better.\label{fig:criterion-k-contours-reg}}
\end{figure}

\paragraph{Fixed-signal aspect-ratio unregularized regimes.}
The fixed-signal contour plots in Figure~\ref{fig:criterion-k-contours-unreg} vary \((\alpha_p,\alpha_q)\)  in $[0,1/4] \times [0,1/4]$ at a fixed value of \(s=1\) (left plot), and \((\alpha_p,\alpha_q)\)  in $[0,1/40] \times [0,1/40]$ at a fixed value of \(s=0.1\) (right plot), with both regularization parameters equal to $10^{-8}$ to mimic unregularized estimation while preserving numerical stability. 
In \myfig{criterion-k-contours-unreg}, we show the difference  $\log(\mathcal{R}_{\mathcal{J},\rm spec})
- \log(\mathcal{R}_{\mathcal{J},\rm var})$, with the zero-level line, showing that for small $\alpha_p, \alpha_q$, the variational method has better performance, while for larger values, the spectral estimator is better. This generalizes the plot in \myfig{weak-signal-separating-curves} to larger values of $s, \alpha_p,\alpha_q$.

\paragraph{Regularization: comparison regimes.}

Figure~\ref{fig:criterion-k-contours-reg} displays the criteria
\[
\mathcal{R}^{\rm opt}_{\mathcal{J}, \rm var} (\alpha_p,\alpha_q,s)=\inf_{\tau>0}\mathcal{R}^{\tau}_{\mathcal{J}, \rm var} (\alpha_p,\alpha_q,s),
\qquad
\mathcal{R}^{\rm opt}_{\mathcal{J}, \rm spec} (\alpha_p,\alpha_q,s)=\inf_{\zeta>0}\mathcal{R}^{\rm \zeta}_{\mathcal{J}, \rm spec} (\alpha_p,\alpha_q,s),
\]
where we now make explicit the dependence of the risks on the regularization parameter used for training. This is done under the same setup as \myfig{criterion-k-contours-unreg}, but on the set $[0,1/2] \times [0,1/2]$, and for two values of $s$ ($s=1$ and $s=0.1$), and we only consider the $\mathcal{J}$ and $\mathcal{K}$ criteria, and plot $\log \mathcal{R}^{\rm opt}_{\mathcal{J}, \rm spec} (\alpha_p,\alpha_q,s) - \log \mathcal{R}^{\rm opt}_{\mathcal{J}, \rm var} (\alpha_p,\alpha_q,s)$ to assess which estimator is preferable. For each value of $(\alpha_p,\alpha_q,s)$, the optimal regularization parameters are found by golden-section search plus parabolic interpolation~\cite{brent2013algorithms}. We can draw conclusions that are similar to the ones for \myfig{criterion-k-contours-unreg}.

\section{Asymptotic comparisons}
\label{sec:experiments}

We consider comparisons between empirical estimation and the asymptotic results presented in \mysec{hd},  illustrating the convergence behaviors.\footnote{The MATLAB code for all figures in this paper can be found at \url{https://www.di.ens.fr/~fbach/fdiv_gaussian.zip}.} We empirically illustrate a match between the asymptotic determinstic equivalents and the empirical behavior for reasonable $m$ (as low as $100$ for the spectral estimator, bigger for the variational one).

\paragraph{Varying $m$.} We study in \myfig{sweepm} the convergence behavior towards our asymptotic limit, with fixed $\alpha_p=0.125$, $\alpha_q=0.125$, and $s=1$, with regularization parameters $\tau = 10^{-3}$ and $\zeta = 10^{-4}$. We vary $m$ and plot interquartile range and median obtained from 32 replications. We see that the empirical curves converge to their asymptotic limit, with a faster convergence for the spectral estimator (right panel). We also see the differences in criteria, where $\mathcal{K}$ measures the estimation of the normalization constants that the criterion $\mathcal{J}$ ignores. For the spectral estimator (right panel), we see that the $\mathcal{L}$ criterion, which uses the learned potential $w$ (instead of replacing it with $w=1-e^{v}$) leads to a significantly worse result.

\begin{figure}
    \centering
    \includegraphics[width=0.75\linewidth]{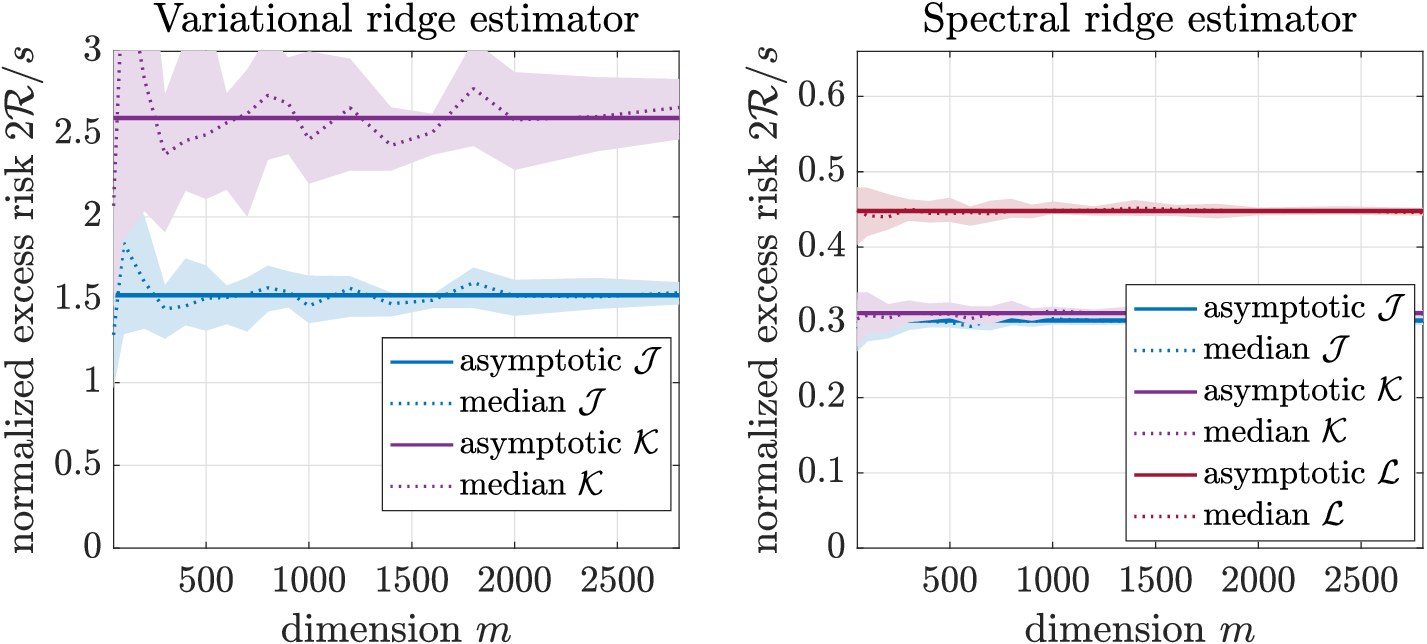}
    % sweep_m.m
    \caption{Comparison of estimators, limits, and empirical performance for varying $m$. Left: variational estimator, right: spectral estimator. \label{fig:sweepm} }
\end{figure}

\paragraph{Varying $\alpha_p$.}
 We set $s=\|\Delta\|^2=1$ and
$m=1000$ for the variational estimator, and $m=100$ for the spectral estimator, and for every displayed aspect ratio we draw independent samples from
$q=\mathcal{N}(0,\idm)$ and $p=\mathcal{N}(\Delta,\idm)$.  The empirical curves are averages over
16 independent replications; the shaded areas show the interquartile range across
these replications.  In \myfig{alphap} and \myfig{alphaq}, the ridge parameters are fixed at
$\tau=10^{-2}$ and $\zeta=10^{-4}$.  The left panel reports the ridge-regularized variational
estimator under the log-normalized criterion $\mathcal{J}$ and the Fenchel criterion $\mathcal{K}$,
whereas the right panel reports the ridge-regularized spectral estimator under
$\mathcal{J}$, $\mathcal{K}$, and the two-potential lower-bound criterion $\mathcal{L}$.  \myfig{alphap}
fixes $\alpha_q=0.1$ and sweeps $\alpha_p$ uniformly over $[0,1]$ using 20 grid
points; \myfig{alphaq} fixes $\alpha_p=0.05$
and sweeps $\alpha_q$ over the same interval.  Solid curves are the proportional
deterministic equivalents: the variational limits are obtained from the scalar
CGMT system, while the spectral limits are evaluated by the ridge-resolvent
formula using 256-point Gauss--Legendre quadrature.  The empirical spectral
curves are computed using the generalized-eigenvalue reduction from~\cite{Bach2026} presented in \mysec{est2}. 

\begin{figure}
     \centering
    \includegraphics[width=0.7\linewidth]{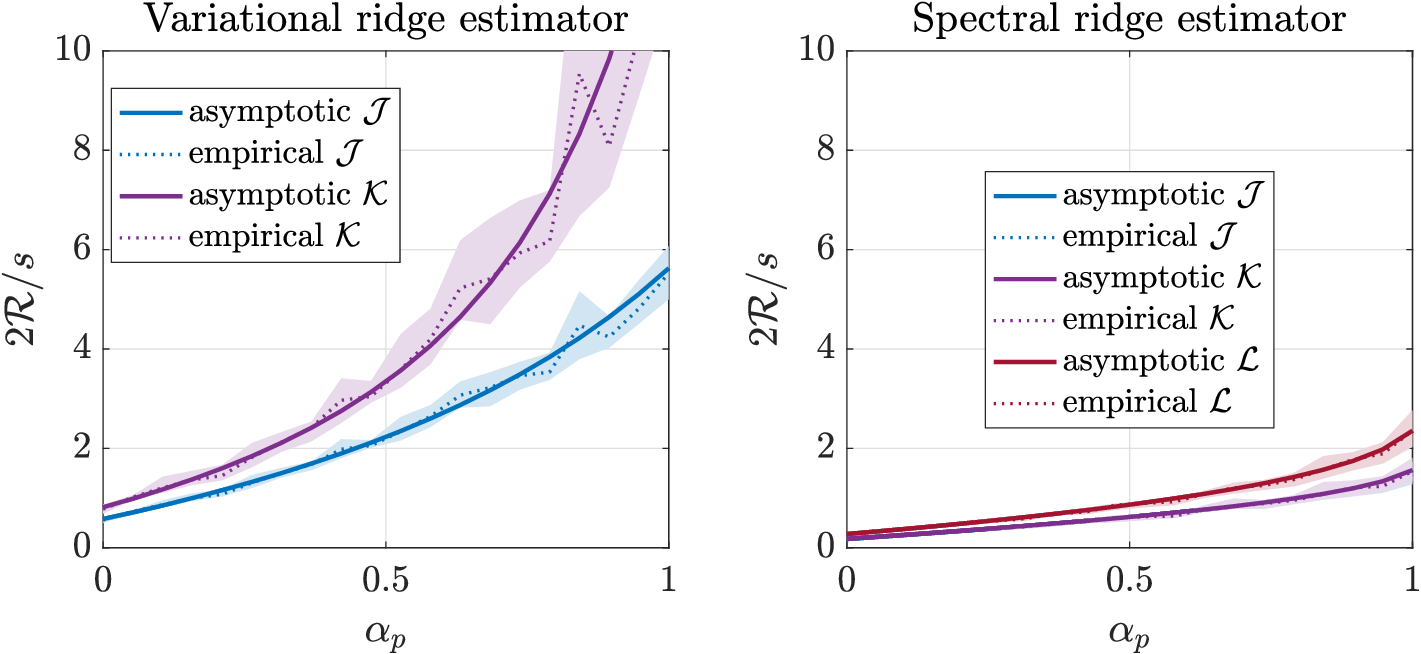}
    % compare_asymptotics_all_reps_quantiles.m
    \caption{Comparison of estimators when varying $\alpha_p$ ($\alpha_q=0.1$ fixed, with $\alpha_p$ swept over $[0,1]$). Left: variational, right: spectral. \label{fig:alphap} }
\end{figure}

\begin{figure}
    \centering
    \includegraphics[width=0.8\linewidth]{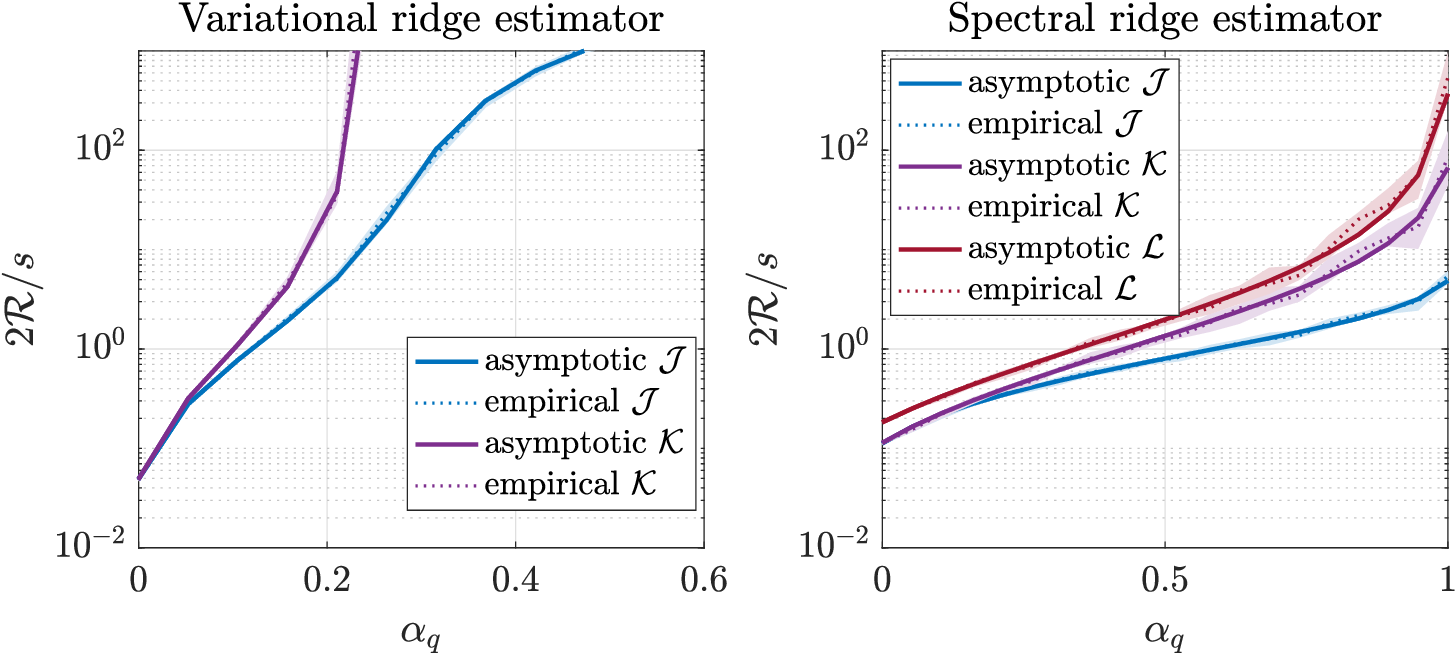}
    % compare_asymptotics_all_reps_quantiles.m
    \caption{Comparison of estimators when varying $\alpha_q$ ($\alpha_p=0.05$ fixed, with $\alpha_q$ swept over $[0,1]$, only shown partially for the variational estimator since the performance is out of range). Left: variational, right: spectral. \label{fig:alphaq} }
\end{figure}

\section{Conclusion}
\label{sec:conclusion}
 
\paragraph{Summary.}
This paper compared ridge-regularized variational and spectral
density-ratio estimation in a simple Gaussian location model
under proportional high-dimensional asymptotics
\[
        \frac{m}{n_p}\to\alpha_p,
        \qquad
        \frac{m}{n_q}\to\alpha_q .
\]
The variational estimator was analyzed through the entropy dual of the
ridge-regularized log-normalized KL objective and a CGMT
reduction~\cite{salehi2019impact}.  The spectral estimator of \cite{Bach2026} was
analyzed through random-matrix deterministic equivalents for ridge
resolvents of weighted sums of two independent Gaussian sample covariance
matrices, in the spirit of standard resolvent methods
\cite{BaiSilverstein2010,HachemLoubatonNajim2007}, but using a random feature formulation~\cite{montanari2022interpolation,Bach2024RandomProjections}.  These two asymptotic
calculations give computable population risks for our three criteria.

At small aspect
ratios (many observations), the affine variational estimator benefits from correct
specification of the Gaussian log-density ratio and can have the smaller
population risk.  As \(\alpha_p\) and\linebreak
especially \(\alpha_q\) increase,
empirical exponential normalization becomes more variable: in these larger-aspect regimes (with fewer observations), the spectral estimator is preferable in the regimes identified by the asymptotic comparison (for this simple model):
it replaces empirical log-sum-exp normalization by covariance-based
least-squares problems, trading some approximation bias for lower
normalization variance.  Positive ridge keeps both estimators finite for
all finite aspect ratios, and once optimized for regularization parameters, the spectral method is preferable for small numbers of observations (but with a reduced range of aspect values).

\paragraph{Extension to nuclear penalty.} In this paper, we primarily focused on an independent ridge penalty on all $\rho$-dependent linear predictors. In order to perform feature learning, other penalties can be considered, such as the nuclear penalty, which is classical in this context of multi-task learning~\cite{ArgyriouEvgeniouPontil2008}. Iteratively reweighted least-square formulations~\cite{daubechies2010iteratively,BachJenattonMairalObozinski2012} can naturally be used, both for algorithms and for the asymptotic analysis, which can now be done through matrix Dyson equations~\cite{AjankiErdosKruger2019MDE}. Overall, given that, for our Gaussian location model, the optimal predictors are all aligned, we see an improvement over ridge penalties.
See Appendix~\ref{app:nuclear} for details and derivation of deterministic equivalents a few illustrative experiments.

\paragraph{Extension to mutual information.}
As noted in \cite{Bach2026}, a first natural extension is mutual information estimation, since it is equal to the KL divergence between a joint distribution $p(x_1,x_2)$ and the product of its marginals.  Empirically, this corresponds to
comparing paired samples from the joint law with shuffled or independently
paired samples from the product of marginals.  This connects directly to
variational mutual-information estimation and variational KL bounds
\cite{DonskerVaradhan1975,NguyenWainwrightJordan2010,
BelghaziEtAl2018,PooleEtAl2019}.     The corresponding
high-dimensional question is to determine how the CGMT and random-matrix
limits change when the two empirical samples come from joint and product
feature distributions rather than from a Gaussian location pair.  Such an
analysis would give asymptotic risk predictions for variational and
spectral mutual-information estimators as functions of dependence
strength, feature dimension, sample size, and regularization, with in particular application to softmax regression when one of the two variables takes finitely many values.

\paragraph{Extension to nonlinear features.}
A second extension is to replace the identity feature map by richer
nonlinear features, for instance
\(
        \varphi(x)=\sigma(Wx),
\)
with fixed random weights $W$, kernel features, or features learned from data.
This preserves a linear-in-parameters
structure after feature construction
\cite{RahimiRecht2007,WilliamsSeeger2001}.  However, after the nonlinear
map, the empirical design is no longer simply a Gaussian matrix or a pair
of Wishart matrices.  Recent high-dimensional analyses of nonlinear
random-feature Gram matrices, random-feature ridge regression,
Gaussian-equivalent feature models, and generic feature maps show that
such problems often still admit deterministic equivalents or
low-dimensional state-evolution descriptions
\cite{LouartLiaoCouillet2018,MeiMontanari2022,HuLu2023,
GeraceEtAl2020,LoureiroEtAl2021GenericFeatures,
LoureiroEtAl2022ConvexLosses}.  This suggests an extension of the present
CGMT and resolvent calculations in which the ambient dimension \(m\) is
replaced by the effective spectral distribution of the nonlinear feature
covariance or kernel matrix.  The main technical question would then be
to rederive the asymptotic equivalents and compare their performance.

\paragraph{Extension to spiked and covariance-shift models.}
A third extension is to study settings where $p$ and $q$ differ not only in their means but also, or primarily, in their covariance matrices \cite{Johnstone2001,BaikBenArousPeche2005,BaiSilverstein2010}.
In such models, part of the signal is second-order rather than purely
mean-shifted.  The spectral estimator may then be structurally better
aligned with the data-generating mechanism, because it is built from
covariance-based least-squares problems.  By contrast, an affine
variational estimator would be misspecified unless augmented with
quadratic or nonlinear features.  These extensions would clarify whether
the spectral advantage observed at larger aspect ratios is specific to the
Gaussian location model, or whether it reflects a broader high-dimensional
phenomenon in density-ratio and mutual-information estimation.

\vspace*{1.25cm}

\subsection*{Tool and computational resource disclosure}
During the exploratory phase of this work, the author used a large language model (GPT-pro 5.5) to explore possible applications of the convex Gaussian min-max theorem (CGMT) and random matrix theory to the problem studied in this paper, including candidate reductions, intermediate identities, and proof strategies.

All mathematical claims, statements of assumptions, and final proofs in the main paper were subsequently checked, significantly rewritten, and completed by the author. Proofs in the appendix were reviewed for correctness. The final arguments rely on the author's own mathematical judgement and background knowledge, and no result is included solely on the basis of an LLM-generated derivation. 

The same LLM was also used to assist with MATLAB code generation for numerical experiments and figure production. The resulting code was reviewed, edited, and validated by the author, including checks for consistency with the stated mathematical model and reproduction of the reported plots. All numerical experiments and figures reported in the paper were run on a single CPU. 

\subsection*{Acknowledgements}
   This work has received support from the French government, managed by the National Research Agency,
under the France 2030 program with the reference ``PR[AI]RIE-PSAI'' (ANR-23-IACL-0008).

\newpage

\appendix
 
\section{Proof of Proposition~\ref{prop:var-dual}}
\label{app:proof-var-dual}

\begin{proof}
Write \(\mu=\hat\mu_p\), \(n=n_q\), and
\(
\Phi(\theta)
=
\theta^\top\mu
-
\log\big(\frac1n\sum_{j=1}^n \exp(\theta^\top y_j)\big).
\)
Let \(C=\operatorname{conv}\{y_1,\ldots,y_n\}\) and
\(S=\operatorname{span}(C-C)\).
If \(\mu\notin C\), strict separation gives a vector \(a\) such that
\(
a^\top\mu>\max_{1\leqslant  j\leqslant  n} a^\top y_j .
\)
Hence, as \(t\to\infty\), we get 
\(
\Phi(ta)
=
t (a^\top\mu-\max_{j \in \{1,\dots,n\}} a^\top y_j)+O(1)\to+\infty .
\)
Thus the supremum is infinite.

Assume now that \(\mu\in C\). Then, for every \(\theta\), we have
\(
\theta^\top\mu\leqslant  \max_{j \in \{1,\dots,n\}}\theta^\top y_j
\)
and therefore
\[
\Phi(\theta)
\leqslant
\max_{j \in \{1,\dots,n\}}\ \theta^\top y_j
-
\log\bigg(\frac1n\sum_{j=1}^n \exp(\theta^\top y_j)\bigg)
\leqslant  \log n .
\]
Thus the supremum is finite. Moreover, if \(h\in S^\perp\), then
\(h^\top y_j\) is constant in \(j\), and since \(\mu\in C\), this constant is
\(h^\top\mu\). Hence \(\Phi(\theta+h)=\Phi(\theta)\). This proves the
non-identifiability modulo \(S^\perp\), and it remains to work on~\(S\).

Suppose first that \(\mu\in \operatorname{ri} C\). If \(S=\{0\}\), there is
nothing to prove. Otherwise,
\[
\delta
:=
\inf_{\substack{u\in S\\ \|u\|=1}}
\max_{j \in \{1,\dots,n\}} u^\top(y_j-\mu)
>0 .
\]
Indeed, a zero value would give a nonzero supporting direction through
\(\mu\), contradicting \(\mu\in\operatorname{ri} C\). For
\(\theta=tu\in S\), with \(t=\|\theta\|\) and \(\|u\|=1\),
\[
\Phi(\theta)
\leqslant
\log n
-
t\max_{j \in \{1,\dots,n\}} u^\top(y_j-\mu)
\leqslant
\log n-\delta \|\theta\|.
\]
Thus \(\Phi\) is coercive from above on \(S\). Since \(\Phi\) is continuous,
it attains its maximum on \(S\), and hence in \(\mathbb R^m\) modulo
\(S^\perp\).

It remains to consider \(\mu\in C\setminus \operatorname{ri} C\). Let \(F\)
be the smallest face of \(C\) containing \(\mu\), and set
\(
I=\{j:\ y_j\in F\}.
\)
Since \(C\) is a polytope, \(F\) is exposed: there exists \(a\) such that
\[
a^\top y_j=a^\top\mu \quad (j\in I),
\qquad
a^\top y_j<a^\top\mu \quad (j\notin I).
\]
Define the face-restricted objective
\(
\Phi_F(\theta)
=
\theta^\top\mu
-
\log\big(\frac1n\sum_{j\in I}\exp(\theta^\top y_j)\big).
\)
Since \(\mu\in\operatorname{ri}F\), the previous paragraph applied inside
\(\operatorname{aff}F\) gives a maximizer \(\theta_F\) of \(\Phi_F\). Also,
\(\Phi(\theta)\leqslant  \Phi_F(\theta)\) for every \(\theta\). On the other hand,
\(
\Phi(\theta_F+ta)
\longrightarrow
\Phi_F(\theta_F)
\) when  \( t\to\infty 
\),
because the terms with \(j\notin I\) are exponentially negligible relative
to the terms with \(j\in I\). Hence
\(
\sup_{\theta\in\mathbb R^m}\Phi(\theta)=\max_{\theta}\Phi_F(\theta)<\infty .
\)
For every finite \(\theta\), the terms with \(j\notin I\) are strictly
positive, so
\(
\Phi(\theta)<\Phi_F(\theta)\leqslant  \max_\eta \Phi_F(\eta)=\sup_\eta \Phi(\eta).
\)
Thus the supremum is not attained. The sequence
\(\theta_F+ta\) is a maximizing sequence escaping to infinity in a direction
exposing the smallest face containing \(\mu\). Conversely, any maximizing
sequence must be unbounded; otherwise a convergent subsequence would yield a
finite maximizer. This completes the proof.
\end{proof}

\section{Proof of Proposition~\ref{prop:chi-feasible}}
\label{app:proof-chi-feasible}

\begin{proof}
Assume first that \(\hat C_p\succ0\) and \(\hat C_q\succ0\). Then, for every
\(\rho\in[0,1]\),
\[
\hat C(\rho)=\rho\hat C_p+(1-\rho)\hat C_q
\succcurlyeq
\min\{\lambda_{\min}(\hat C_p),\lambda_{\min}(\hat C_q)\}I
\succ0 .
\]
Hence
\(
\hat M(\rho)
=
\hat C(\rho)+\rho(1-\rho)\hat\Delta\hat\Delta^\top
\succ0
\)
uniformly on \([0,1]\). Therefore
\(
\rho\mapsto \hat M(\rho)^{-1}\hat\Delta
\)
is continuous and bounded on \([0,1]\). Since
\(
\hat\mu(\rho)=\rho\hat\mu_p+(1-\rho)\hat\mu_q
\)
is also continuous, all integrands defining the coefficients in
Eqs.~\eqref{eq:Ahat-ridge}-\eqref{eq:chat-ridge} are continuous and bounded. Thus the
unregularized full spectral potential in \eq{vhat-cont} is finite.

Conversely, suppose that \(\hat C_q\) is singular. Let \(\hat P_q\) denote the
orthogonal projector onto \(\ker\hat C_q\). We first show the deterministic implication
\[
\hat P_q\hat\Delta\ne0
\quad\Longrightarrow\quad
\text{the full potential is not finite.}
\]
If \(\hat M(\rho)\) is singular for some \(\rho\in(0,1)\), the ordinary-inverse
construction is already undefined. Otherwise, set
\(
\hat\beta_\rho=\hat M(\rho)^{-1}\hat\Delta .
\)
From \(\hat M(\rho)\hat\beta_\rho=\hat\Delta\), projection onto \(\ker\hat C_q\) gives
\(
\hat P_q\hat\Delta
=
\rho\hat P_q\hat C_p\hat\beta_\rho
+
\rho(1-\rho)\hat P_q\hat\Delta\,\hat\Delta^\top\hat\beta_\rho ,
\)
because \(\hat P_q\hat C_q=0\). Hence
\[
\|\hat P_q\hat\Delta\|
\leqslant
\rho\bigl(\|\hat C_p\|+\|\hat\Delta\|^2\bigr)\|\hat\beta_\rho\|.
\]
If \(\hat P_q\hat\Delta\ne0\), then, for some constant \(c_q>0\),
\(
\|\hat\beta_\rho\|\geqslant {c_q\over \rho}
,
\) with $0<\rho<1$.
Therefore
\[
\int_0^\varepsilon
\rho(1-\rho)\|\hat\beta_\rho\|^2\,d\rho
\geqslant
c_q^2\int_0^\varepsilon {1-\rho\over \rho}\,d\rho
=
+\infty .
\]
But the quadratic coefficient satisfies
\[
\hat A
=
-\int_0^1 \rho(1-\rho)\hat\beta_\rho\hat\beta_\rho^\top\,d\rho ,
\]
so a finite full quadratic potential would imply
\[
\int_0^1\rho(1-\rho)\|\hat\beta_\rho\|^2\,d\rho<\infty .
\]
This contradiction proves non-finiteness at the \(q\)-endpoint.

The \(p\)-endpoint is identical. If \(\hat C_p\) is singular and \(\hat P_p\) is the
orthogonal projector onto \(\ker\hat C_p\), then
\(
\hat P_p\hat\Delta\ne0
\)
implies, with \(\varepsilon=1-\rho\),
\(
\|\hat\beta_{1-\varepsilon}\|\geqslant {c_p\over \varepsilon}
\)
for some \(c_p>0\). Consequently,
\[
\int_{1-\varepsilon_0}^1
\rho(1-\rho)\|\hat\beta_\rho\|^2\,d\rho
=
+\infty ,
\]
and the full potential is again not finite.

It remains to verify that these projection conditions hold almost surely under the Gaussian sampling
model. For Gaussian samples, each centered sample covariance is independent of the corresponding sample
mean, and the \(p\)- and \(q\)-samples are independent. Thus \(\hat\Delta=\hat\mu_p-\hat\mu_q\) is independent
of \(\hat C_p\) and \(\hat C_q\). Conditional on either covariance matrix,
\(
\hat\Delta\sim
N\!\big(\Delta,\big({1\over n_p}+{1\over n_q}\big)I\big).
\)
If \(\hat C_q\) is singular, then \(\ker\hat C_q\ne\{0\}\), and conditional on \(\hat C_q\),
\(
\hat P_q\hat\Delta
\)
is a nondegenerate Gaussian vector on \(\ker\hat C_q\), with covariance
\(
\big({1\over n_p}+{1\over n_q}\big)\hat P_q .
\)
Therefore
\(
\mathbb P(\hat P_q\hat\Delta=0\mid \hat C_q)=0
\)
on the event that \(\hat C_q\) is singular. The same argument gives
\(
\mathbb P(\hat P_p\hat\Delta=0\mid \hat C_p)=0
\)
on the event that \(\hat C_p\) is singular.

Finally, centered Gaussian sample covariances have ranks
\(
\operatorname{rank}(\hat C_p)=\min(m,n_p-1), \)
\( 
\operatorname{rank}(\hat C_q)=\min(m,n_q-1)
\)
almost surely. Hence
\(
\hat C_p\succ0 \Leftrightarrow n_p\geqslant m+1,
\hat C_q\succ0 \Leftrightarrow n_q\geqslant m+1
\)
almost surely. Combining the positive-definite case with the endpoint-divergence argument proves that
the unregularized continuum spectral full potential is finite almost surely if and only if
\(
n_p,n_q\geqslant m+1 .
\)
\end{proof}

\section{Detailed computations for \mysec{specpop}}
\label{app:proof-pop-spec-ridge}

Set
\[
a=1+\zeta,\qquad 
g(\rho)=\rho(1-\rho),\qquad
D(\rho)=a+s g(\rho),\qquad
t=\Delta^\top x .
\]
At the population level, we have
\[
M_0(\rho)+\zeta I
=
(1+\zeta)I+\rho(1-\rho)\Delta\Delta^\top
=
aI+g(\rho)\Delta\Delta^\top .
\]
Since \(\|\Delta\|^2=s\),
\[
\{aI+g(\rho)\Delta\Delta^\top\}\Delta
=
(a+s g(\rho))\Delta
=
D(\rho)\Delta .
\]
Therefore
\[
\beta^{(0)}(\rho)=\frac{\Delta}{D(\rho)},
\qquad
u^{(0)}(\rho)(x)
=
\beta^{(0)}(\rho)^\top(x-\rho\Delta)
=
\frac{t-\rho s}{D(\rho)}.
\]

We use throughout
\[
h(s)=\int_0^1\frac{d\rho}{D(\rho)}
=
\int_0^1\frac{d\rho}{a+s\rho(1-\rho)} ,
\]
where the derivative \(h'(s)\) is taken with respect to \(s\), keeping \(a=1+\zeta\) fixed. Thus
\[
h'(s)
=
-\int_0^1\frac{g(\rho)}{D(\rho)^2}\,d\rho .
\]
For \(s>0\), the closed form is obtained by writing \(\rho=1/2+z\):
\[
h(s)
=
\int_{-1/2}^{1/2}
\frac{dz}{a+s/4-sz^2}
=
\frac{4}{\sqrt{s\{s+4a\}}}
\operatorname{arctanh}
\sqrt{\frac{s}{s+4a}},
\]
with continuous extension
\(
h(0)=\frac1a.
\)

We first record the elementary integral identities used below. By symmetry of
\(D(\rho)\) under \(\rho\mapsto 1-\rho\),
\[
\int_0^1\frac{\rho}{D(\rho)}\,d\rho
=
\int_0^1\frac{1-\rho}{D(\rho)}\,d\rho
=
\frac{h(s)}{2},
\quad \mbox{ and } \quad
\int_0^1\frac{\rho g(\rho)}{D(\rho)^2}\,d\rho
=
\int_0^1\frac{(1-\rho)g(\rho)}{D(\rho)^2}\,d\rho
=
-\frac{h'(s)}{2}.
\]
Moreover, since \(D(\rho)=a+s g(\rho)\),
\[
\int_0^1\frac{g(\rho)}{D(\rho)}\,d\rho
=
\frac{1-a h(s)}{s},
\quad
\mbox{ and }
\quad
\int_0^1\frac{g(\rho)^2}{D(\rho)^2}\,d\rho
=
\frac{1-a h(s)}{s^2}
+
\frac{a}{s}h'(s).
\]
Also,
\[
\int_0^1\frac{d\rho}{D(\rho)^2}
=
\frac{h(s)+s h'(s)}{a},
\]
because
\[
h(s)
=
\int_0^1\frac{D(\rho)}{D(\rho)^2}\,d\rho
=
a\int_0^1\frac{d\rho}{D(\rho)^2}
+
s\int_0^1\frac{g(\rho)}{D(\rho)^2}\,d\rho .
\]
Consequently,
\[
\int_0^1\frac{(1-\rho)^2}{D(\rho)^2}\,d\rho
=
\frac12\int_0^1\frac{d\rho}{D(\rho)^2}
-
\int_0^1\frac{g(\rho)}{D(\rho)^2}\,d\rho
=
\frac{h(s)+(s+2a) h'(s)}{2a}.
\]
The identities containing \(1/s\) are intermediate identities for \(s>0\); the final formulas below extend continuously to \(s=0\).

We now compute \(v_{\mathrm{spec}}^{(0)}\). By definition,
\[
v_{\mathrm{spec}}^{(0)}(x)
=
\int_0^1
2(1-\rho)
\left(
u_0(\rho)(x)-\frac{\rho}{2}u_0(\rho)(x)^2
\right)\,d\rho .
\]
Substituting \(u^{(0)}(\rho)(x)=(t-\rho s)/D(\rho)\) gives
\[
v_{\mathrm{spec}}^{(0)}(x)
=
\int_0^1
\left[
\frac{2(1-\rho)(t-\rho s)}{D(\rho)}
-
\frac{\rho(1-\rho)(t-\rho s)^2}{D(\rho)^2}
\right]\,d\rho .
\]
Hence \(v_{\mathrm{spec}}^{(0)}\) is a scalar quadratic polynomial in \(t=\Delta^\top x\):
\[
v_{\mathrm{spec}}^{(0)}(x)
=
A_v(s)t^2+B_v(s)t+C_v(s).
\]
The quadratic coefficient is
\[
A_v(s)
=
-\int_0^1\frac{g(\rho)}{D(\rho)^2}\,d\rho
=
h'(s).
\]
The linear coefficient is
\[
\begin{aligned}
B_v(s)
&=
2\int_0^1\frac{1-\rho}{D(\rho)}\,d\rho
+
2s\int_0^1\frac{\rho g(\rho)}{D(\rho)^2}\,d\rho =
h(s)-s h'(s).
\end{aligned}
\]
For the constant coefficient,
\[
\begin{aligned}
C_v(s)
&=
-2s\int_0^1\frac{g(\rho)}{D(\rho)}\,d\rho
-
s^2\int_0^1\frac{\rho^2 g(\rho)}{D(\rho)^2}\,d\rho .
\end{aligned}
\]
Since \(\rho^2=\rho-g(\rho)\),
\[
\int_0^1\frac{\rho^2 g(\rho)}{D(\rho)^2}\,d\rho
=
\int_0^1\frac{\rho g(\rho)}{D(\rho)^2}\,d\rho
-
\int_0^1\frac{g(\rho)^2}{D(\rho)^2}\,d\rho
=
-\frac{h'(s)}{2}
-
\left(
\frac{1-a h(s)}{s^2}
+
\frac{a}{s}h'(s)
\right).
\]
Therefore
\[
\begin{aligned}
C_v(s)
&=
-2\{1-a h(s)\}
+
\frac{s^2}{2}h'(s)
+
\{1-a h(s)\}
+
a s h'(s)  =
a h(s)-1+\left(a s+\frac{s^2}{2}\right)h'(s).
\end{aligned}
\]
Thus
\[
v_{\mathrm{spec}}^{(0)}(x)
=
h'(s)t^2
+
\{h(s)-s h'(s)\}t
+
a h(s)-1
+
\left(a s+\frac{s^2}{2}\right)h'(s),
\qquad
t=\Delta^\top x,\quad a=1+\zeta .
\]

We next compute the companion potential \(w_{\mathrm{spec}}^{(0)}\). By definition,
\[
w_{\mathrm{spec}}^{(0)}(x)
=
\int_0^1
2(1-\rho)
\left(
-u_0(\rho)(x)-\frac{1-\rho}{2}u_0(\rho)(x)^2
\right)\,d\rho .
\]
Substituting \(u^{(0)}(\rho)(x)=(t-\rho s)/D(\rho)\) gives
\[
w_{\mathrm{spec}}^{(0)}(x)
=
\int_0^1
\left[
-\frac{2(1-\rho)(t-\rho s)}{D(\rho)}
-
\frac{(1-\rho)^2(t-\rho s)^2}{D(\rho)^2}
\right]\,d\rho .
\]
Thus
\[
w_{\mathrm{spec}}^{(0)}(x)
=
A_w(s)t^2+B_w(s)t+C_w(s).
\]
The quadratic coefficient is
\[
A_w(s)
=
-\int_0^1\frac{(1-\rho)^2}{D(\rho)^2}\,d\rho
=
-\frac{h(s)+(s+2a)h'(s)}{2a}.
\]
The linear coefficient is
\[
\begin{aligned}
B_w(s)
&=
-2\int_0^1\frac{1-\rho}{D(\rho)}\,d\rho
+
2s\int_0^1\frac{\rho(1-\rho)^2}{D(\rho)^2}\,d\rho  =
-h(s)-s h'(s),
\end{aligned}
\]
because
\[
\int_0^1\frac{\rho(1-\rho)^2}{D(\rho)^2}\,d\rho
=
\int_0^1\frac{(1-\rho)g(\rho)}{D(\rho)^2}\,d\rho
=
-\frac{h'(s)}{2}.
\]
Finally,
\[
\begin{aligned}
C_w(s)
&=
2s\int_0^1\frac{g(\rho)}{D(\rho)}\,d\rho
-
s^2\int_0^1\frac{g(\rho)^2}{D(\rho)^2}\,d\rho \\
&=
2(1-a h(s))
-
\left(
1-a h(s)+a s h'(s)
\right)  =
1-a h(s)-a s h'(s).
\end{aligned}
\]
Therefore
\[
w_{\mathrm{spec}}^{(0)}(x)
=
-\frac{h(s)+(s+2a)h'(s)}{2a}t^2
-
(h(s)+s h'(s))t
+
1-a h(s)-a s h'(s),
\qquad
t=\Delta^\top x,\quad a=1+\zeta .
\]

\section{Proof of Theorem~\ref{thm:var-ridge}}
\label{app:var-ridge}

\begin{proof}
We write
\[
  n=n_q,\qquad \frac{m}{n}\to \alpha=\alpha_q,
  \qquad r^2=s+\alpha_p,
\]
and denote the \(q\)-sample matrix by
 $Y = (y_1,\dots,y_n)^\top \in\R^{n\times m}.
$
Thus \(Y u\in\R^n\) has entries \(y_j^\top u\), and
\(Y^\top a=\sum_{j=1}^n a_jy_j\).  We first condition on
\(\hat\mu_p=\mu\).  It is enough to prove the conditional result for any
sequence of deterministic vectors \(\mu=\mu_n\) such that
\[
  \|\mu\|^2\to r^2,
  \qquad
  \Delta^\top\mu\to s.
\]
Indeed, \(\hat\mu_p\) is independent of the \(q\)-sample, and these two
limits hold in probability under Assumption~\ref{ass:hd}; the unconditional
result follows by conditioning.  We prove the nondegenerate case
\(s+\alpha_p+\alpha_q>0\), which is the case in which the displayed formulas
with \(B^{-1}\) are used.  If \(s=\alpha_p=\alpha_q=0\), the limiting entropy
problem has the unique optimizer \(A\equiv1\), the limiting coefficient is
\(\xi=0\), and the fitted affine potential converges to zero, giving
\(J_{\rm var}=K_{\rm var}=0\).

Let
\[
  h(a)=a\log a-a+1,
  \qquad a\geqslant 0,
\]
with the convention \(h(0)=1\).

\emph{Step 1: ridge entropy dual.}
For every \(z\in\R^n\), the entropy conjugacy of log-sum-exp gives
\[
  \log\bigg(\frac1n\sum_{j=1}^n e^{z_j}\bigg)
  =
  \sup_{\substack{a\in\R_+^n\\ n^{-1}\sum_{j=1}^n a_j=1}}
    \frac1n a^\top z-\frac1n\sum_{j=1}^n a_j\log a_j
.
\]
Since \(n^{-1}\sum_j a_j=1\), the entropy term may equivalently be written
as \(n^{-1}\sum_j h(a_j)\).  Applying this identity to \(z=Y\theta\), and
using strong concavity in the ridge coefficient, gives
\[
\begin{aligned}
&\max_{\theta\in\R^m}
    \theta^\top\mu
    -\log\bigg(\frac1n\sum_{j=1}^n e^{y_j^\top\theta}\bigg)
    -\frac{\tau}{2}\|\theta\|^2
   \\
&\quad =
  \min_{\substack{a\in\R_+^n\\ n^{-1}\sum_{j=1}^n a_j=1}}
  \max_{u\in\R^m}
    \frac1n\sum_{j=1}^n h(a_j)
    +u^\top\mu
    -\frac1n u^\top Y^\top a
    -\frac{\tau}{2}\|u\|^2
  \\
&\quad =
  \min_{\substack{a\in\R_+^n\\ n^{-1}\sum_{j=1}^n a_j=1}}
    \frac1n\sum_{j=1}^n h(a_j)
    +\frac{1}{2\tau}\left\|\mu-\frac1nY^\top a\right\|^2
.
\end{aligned}
\]
The maximizer over \(u\) is
\(
  u=\frac1\tau\left(\mu-\frac1nY^\top a\right).
\)
At the saddle point this vector is the fitted ridge coefficient, hence
\(
  \hat\theta
  =\frac1\tau\left(\mu-\frac1nY^\top a\right).
\)
Thus the ridge penalty replaces the zero-ridge moment constraint
\(n^{-1}Y^\top a=\mu\) by a squared residual penalty.

\emph{Step 2: compact CGMT reduction.}
For fixed \(a\), dualize the squared norm as
\[
  \frac{1}{2\tau}\left\|\mu-\frac1nY^\top a\right\|^2
  =
  \sup_{u\in\R^m}
    u^\top\mu-\frac1n u^\top Y^\top a-\frac{\tau}{2}\|u\|^2
.
\]
Fix \(0<\varepsilon<1<M\) and \(R<\infty\), and consider the compact primary
optimization
\[
\begin{aligned}
\varphi_{n,\tau}^{\varepsilon,M,R}(\mu)
=
\min_{\substack{a\in[\varepsilon,M]^n\\ n^{-1}\sum_{j=1}^n a_j=1}}
\max_{\|u\|\leqslant R}
  \frac1n\sum_{j=1}^n h(a_j)
  +u^\top\mu
  -\frac1n u^\top Y^\top a
  -\frac{\tau}{2}\|u\|^2.
\end{aligned}
\]
This is convex in \(a\) and concave in \(u\) on compact convex sets.  The CGMT
auxiliary optimization has independent standard Gaussian vectors
\(g\in\R^m\) and \(h \in\R^n\), and replaces the bilinear
Gaussian term by
\[
  -\frac1n u^\top Y^\top a
  \quad\leadsto\quad
  \frac{\|a\|}{n}g^\top u-\frac{\|u\|}{n}h^\top a.
\]
Hence
\[
\begin{aligned}
\widetilde\varphi_{n,\tau}^{\varepsilon,M,R}(\mu)
=
\min_{\substack{a\in[\varepsilon,M]^n\\ n^{-1}\sum_{j=1}^n a_j=1}}
\max_{\|u\|\leqslant R}
  \frac1n\sum_{j=1}^n h(a_j)
  +u^\top\mu
  +\frac{\|a\|}{n}g^\top u
  -\frac{\|u\|}{n}h^\top a
  -\frac{\tau}{2}\|u\|^2
.
\end{aligned}
\]
For fixed \(a\), define
\[
  Z_a=\mu+\frac{\|a\|}{n}g,
  \qquad
  \ell_a=\frac1nh^\top a.
\]
Optimizing over the direction of \(u\), and writing \(\xi=\|u\|\), gives
\[
\begin{aligned}
\widetilde\varphi_{n,\tau}^{\varepsilon,M,R}(\mu)
=
\min_{\substack{a\in[\varepsilon,M]^n\\ n^{-1}\sum_{j=1}^n a_j=1}}
\max_{0\leqslant \xi\leqslant R}
  \frac1n\sum_{j=1}^n h(a_j)
  +\xi\{\|Z_a\|-\ell_a\}
  -\frac{\tau}{2}\xi^2
 .
\end{aligned}
\]
If the radius constraint \(\xi\leqslant R\) is removed, the maximization over
\(\xi\geqslant0\) equals
\(
  \frac{1}{2\tau}(\|Z_a\|-\ell_a)_+^2,
\)
which is the source of the positive-part square in
\eq{ridge-cgmt-main}.

\emph{Step 3: deterministic limit of the auxiliary problem.}
Uniformly over \(a\in[\varepsilon,M]^n\),
\[
\begin{aligned}
  \|Z_a\|^2
  &=
  \left\|\mu+\frac{\|a\|}{n}g\right\|^2   =
  \|\mu\|^2
  +2\frac{\|a\|}{n}\mu^\top g
  +\frac{\|a\|^2}{n^2}\|g\|^2.
\end{aligned}
\]
Conditionally on \(\mu\), \(\mu^\top g=O_{\mathbb P}(1)\), while
\(\|a\|/n\leqslant M n^{-1/2}\).  Hence the cross term is
\(o_{\mathbb P}(1)\), uniformly over the compact set.  Also,
\[
  \frac{\|a\|^2}{n^2}\|g\|^2
  =
  \bigg(\frac1n\sum_{j=1}^n a_j^2\bigg)
  \bigg(\frac{m}{n}\frac{\|g\|^2}{m}\bigg)
  =
  \alpha_q\bigg(\frac1n\sum_{j=1}^n a_j^2\bigg)+o_{\mathbb P}(1),
\]
uniformly over \(a\in[\varepsilon,M]^n\).  Thus the vector \(g\) self-averages
into the scalar radius
\[
  B_n(a)=
  \bigg(
    \|\mu\|^2+
    \alpha_q\frac1n\sum_{j=1}^n a_j^2
  \bigg)^{1/2}.
\]
The vector \(h\) remains in the coordinatewise average
\(n^{-1}\sum_j a_jH_j\).  Passing to empirical occupation measures
\(n^{-1}\sum_j\delta_{(H_j,a_j)}\), every subsequential limit has first
marginal \(\Normal(0,1)\).  Conditional randomization of the second
coordinate given \(H\) cannot improve the infimum: replacing the second
coordinate by its conditional mean preserves \(\E[A]\) and \(\E[AH]\), and
Jensen's inequality decreases both \(\E[h(A)]\) and \(\E[A^2]\).  Therefore
it is enough in the limit to optimize over deterministic measurable selectors
\(A=A(H)\), where \(H\sim\Normal(0,1)\).

For such a selector define
\[
  B(A)=\left(r^2+\alpha\E[A^2]\right)^{1/2}.
\]
The compact deterministic auxiliary limit is
\[
\begin{aligned}
\psi_{\varepsilon,M,R}(r)
=
\inf_{\substack{A(H)\in[\varepsilon,M]\\ \E[A]=1}}
\sup_{0\leqslant \xi\leqslant R}
  \E[h(A)]
  +\xi\{B(A)-\E[AH]\}
  -\frac{\tau}{2}\xi^2.
\end{aligned}
\]
The preceding uniform approximation and the standard sample-average
/ epi-convergence argument give
\[
  \widetilde\varphi_{n,\tau}^{\varepsilon,M,R}(\mu)
  \xrightarrow{\mathbb P}
  \psi_{\varepsilon,M,R}(r).
\]
The compact CGMT~\cite{thrampoulidis2014gaussian} transfers the same limit to the primary problem
\(
  \varphi_{n,\tau}^{\varepsilon,M,R}(\mu)
  \xrightarrow{\mathbb P}
  \psi_{\varepsilon,M,R}(r).
\)

It remains to remove the compact restrictions.  The feasible point
\(a_j\equiv1\) gives an \(O_{\mathbb P}(1)\) upper bound on the untruncated
objective.  Hence every near minimizer has bounded empirical entropy and
bounded residual
\(\|\mu-n^{-1}Y^\top a\|\).  The entropy bound gives uniform integrability
of the weights, and the residual bound gives tightness of the associated
dual vector \(u=\tau^{-1}(\mu-n^{-1}Y^\top a)\).  Truncating the weights to
\([\varepsilon,M]\), renormalizing them to have empirical mean one, and then
letting \(\varepsilon\downarrow0\), \(M\to\infty\) gives the same epigraphical
limit; finally let \(R\to\infty\).  Denote the corresponding untruncated
value by
\[
\begin{aligned}
\varphi_{n,\tau}(\mu)
=
\min_{\substack{a\in\R_+^n\\ n^{-1}\sum_{j=1}^n a_j=1}}
  \frac1n\sum_{j=1}^n h(a_j)
  +\frac{1}{2\tau}\left\|\mu-\frac1nY^\top a\right\|^2
 .
\end{aligned}
\]
Consequently, \(\varphi_{n,\tau}(\mu)\) converges in probability to
\[
\begin{aligned}
\psi_\tau(r)
&=
\inf_{\substack{A\geqslant0\\ \E[A]=1}}
  \E[h(A)]
  +\frac{1}{2\tau}(B(A)-\E[AH])_+^2                                                 \\
&=
\inf_{\substack{A\geqslant0\\ \E[A]=1}}
  \E[A\log A-A+1]
  +\frac{1}{2\tau}
    \left(
      \left(r^2+\alpha\E[A^2]\right)^{1/2}-\E[AH]
    \right)_+^2
,
\end{aligned}
\]
which is \eq{ridge-cgmt-main} after substituting
\(r^2=s+\alpha_p\) and \(\alpha=\alpha_q\).

\emph{Step 4: uniqueness and KKT equations.}
The feasible set
\(\{A\geqslant0:\E[A]=1\}\) is convex.  The entropy term \(\E[h(A)]\) is strictly
convex, while
\[
  A\longmapsto
  \left(
    \left(r^2+\alpha\E[A^2]\right)^{1/2}-\E[AH]
  \right)_+^2
\]
is convex: \(A\mapsto (r^2+\alpha\E[A^2])^{1/2}\) is a norm-type convex
functional, subtracting the linear term \(\E[AH]\) preserves convexity,
positive-part preserves convexity, and squaring preserves convexity on
\([0,\infty)\).  Entropy sublevel sets are uniformly integrable, and the
remaining terms are lower semicontinuous under the induced weak convergence.
Thus the limiting problem has a unique optimizer, denoted \(A\).
Moreover \(A>0\) almost surely: if \(A=0\) on a set of positive Gaussian
measure, then increasing \(A\) slightly on a bounded subset of that set and
compensating the mass elsewhere gives a first-order entropy decrease with
slope \(-\infty\), whereas the penalty term has finite one-sided variation.

Since \(s+\alpha_p+\alpha_q>0\), the optimizer is nondegenerate and
\(B=B(A)>0\).  The positive-part term is strictly active.  Otherwise the
penalty has zero derivative, and the stationarity of the entropy under the
mass constraint would force \(A\) to be constant; the constraint \(\E[A]=1\)
then gives \(A\equiv1\), for which
\(B(A)-\E[AH]=(r^2+\alpha)^{1/2}>0\), a contradiction.  Therefore
\[
  \xi=\frac{B-\E[AH]}{\tau}>0,
  \qquad
  \E[AH]=B-\tau\xi.
\]
For variations \(\delta A\) preserving integrability,
\(\delta B(A)=\frac{\alpha}{B}\E[A\,\delta A]\). Introducing a multiplier
\(\lambda\) for the constraint \(\E[A]=1\), the first variation of
\(\E[h(A)]+\frac{1}{2\tau}(B(A)-\E[AH])^2+\lambda(\E[A]-1)\) is
\(\E[(\log A+\xi(\frac{\alpha}{B}A-H)+\lambda)\delta A]\).
Absorbing \(-\lambda\) into \(\eta\), stationarity gives
\[
  \log A+\frac{\xi\alpha}{B}A=
  \eta+\xi H
  \qquad\text{almost surely}.
\]
Together with the mass constraint, the definition of \(B\), and the active
relation above, this yields
\[
  \E[A]=1,
  \qquad
  r^2+\alpha\E[A^2]=B^2,
  \qquad
  \E[AH]=B-\tau\xi.
\]
After substituting \(r^2=s+\alpha_p\) and \(\alpha=\alpha_q\), these are
Eqs.~\eqref{eq:ridge-mass-main}-\eqref{eq:ridge-alignment-main}.  For fixed
\((\xi,B,\eta)\), the map
\(a\mapsto \log a+(\xi\alpha/B)a\) is strictly increasing on \((0,\infty)\),
so the scalar KKT equation determines \(A(H)\) uniquely.

\emph{Step 5: optimizer observables.}
To identify projections of the primary optimizer, we use the optimizer-localization
consequence of the CGMT~\cite{ThrampoulidisAbbasiHassibi2018} through a perturbation argument.
Fix a deterministic sequence \(v\in\R^m\) with bounded norm and add the linear
perturbation \(\lambda v^\top u\) to the compact primary problem, equivalently replacing
\(\mu\) by \(\mu+\lambda v\).  The compact CGMT and the same truncation-removal
argument apply locally uniformly for \(\lambda\) in a neighborhood of zero.  Hence the
perturbed primary values converge locally uniformly, in probability, to the corresponding
deterministic perturbed value.  Since the finite-\(n\) value is differentiable at
\(\lambda=0\), with derivative \(\hat\theta^\top v\), and since the limiting
variational problem has a unique optimizer, Danskin's envelope theorem gives the derivative
of the limiting value.  The standard convergence theorem for derivatives of locally uniformly
convergent convex functions then yields convergence of the optimizer observable.

The limiting derivative is obtained by differentiating
\[
  B_\lambda(A)=
  \left(\|\mu+\lambda v\|^2+
        \alpha\E[A^2]\right)^{1/2}
\]
at the optimizer, hence equals \(\xi\frac{\mu^\top v}{B}\). Taking
\(v=\mu\) and \(v=\Delta\), and using \(\|\mu\|^2\to r^2\) and
\(\Delta^\top\mu\to s\), gives
\[
  \hat\theta^\top\mu\to \frac{\xi r^2}{B},
  \qquad
  \hat\theta^\top\Delta\to \frac{\xi s}{B}.
\]
Similarly, differentiating the finite-\(n\) value with respect to \(\tau\)
gives \(-\|\hat\theta\|^2/2\), while differentiating the deterministic limit
with respect to \(\tau\) gives \(-\xi^2/2\). Hence
\(\|\hat\theta\|^2\to\xi^2,\hat\theta^\top\Delta\to\frac{\xi s}{B}\),
which proves \eqref{eq:ridge-var-projections-main}.

The entropy of the fitted dual weights is also determined by the value
convergence.  For the optimizer \(a\),
\[
  \varphi_{n,\tau}(\mu)
  =
  \frac1n\sum_{j=1}^n h(a_j)
  +\frac{\tau}{2}\|\hat\theta\|^2,
\]
because
\(\mu-n^{-1}Y^\top a=\tau\hat\theta\).  The deterministic value is
\[
  \psi_\tau(r)=\E[h(A)]+\frac{\tau}{2}\xi^2,
\]
since \(B-\E[AH]=\tau\xi\).  Therefore
\(
  \frac1n\sum_{j=1}^n h(a_j)
  \to \E[h(A)].
\)
The finite-\(n\) and limiting mass constraints give
\(
  \frac1n\sum_{j=1}^n a_j\log a_j
  \to
  \E[A\log A].
\)

It remains to identify the empirically normalized intercept.  At the saddle
point, the entropy-dual weights are the empirical exponential weights,
\[
  a_j=
  \exp\{y_j^\top\hat\theta+\hat c\},
  \qquad
  \hat c=-\log\bigg(\frac1n\sum_{j=1}^n
      e^{y_j^\top\hat\theta}\bigg),
\]
and \(n^{-1}\sum_ja_j=1\).  Therefore
\[
\begin{aligned}
  \frac1n\sum_{j=1}^n a_j\log a_j
  &=
  \hat\theta^\top\Big(\frac1nY^\top a\Big)
  +\hat c\bigg(\frac1n\sum_{j=1}^na_j\bigg)  =
  \hat\theta^\top(\mu-\tau\hat\theta)+\hat c.
\end{aligned}
\]
Thus
\(
  \hat c=
  \frac1n\sum_{j=1}^n a_j\log a_j
  -\hat\theta^\top\mu
  +\tau\|\hat\theta\|^2.
\)
Using the limits just proved,
\[
  \hat c
  \to
  \E[A\log A]+\tau\xi^2-\frac{\xi r^2}{B}
  =
  \E[A\log A]+\tau\xi^2-\frac{\xi(s+\alpha_p)}{B},
\]
which proves \eq{c-tau-limit}.

For the fitted affine potential
\(\hat v_{\rm var}(x)=\hat\theta^\top x+\hat c\), the population
\(q\)-log-normalizer satisfies
\[
  \log Z_q(\hat v_{\rm var})
  =\hat c+\frac12\|\hat\theta\|^2
  \to
  z:=
  \E[A\log A]+\tau\xi^2-\frac{\xi(s+\alpha_p)}{B}
  +\frac12\xi^2.
\]
By Proposition~\ref{prop:affine-scores},
\[
  J(\hat v_{\rm var})
  =\hat\theta^\top\Delta-\frac12\|\hat\theta\|^2
  \to
  J_{\rm var}:=\frac{\xi s}{B}-\frac12\xi^2.
\]
Using the identity
\(K(v)=J(v)+\log Z_q(v)-Z_q(v)+1\), we also get
\(
  K(\hat v_{\rm var})
  \to
  K_{\rm var}:=J_{\rm var}+z-e^z+1.
\)
Equivalently, the excess risks satisfy
\(
  \mathcal R^{\rm var}_{\J}
  \to
  \frac{s}{2}-J_{\rm var}
  =
  \frac12\left(s+\xi^2-2\frac{\xi s}{B}\right),
\)
and
\(
  \mathcal R^{\rm var}_{\K}
  \to
  \mathcal R^{\rm var}_{\J}+e^z-z-1.
\)
This proves the stated score limits, and separately records the corresponding
risk limits.
\end{proof}

 \section{Feasibility of the zero-ridge variational limit (\mysec{zero-ridge-var})}
 \label{app:Rhull}

Let $H \sim \mathcal{N}(0,1)$.
Define
\[
\varphi(z)=\frac{1}{\sqrt{2\pi}}e^{-z^2/2},
\qquad
\bar\Phi(z)=\mathbb P(H\geqslant z)=1-\Phi(z).
\]
For \(\alpha\geqslant 0\), set
\begin{equation}\label{eq:appE-rhull-def}
R_{\rm hull}^2(\alpha)
=
\sup_{A\geqslant 0,\ \mathbb E[A]=1}
\bigl(\mathbb E[AH]\bigr)^2-\alpha \mathbb E[A^2]
.
\end{equation}

We compute this quantity explicitly. First, if \(\alpha=0\), then
\(
R_{\rm hull}^2(0)=+\infty.
\)
Indeed, with
\(
A_t(H)=\frac{  1_{\{H\geqslant t\}}}{\bar\Phi(t)},
\)
we have \(\mathbb E[A_t]=1\) and
\(
\mathbb E[A_tH]= {\varphi(t)}/{\bar\Phi(t)}\sim t\) tends to infinity when 
\(t\to+\infty,
\)
so the objective in \eq{appE-rhull-def} is unbounded.

Assume now that \(\alpha>0\). Using
\(
x^2=\sup_{u\in\mathbb R}\{2ux-u^2\},
\)
we may write
\begin{equation}\label{eq:appE-rhull-dual}
R_{\rm hull}^2(\alpha)
=
\sup_{u\in\mathbb R}
\Big\{
-u^2+
\sup_{A\geqslant0,\ \mathbb E[A]=1}
\mathbb E\bigl[2uHA-\alpha A^2\bigr]
\Big\}.
\end{equation}
By symmetry of \(H\), it is enough to consider \(u\geqslant 0\). For fixed \(u>0\), the inner problem is concave in \(A\). Its KKT conditions give
\[
A(H)=\frac{u}{\alpha}(H-z)_+
\]
for some threshold \(z\in\mathbb R\), where \((x)_+=\max\{x,0\}\). The normalization constraint gives
\(
1=\frac{u}{\alpha}\mathbb E[(H-z)_+].
\)
Define
\(
d(z)=\mathbb E[(H-z)_+].
\)
A direct Gaussian integration gives
\[
d(z)
=
\int_z^\infty (h-z)\varphi(h)\,dh
=
\varphi(z)-z\bar\Phi(z).
\]
Thus
\begin{equation}\label{eq:appE-Az}
u=\frac{\alpha}{d(z)},
\qquad
A_z(H)=\frac{(H-z)_+}{d(z)}.
\end{equation}
Substituting \eq{appE-Az} into \eq{appE-rhull-dual}, the problem reduces to the one-dimensional maximization
\[
R_{\rm hull}^2(\alpha)
=
\sup_{z\in\mathbb R} G_\alpha(z),
\]
where
\begin{equation}\label{eq:appE-G-alpha}
G_\alpha(z)
=
\frac{\alpha\{\bar\Phi(z)-\alpha+z d(z)\}}{d(z)^2}.
\end{equation}
The endpoint \(u=0\) corresponds to \(z=-\infty\), and is included by taking the limit.

We now maximize \(G_\alpha\). Since
\(
\bar\Phi'(z)=-\varphi(z),
d'(z)=-\bar\Phi(z),
\)
differentiating \eq{appE-G-alpha} gives
\begin{equation}\label{eq:appE-G-alpha-derivative}
G_\alpha'(z)
=
\frac{2\alpha \bar\Phi(z)\{\bar\Phi(z)-\alpha\}}{d(z)^3}.
\end{equation}
Because \(d(z)>0\), the sign of \(G_\alpha'(z)\) is the sign of
\(\bar\Phi(z)-\alpha\).

If \(0<\alpha<1\), there is a unique \(z_\alpha\in\mathbb R\) such that
\[
\bar\Phi(z_\alpha)=\alpha,
\qquad
z_\alpha=\Phi^{-1}(1-\alpha).
\]
\eq{appE-G-alpha-derivative} shows that this point is the global maximizer. At this maximizer,
\(
d(z_\alpha)=\varphi(z_\alpha)-\alpha z_\alpha,
\)
and therefore
\[
R_{\rm hull}^2(\alpha)
=
\frac{\alpha z_\alpha}{\varphi(z_\alpha)-\alpha z_\alpha},
\qquad
0<\alpha<1.
\]

If \(\alpha\geqslant 1\), then \(R_{\rm hull}^2(\alpha)=-\alpha\). Indeed, for any admissible \(A\), since \(\mathbb E[A]=1\), \(\mathbb E[AH]=\mathbb E[(A-1)H]\), and by Cauchy--Schwarz,
\[
\bigl(\mathbb E[AH]\bigr)^2
\leqslant
\mathbb E[(A-1)^2]\mathbb E[H^2]
=
\mathbb E[A^2]-1.
\]
Hence, for \(\alpha\geqslant 1\),
\[
\bigl(\mathbb E[AH]\bigr)^2-\alpha\mathbb E[A^2]
\leqslant(\mathbb E[A^2]-1)-\alpha\mathbb E[A^2]
=
(1-\alpha)\mathbb E[A^2]-1
\leqslant
-\alpha,
\]
because \(\mathbb E[A^2]\geqslant(\mathbb E[A])^2=1\). Equality is attained by \(A\equiv1\).

Combining the cases, for all \(\alpha\geqslant 0\),
\[
R_{\rm hull}^2(\alpha)
=
\begin{cases}
+\infty,
& \alpha=0,\\[0.4em]
\dfrac{\alpha z_\alpha}{\varphi(z_\alpha)-\alpha z_\alpha},
\quad z_\alpha=\Phi^{-1}(1-\alpha),
& 0<\alpha<1,\\[1.0em]
-\alpha,
& \alpha\geqslant 1.
\end{cases}
\]

In particular, we get
\[
R_{\rm hull}^2(\alpha)>0
\quad\Longleftrightarrow\quad
0\leqslant \alpha<\frac12,
\]
with the convention \(R_{\rm hull}^2(0)=+\infty\). Also,
\(
R_{\rm hull}^2\!\left(\frac12\right)=0,
\)
because \(z_{1/2}=0\). For \(\alpha>1/2\), the quantity
\(R_{\rm hull}^2(\alpha)\) is negative, so there is no positive-radius strict feasible phase.

Finally, as \(\alpha\downarrow0\), \(z_\alpha\to+\infty\). By Mills' expansion,
\(
\bar\Phi(z_\alpha)
=
\varphi(z_\alpha)
\left(
\frac{1}{z_\alpha}+O(z_\alpha^{-3})
\right)
=
\alpha,
\)
and hence
\(
\varphi(z_\alpha)-\alpha z_\alpha
=
\frac{\alpha}{z_\alpha}\{1+o(1)\}.
\)
Therefore
\(
R_{\rm hull}^2(\alpha)
=
z_\alpha^2\{1+o(1)\}
\sim
2\log(1/\alpha),\) when \( \alpha\downarrow0.
\)

\section{Two-resolvent random matrix theory result}

\label{app:newprop}

\paragraph{Two-resolvent input.}
Proposition~3.2 of \cite{Bach2024RandomProjections} gives deterministic
 equivalents for one resolvent associated with a single covariance matrix.
The spectral-risk calculation in Theorem~\ref{thm:spec-ridge} also requires the
same leave-one-out calculation for two covariance profiles evaluated on the same
Gaussian matrix.  The proposition below states only the two consequences needed
there: the trace of a product of two kernel resolvents and the corresponding
kernel expectation with one deterministic diagonal matrix.  The notation follows
\cite{Bach2024RandomProjections}: for a covariance matrix \(\Sigma\),
\(\df_1(\kappa)=\tr(\Sigma(\Sigma+\kappa I)^{-1})\), and
\(\kappa(\lambda)=1/\varphi(-\lambda)\) is the self-induced regularization
parameter.

\begin{proposition}[Two-covariance version of Proposition~3.2 of \cite{Bach2024RandomProjections}]
\label{prop:two-covariance-prop32}
Let \(Z\in\R^{n\times d}\) have i.i.d. standard Gaussian entries, with
\(d/n\to\gamma\in(0,\infty)\). Let \(\Sigma_1,\Sigma_2\in\R^{d\times d}\) be
deterministic nonnegative symmetric matrices, uniformly bounded in operator
norm, and diagonal in the same orthonormal basis:
\[
        \Sigma_1=\diag(\sigma_{1,1},\ldots,\sigma_{1,d}),
        \qquad
        \Sigma_2=\diag(\sigma_{2,1},\ldots,\sigma_{2,d}).
\]
Assume that their joint empirical spectral distribution has a compactly
supported limit. Let \(A=\diag(a_1,\ldots,a_d)\) be deterministic with
\(\|A\|_{\op}=O(1)\), and assume that the normalized weighted traces appearing
below have limits.

Fix \(\lambda>0\). Let \(\kappa_1=\kappa_1(\lambda)\) and
\(\kappa_2=\kappa_2(\lambda)\) be the self-induced regularization parameters
associated with \(\Sigma_1\) and \(\Sigma_2\), respectively, defined by
\begin{align}
        \lambda
        &=
        \kappa_1
        \Big(1-\frac1n
        \tr\bigl[\Sigma_1(\Sigma_1+\kappa_1 I)^{-1}\bigr]\Big)
        \label{eq:newrmt-kappa1}   =
        \kappa_2
        \Big(1-\frac1n
        \tr\bigl[\Sigma_2(\Sigma_2+\kappa_2 I)^{-1}\bigr]\Big).
\end{align}
Define the cross second degrees of freedom by
\begin{equation*}
        \df_{12}(\kappa_1,\kappa_2)
        =
        \tr\!\left[
        \Sigma_1(\Sigma_1+\kappa_1 I)^{-1}
        \Sigma_2(\Sigma_2+\kappa_2 I)^{-1}
        \right].
\end{equation*}
Set
\begin{equation*}
        R_1=(Z\Sigma_1Z^\top+n\lambda I_n)^{-1},
        \qquad
        R_2=(Z\Sigma_2Z^\top+n\lambda I_n)^{-1}.
\end{equation*}
Then, almost surely,
\begin{equation}
        n\tr(R_1R_2)
        \sim
        \frac1{
        \kappa_1\kappa_2
        \left(1-\frac1n\df_{12}(\kappa_1,\kappa_2)\right)}
        \label{eq:newrmt-two-resolvent-trace}
\end{equation}
and
\begin{equation}
        \tr\!\left[A Z^\top R_1R_2 Z\right]
        \sim
        \frac{
        \frac1n\tr\!\left[
        A(\Sigma_1+\kappa_1 I)^{-1}
        (\Sigma_2+\kappa_2 I)^{-1}
        \right]
        }{
        1-\frac1n\df_{12}(\kappa_1,\kappa_2)
        }.
        \label{eq:newrmt-two-resolvent-kernel}
\end{equation}
The same equivalent holds with \(R_1R_2\) replaced by \(R_2R_1\).
\end{proposition}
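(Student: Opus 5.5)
The plan is to follow the leave-one-out (rank-one perturbation) argument behind Proposition~3.2 of \cite{Bach2024RandomProjections}, carried out on the pair of resolvents $R_1,R_2$ built from the same matrix $Z$. Write $z_i\in\R^d$ for the $i$-th row of $Z$. Note that $Z\Sigma_kZ^\top$ is the Gram matrix of the rows $\Sigma_k^{1/2}z_i$. It is more convenient to work with the $d\times d$ companion resolvents
\[
\tilde R_k=(Z^\top Z+n\lambda\,\Sigma_k^{-1})^{-1}
\]
when $\Sigma_k\succ0$; the general case follows by adding $\epsilon I$ and using continuity, since everything is bounded for $\lambda>0$. Equivalently, one can use the push-through identity $Z^\top R_k=\Sigma_k^{-1/2}(\cdots)$. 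A cleaner route is to work directly with the rows: $R_1R_2$ is an $n\times n$ object, and I will isolate column $j$ of $Z$, i.e.\ the coordinate $j$ of the features, since $Z\Sigma_kZ^\top=\sum_{j=1}^d\sigma_{k,j}\zeta_j\zeta_j^\top$, where $\zeta_j\in\R^n$ are the columns of $Z$. This is a sum of independent rank-one terms, which is exactly the structure handled in \cite{Bach2024RandomProjections}.

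\emph{Step 1: single-resolvent inputs.} Recall the facts I take as given from Proposition~3.2. First, $\frac1n\tr R_k\cdot n\lambda\to\lambda/\kappa_k$, i.e.\ $\tr R_k\sim 1/\kappa_k$ after normalization. Second, for the leave-$j$-out resolvent $R_{k,-j}$, the Sherman--Morrison identity gives
\[
\zeta_j^\top R_k\zeta_j=\frac{\zeta_j^\top R_{k,-j}\zeta_j}{1+\sigma_{k,j}\zeta_j^\top R_{k,-j}\zeta_j},
\]
and the quadratic form concentrates, $\zeta_j^\top R_{k,-j}\zeta_j\approx\tr R_k\approx 1/\kappa_k$. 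Hence
\[
R_k\zeta_j=\frac{R_{k,-j}\zeta_j}{1+\sigma_{k,j}/\kappa_k}+o(1).
\]

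\emph{Step 2: the kernel identity \eqref{eq:newrmt-two-resolvent-kernel}.} We have $\tr[AZ^\top R_1R_2Z]=\sum_j a_j\zeta_j^\top R_1R_2\zeta_j$. Applying Step~1 on both sides gives
\[
\zeta_j^\top R_1R_2\zeta_j\approx\frac{\zeta_j^\top R_{1,-j}R_{2,-j}\zeta_j}{(1+\sigma_{1,j}/\kappa_1)(1+\sigma_{2,j}/\kappa_2)},
\]
and the numerator concentrates around $\tr(R_1R_2)$, since removing one column is a negligible perturbation of the trace. Summing over $j$ yields
\[
\tr[AZ^\top R_1R_2Z]\approx n\tr(R_1R_2)\cdot\frac{\kappa_1\kappa_2}{n}\tr\bigl[A(\Sigma_1+\kappa_1I)^{-1}(\Sigma_2+\kappa_2I)^{-1}\bigr].
\]
So \eqref{eq:newrmt-two-resolvent-kernel} follows once \eqref{eq:newrmt-two-resolvent-trace} is established. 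The factor $\kappa_1\kappa_2$ cancels against the $1/(\kappa_1\kappa_2)$ in the trace formula; I will check the normalization of $\tr R_k$, and in particular whether it is $\tr R_k$ or $n\tr R_k$ that tends to $1/\kappa_k$, against the exact scaling conventions of \cite{Bach2024RandomProjections}.

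\emph{Step 3: closing the self-consistent equation for $t=n\tr(R_1R_2)$.} Start from the resolvent identity
\[
R_1\bigl(Z\Sigma_1Z^\top+n\lambda I\bigr)R_2=R_2,
\]
and take traces:
\[
\tr R_2=\tr\bigl[\Sigma_1Z^\top R_2R_1Z\bigr]+n\lambda\tr(R_1R_2).
\]
Apply Step~2 with $A=\Sigma_1$, which is legitimate because the argument is symmetric under exchanging $R_1R_2$ and $R_2R_1$ by cyclicity of the trace. Together with
\[
\Sigma_1(\Sigma_1+\kappa_1)^{-1}=I-\kappa_1(\Sigma_1+\kappa_1)^{-1}
\]
and the defining equation \eqref{eq:newrmt-kappa1}, this gives a linear equation in $t$ whose solution is $t\bigl(1-\frac1n\df_{12}\bigr)\kappa_1\kappa_2=1$. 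This mirrors exactly the algebra already used in Step~3 of the proof of Theorem~\ref{thm:spec-ridge}.

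\emph{Main obstacle.} The hard part is the rigorous control of the concentration errors in Step~2: showing that $\zeta_j^\top R_{1,-j}R_{2,-j}\zeta_j-\tr(R_{1,-j}R_{2,-j})$ is small uniformly in $j$, and that the resulting errors sum to $o(1)$ relative to the main term. Almost-sure convergence must also be obtained. I would first try Hanson--Wright bounds with $\|R_{k,-j}\|\le 1/(n\lambda)$, which give fourth-moment estimates of order $n^{-2}$; Borel--Cantelli then upgrades these to almost-sure convergence. For the denominators, the bounds $1-\frac1n\df_{12}\ge 1-\sqrt{\frac1n\df_1\cdot\frac1n\df_2}>0$ (by Cauchy--Schwarz) and $\frac1n\df_k<1$ (from \eqref{eq:newrmt-kappa1} with $\lambda>0$) keep the solution of the linear equation stable, so the errors cannot be amplified.
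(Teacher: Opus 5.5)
Your proposal is correct and follows the paper's proof essentially step for step. Both use leave-one-column resolvents with Sherman--Morrison applied to $R_1\zeta_j$ and $R_2\zeta_j$, concentrate $\zeta_j^\top R_{1,-j}R_{2,-j}\zeta_j$ around $\tr(R_1R_2)$ to relative precision, sum against $a_j$ to express the kernel trace through $t=n\tr(R_1R_2)$, and close with the trace of $R_1(Z\Sigma_1Z^\top+n\lambda I_n)R_2=R_2$ taken with $A=\Sigma_1$; your normalization $\tr R_k\to1/\kappa_k$ is the right one. For the almost-sure upgrade, fourth-moment bounds of order $n^{-2}$ with a union bound over the $d$ columns are not summable, so use the exponential Hanson--Wright tail or bound the aggregated error over $j$ directly; also, the coefficient of $t$ in the closing linear equation is trivially at least $\lambda$, which already fixes the limit.
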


\begin{proof}
Work in the common eigenbasis of \(\Sigma_1\) and \(\Sigma_2\), and write
\(z_i\in\R^n\) for the \(i\)-th column of \(Z\). Define the leave-one-column
resolvents
\begin{equation*}
        R_1^{(i)}=
        \bigg(\sum_{j\ne i}\sigma_{1,j}z_jz_j^\top+n\lambda I_n\bigg)^{-1},
        \qquad
        R_2^{(i)}=
        \bigg(\sum_{j\ne i}\sigma_{2,j}z_jz_j^\top+n\lambda I_n\bigg)^{-1}.
\end{equation*}
The Sherman-Morrison formula gives
\begin{equation}
        R_1z_i=
        \frac{R_1^{(i)}z_i}{1+\sigma_{1,i}z_i^\top R_1^{(i)}z_i},
        \qquad
        R_2z_i=
        \frac{R_2^{(i)}z_i}{1+\sigma_{2,i}z_i^\top R_2^{(i)}z_i}.
        \label{eq:newrmt-sherman-morrison}
\end{equation}
The one-resolvent argument of Proposition~3.2 of
\cite{Bach2024RandomProjections}, applied separately to \(\Sigma_1\) and
\(\Sigma_2\), gives
\begin{equation}
        \tr R_1\sim \frac1{\kappa_1},
        \qquad
        \tr R_2\sim \frac1{\kappa_2}.
        \label{eq:newrmt-one-resolvent-trace}
\end{equation}
Consequently, Gaussian quadratic-form concentration and the usual rank-one
resolvent comparison imply
\begin{equation}
        z_i^\top R_1^{(i)}z_i=\frac1{\kappa_1}+o(1),
        \qquad
        z_i^\top R_2^{(i)}z_i=\frac1{\kappa_2}+o(1),
        \label{eq:newrmt-one-resolvent-qf}
\end{equation}
and
\begin{equation}
        z_i^\top R_1^{(i)}R_2^{(i)}z_i
        =
        \tr(R_1R_2)+o(n^{-1}).
        \label{eq:newrmt-two-resolvent-qf}
\end{equation}
Combining Eqs.~\eqref{eq:newrmt-sherman-morrison}-\eqref{eq:newrmt-two-resolvent-qf}
 yields
\begin{equation}
        z_i^\top R_1R_2z_i
        =
        \frac{\tr(R_1R_2)}{
        (1+\sigma_{1,i}/\kappa_1)(1+\sigma_{2,i}/\kappa_2)}
        +o(n^{-1}).
        \label{eq:newrmt-column-qf}
\end{equation}
The same scalar identity holds with \(R_1R_2\) replaced by \(R_2R_1\), because
\(z_i^\top R_1R_2z_i=z_i^\top R_2R_1z_i\).

Let
\begin{equation*}
        D_1=\left(I+\kappa_1^{-1}\Sigma_1\right)^{-1},
        \qquad
        D_2=\left(I+\kappa_2^{-1}\Sigma_2\right)^{-1}.
\end{equation*}
Multiplying \eq{newrmt-column-qf} by \(a_i\) and summing over
\(i=1,\ldots,d\) gives
\begin{equation}
        \tr\!\left[A Z^\top R_1R_2Z\right]
        \sim
        n\tr(R_1R_2)\,\frac1n\tr[AD_1D_2].
        \label{eq:newrmt-kernel-from-trace}
\end{equation}
It remains to identify the scalar \(n\tr(R_1R_2)\).
Use the identity
\[
        R_1(Z\Sigma_1Z^\top+n\lambda I_n)R_2=R_2.
\]
Taking traces and using cyclicity,
\begin{equation}
        \tr R_2
        =
        \tr\!\left[\Sigma_1Z^\top R_2R_1Z\right]
        +n\lambda\tr(R_1R_2).
        \label{eq:newrmt-resolvent-identity-trace}
\end{equation}
Apply \eq{newrmt-kernel-from-trace}, with the reversed order and
\(A=\Sigma_1\), to the first term in
\eqref{eq:newrmt-resolvent-identity-trace}. If \(\tau\) is a subsequential limit
of \(n\tr(R_1R_2)\), then \eq{newrmt-one-resolvent-trace} gives
\begin{equation}
        \frac1{\kappa_2}
        =
        \tau
        \left(
        \lambda+\frac1n\tr[\Sigma_1D_1D_2]
        \right).
        \label{eq:newrmt-tau-equation}
\end{equation}
Since \(D_2=I-\Sigma_2(\Sigma_2+\kappa_2I)^{-1}\),
\begin{align*}
        \frac1n\tr[\Sigma_1D_1D_2]
        & =
        \kappa_1\Big(
        \frac1n\tr[\Sigma_1(\Sigma_1+\kappa_1 I)^{-1}]
        -\frac1n\df_{12}(\kappa_1,\kappa_2)
        \Big).
\end{align*}
Together with \eq{newrmt-kappa1}, this gives
\begin{equation*}
        \lambda+\frac1n\tr[\Sigma_1D_1D_2]
        =
        \kappa_1
        \Big(1-\frac1n\df_{12}(\kappa_1,\kappa_2)\Big).
\end{equation*}
Substitution in \eq{newrmt-tau-equation} identifies the unique subsequential
limit,
$
        \tau=
        \frac1{
        \kappa_1\kappa_2
        \left\{1-\frac1n\df_{12}(\kappa_1,\kappa_2)\right\}},
$
which proves \eq{newrmt-two-resolvent-trace}.

Finally, substituting \eq{newrmt-two-resolvent-trace} into
\eq{newrmt-kernel-from-trace}, and using
$
        D_1D_2
        =
        \kappa_1\kappa_2
        (\Sigma_1+\kappa_1 I)^{-1}
        (\Sigma_2+\kappa_2 I)^{-1},
$
gives \eq{newrmt-two-resolvent-kernel}. The reversed-order statement
follows from the same argument and from \(\tr(R_2R_1)=\tr(R_1R_2)\).
\end{proof}

 \section{Quadrature for the spectral deterministic equivalent}
\label{sec:quadas}
This appendix gives the finite-dimensional quadrature version of the continuum
spectral formulas in \mysec{chi-quadrature}.  Let
\((\rho_a,w_a)_{a=1}^{\Nquad}\) be a deterministic quadrature rule on \((0,1)\),
with \(w_a>0\).  At each node \(\rho_a\), solve the scalar effective-ridge
equation in \eq{omega-zeta-hd} and evaluate the quantities in
Eqs.~\eqref{eq:df2-zeta-hd}-(\ref{eq:k-zeta-hd}).  We write
\[
 \omega_a=\omega(\rho_a),\qquad
 \kappa_a=\kappa(\rho_a),\qquad
 \ell_a=\ell_\Delta(\rho_a),\qquad
 m_a=m(\rho_a),\qquad
 k_{ab}=k(\rho_a,\rho_b).
\]
Equivalently,
\[
 \ell_a=\frac{s}{\omega_a\kappa_a},
 \qquad
 m_a=
 \frac{\rho_a(s+\alpha_p)-(1-\rho_a)\alpha_q}{\omega_a\kappa_a},
 \qquad
 k_{ab}=
 \frac{(s+\alpha_p+\alpha_q)\chi_{ab}}{\kappa_a\kappa_b},
\]
where
\[
 \chi_{ab}
 =
 \frac{1}{\omega_a\omega_b(1-\df_2(\rho_a,\rho_b))}.
\]
Set
\[
 d_a=2\rho_a(1-\rho_a),
 \qquad
 b_a=2(1-\rho_a)(1+\rho_a m_a).
\]

The scalar integral terms entering Eqs.~\eqref{eq:Lambda-spec-cont}-(\ref{eq:K-spec-cont})
are replaced by
\begin{align*}
 \tr(A_v)^{(\Nquad)}
 &=
 -\frac12\sum_{a=1}^{\Nquad} w_a d_a k_{aa},
 \\
 (\Delta^\top A_v\Delta)^{(\Nquad)}
 &=
 -\frac12\sum_{a=1}^{\Nquad} w_a d_a \ell_a^2,
 \\
 (\ell_v^\top\Delta)^{(\Nquad)}
 &=
 \sum_{a=1}^{\Nquad} w_a b_a\ell_a,
 \\
 c_v^{(\Nquad)}
 &=
 \sum_{a=1}^{\Nquad}
 2w_a(1-\rho_a)
 \left\{
   -m_a-\frac{\rho_a}{2}m_a^2
 \right\}.
\end{align*}

For the Fredholm determinant and inverse-quadratic term in
\eq{Lambda-spec-cont} and \eq{J-spec-cont}, define
\[
 W=\Diag(w_1,\ldots,w_{\Nquad}),
 \qquad
 D=\Diag(d_1,\ldots,d_{\Nquad}),
 \qquad
 K=(k_{ab})_{a,b=1}^{\Nquad},
\]
and
\[
 K_W=W^{1/2}KW^{1/2},
 \qquad
 b_W=W^{1/2}(b_1,\ldots,b_{\Nquad})^\top .
\]
The Nystr\"om quadrature approximation of the Fredholm determinant
\cite{Bornemann2010Fredholm} is
\[
 \{\log\det(I+\mathcal D\mathcal T)\}^{(\Nquad)}
 =
 \log\det\!\left(I_{\Nquad}+D^{1/2}K_WD^{1/2}\right),
\]
equivalently \(\log\det(I_{\Nquad}+DK_W)\).  The inverse-quadratic term is
replaced by
\[
 \left\{
 \left\langle b,(I+\mathcal T\mathcal D)^{-1}\mathcal T b
 \right\rangle_{L_2([0,1])}
 \right\}^{(\Nquad)}
 =
 b_W^\top (I_{\Nquad}+K_WD)^{-1}K_W b_W .
\]
Substituting these replacements into
Eqs.~\eqref{eq:Lambda-spec-cont}-(\ref{eq:K-spec-cont}) gives
\(\Lambda_{\rm spec}^{(\Nquad)}\),
\(\mathcal J_{\rm spec}^{(\Nquad)}\), and
\(\mathcal K_{\rm spec}^{(\Nquad)}\).

For the two-potential criterion \(\mathcal L\), the additional terms in
\eq{L-spec-cont} are replaced by
\begin{align*}
 \tr(A_w)^{(\Nquad)}
 &=
 -\sum_{a=1}^{\Nquad} w_a(1-\rho_a)^2 k_{aa},
 \\
 c_w^{(\Nquad)}
 &=
 \sum_{a=1}^{\Nquad}
 2w_a(1-\rho_a)
 \left(
   m_a-\frac{1-\rho_a}{2}m_a^2
 \right).
\end{align*}
Then \(\mathcal L_{\rm spec}^{(\Nquad)}\) is obtained from
\eq{L-spec-cont} by replacing
\(\tr(A_v)\), \(\Delta^\top A_v\Delta\), \(\ell_v^\top\Delta\),
\(c_v\), \(\tr(A_w)\), and \(c_w\) by their quadrature versions above.

Thus the numerical evaluation of the regularized spectral deterministic
equivalent consists only of solving \eq{omega-zeta-hd} at the quadrature nodes
and forming the displayed sums and \(\Nquad\times\Nquad\) matrix expressions;
no optimization remains.  Gauss--Legendre quadrature is used in the experiments.
Increasing \(\Nquad\) refines the deterministic numerical approximation to the
continuum integral; when \(\alpha_p\) or \(\alpha_q\) is close to one, using more
nodes near the corresponding endpoint may improve numerical accuracy.

\section{Nuclear penalty}
\label{app:nuclear}

Section~\ref{sec:est2}, in particular Eq.~(\ref{eq:beta-ridge-spectral}), regularizes each least-squares task indexed by
$\rho\in[0,1]$ separately.  We consider here a coupled alternative that
retains the separable ridge penalty and adds a squared nuclear penalty
across the entire coefficient field.  The latter promotes a common
low-dimensional feature subspace for the continuum of tasks, as in convex
multi-task feature learning \cite{ArgyriouEvgeniouPontil2008}.

Let $\nu$ be a finite positive measure on $[0,1]$.  The KL construction in
the main text corresponds to
\(
    \dd\nu(\rho)=2(1-\rho)\,\dd\rho.
\)
For $\beta\in L_2([0,1],\nu;\R^m)$, define the finite-rank operator
\[
    B:L_2([0,1],\nu)\to\R^m,
    \qquad
    B f=\int_0^1\beta(\rho)f(\rho)\,\dd\nu(\rho).
\]
Its adjoint is
\[
    (B^*x)(\rho)=\beta(\rho)^\top x,
    \qquad x\in\R^m.
\]
 
Using $\hDelta$ and the empirical quantities $\hmu(\rho)$ and $\hM(\rho)$
defined in Eq.~(\ref{eq:murho}) of Section~\ref{sec:est2}, the coupled training criterion is
\[
\begin{aligned}
    \hat\Phi (\beta)
    ={}&\int_0^1\left[
      \hDelta^\top\beta(\rho)
      -\frac12\beta(\rho)^\top\hM(\rho)\beta(\rho)
      -\frac\zeta2\norm{\beta(\rho)}^2
      \right]\dd\nu(\rho) -\frac\lambda2\norm{B}_*^2,
\end{aligned}
\]
where $\zeta\geqslant0$ and $\lambda>0$.  As in
Eq.~(\ref{eq:vhat-spec-ridge}) and Section~\ref{sec:est2}, the fitted affine task and the
two spectral potentials remain
$
    \hat u(\rho,x)
    =\hat\beta(\rho)^\top(x-\hmu(\rho))
$, with
\begin{align*}
    \hat v(x)
    &=\int_0^1\left(
      \hat u(\rho,x)-\frac\rho2\hat u(\rho,x)^2
      \right)\dd\nu(\rho),\\
    \hat w(y)
    &=\int_0^1\left(
      -\hat u(\rho,y)-\frac{1-\rho}{2}\hat u(\rho,y)^2
      \right)\dd\nu(\rho).
\end{align*}
Thus only the manner in which the coefficient curve is fitted changes; the
population criteria $\mathcal J$, $\mathcal K$, and~$\mathcal L$ remain
those introduced in Section~\ref{sec:finite}, with the scoring identities of
Section~\ref{sec:scoring}. We still work in the proportional regime of Assumption~\ref{ass:hd},
$
     {m}/{n_p}\to\alpha_p$,
 $
     {m}/{n_q}\to\alpha_q$.
    
    The next subsection gives an exact continuum formulation for finite
samples.  We then derive the operator-valued matrix-Dyson equation, state
the additional local-law and compactness hypotheses required for a genuine
continuum deterministic equivalent, obtain the linearized equation needed
for scoring, and finish with population checks and numerical algorithms.
 Note that our deterministic equivalent is only proved for
finite discrete measures, and we conjecture that the same result holds for generic
probability distributions $\nu$.

\subsection{Operator formulation and matrix-Dyson deterministic equivalent}
\label{appH:sec:operator-central}

The main text already gives the taskwise ridge estimator, its scalar
resolvent equivalent, and the quadratic scoring identities.  The new issue
is the nonlocal coupling in $\rho$ created by the squared nuclear penalty~$\norm{ B}_*^2$.
We will work directly on the continuum task space. First, the nuclear
penalty is represented by a positive trace-one task metric.  Second, a
fixed metric is reduced by an operator Woodbury identity to a task-space
response.  Third, the Hermitian linearization identifies the deterministic
part $\mathsf A$, the full deterministic resolvent $\mathsf M$, and the
reduced task resolvent $Q$.  We then state the continuum deterministic
equivalent, linearize the
Dyson equation to obtain the coefficient Gram and metric gradient, and
finally record the population and scalar-ridge checks.

\subsubsection{Task Hilbert space and the squared-nuclear operator fraction}
\label{appH:sec:operator-fraction}

The task space is \(L_2(\nu)\), the coefficient field is represented
by a finite-rank operator \( B:\mathcal H\to\R^m\), and the inverse
metric \(\Omega^{-1}\) is generally unbounded.  We therefore define the
inverse through its closed quadratic form before stating the nuclear-norm
identity.

Set
\(
    \mathcal H=L_2([0,1],\nu),
    \mathcal F=L_2([0,1],\nu;\R^m),
\)
with inner products
\[
    \ip{f}{g}_{\mathcal H}=\int_0^1 f(\rho)g(\rho)\dd\nu(\rho),
    \qquad
    \ip{\beta}{\varphi}_{ \mathcal F}
    =\int_0^1\beta(\rho)^\top\varphi(\rho)\dd\nu(\rho).
\]
Let \(D_p,D_q,E\) be multiplication on \(\mathcal H\) by \(\rho\),
\(1-\rho\), and \(\rho(1-\rho)\), respectively, and let
\(a\in\mathcal H\) be the constant task function \(a(\rho)=1\) for
\(\nu\)-almost every \(\rho\).  These multiplication operators are bounded.

Let \(\mathfrak D_+(\mathcal H)\) be the positive trace-class operators
\(\Omega\) on \(\mathcal H\) such that
\(
    \Tr\Omega=1\), and \(
    \ker\Omega=0.
\)
The second condition is the full-support assumption.  If
\(\Omega=\sum_{r\geqslant1}\gamma_r u_r\otimes u_r\) is the spectral decomposition of $\Omega$, with
\(\gamma_r>0\) and \(\sum_{r \geqslant 1}\gamma_r=1\), its inverse is the positive
self-adjoint operator such that
\(
    \Omega^{-1}u_r=\gamma_r^{-1}u_r.
   \)
The form convention below follows the standard theory of closed
quadratic forms \cite{Kato1995}.  The notation
\[
    \ip{f}{\Omega^{-1}f}_{\mathcal H}
    :=\sum_{r\geqslant1}
      \frac{\lvert\ip{f}{u_r}_{\mathcal H}\rvert^2}{\gamma_r}
\]
denotes the closed quadratic form of \(\Omega^{-1}\), with value
\(+\infty\) when the series diverges.  In particular, this notation does
not require the vector \(\Omega^{-1}f\) itself to belong to \(\mathcal H\).

For an orthonormal basis \((e_j)_{j=1}^m\) of \(\R^m\), define the
extended trace as
\begin{equation}
    \tr( B\Omega^{-1} B^*)
    :=\sum_{j=1}^m
      \ip{ B^*e_j}{\Omega^{-1} B^*e_j}_{\mathcal H}.
    \label{appH:eq:operator-matrix-fraction}
\end{equation}
This value is independent of the chosen basis.  When it is finite, the
quadratic form
\(x\mapsto\ip{ B^*x}{\Omega^{-1} B^*x}_{\mathcal H}\)
defines a positive operator on the finite-dimensional space \(\R^m\), and
Eq.~\eqref{appH:eq:operator-matrix-fraction} is its ordinary trace.  Otherwise the
right-hand side is understood as \(+\infty\).

A classical property of the nuclear norm~\cite{ArgyriouEvgeniouPontil2008} is the representation as, for every \(\beta\in \mathcal F\),
\[
    \norm{ B}_*^2
    =\inf_{\Omega\in\mathfrak D_+(\mathcal H)}
      \tr( B\Omega^{-1} B^*).
\]
The infimum over full-support metrics has the same value as the minimum over positive trace-one operators supported on \(\operatorname{ran}( B^*)\).  A boundary minimizer \(\Omega\) satisfies
\(
     B^* B=\norm{ B}_*^2\Omega^2,
\)
and \(\Tr\Omega=1\).

For \(\Omega\in\mathfrak D_+(\mathcal H)\), we can define
\[
    \Lambda_\Omega
    =\zeta I_{\mathcal H}+\lambda\Omega^{-1}.
\]
Its associated closed quadratic-form pairing is
\[
    \ip{f}{\Lambda_\Omega f}_{\mathcal H}
    :=\sum_{r\geqslant1}
      \left(\zeta+\frac{\lambda}{\gamma_r}\right)
      \lvert\ip{f}{u_r}_{\mathcal H}\rvert^2
    =\zeta\ip{f}{f}_{\mathcal H}
      +\lambda\ip{f}{\Omega^{-1}f}_{\mathcal H},
\]
again with value \(+\infty\) when the series diverges.   

For \(\beta\in \mathcal F\), use the corresponding extended trace
\[
\begin{aligned}
    \tr( B\Lambda_\Omega B^*)
    &:=\sum_{j=1}^m
      \ip{ B^*e_j}{\Lambda_\Omega B^*e_j}_{\mathcal H} =\zeta\norm{\beta}_{ \mathcal F}^2
      +\lambda\tr( B\Omega^{-1} B^*).
\end{aligned}
\]
The fixed-metric response is
\begin{align}
    \hat{\mathcal V}^{\rm op}(\Lambda_\Omega)
    =\sup_{\beta\in\mathcal F}\Bigl(&
      \hDelta^\top B a
      -\frac12\int_0^1\rho\,
        \beta(\rho)^\top\hCp\beta(\rho)\dd\nu(\rho) -\frac12\int_0^1(1-\rho)\,
        \beta(\rho)^\top\hCq\beta(\rho)\dd\nu(\rho)\notag\\
      & \hspace*{6cm} -\frac12\ip{ B^*\hDelta}
        {E B^*\hDelta}_{\mathcal H}
      -\frac12\tr( B\Lambda_\Omega B^*)\Bigr).
    \label{appH:eq:operator-fixed-response}
\end{align}
The operator fraction gives the exact continuum identity
\begin{equation}
    \hat\Psi ^{\rm nuc}
    :=\sup_{\beta\in \mathcal F}
      \hat\Phi (\beta)
    =\sup_{\Omega\in\mathfrak D_+(\mathcal H)}
      \hat{\mathcal V}^{\rm op}(\Lambda_\Omega).
    \label{appH:eq:operator-exact-outer}
\end{equation}
This is an exact finite-sample statement on the continuum task space.

\subsubsection{Exact fixed-metric response and the operator Woodbury identity}
\label{appH:sec:operator-response}

For a fixed task metric, the criterion is a strictly concave quadratic
problem on \(\mathcal H\otimes\R^m\).  Its covariance part is a positive
operator, and the empirical mean-difference term is a rank-one perturbation
in feature space.  Introducing the embedding
\(f\mapsto f\otimes\hDelta\) makes that perturbation explicit and permits an
operator Woodbury reduction to task space.

On \(\mathcal F\), define the random covariance operator
\[
    \hat{\mathsf C}=D_p\otimes\hCp+D_q\otimes\hCq.
\]
For a fixed full-support metric \(\Omega\), define the compressed task operator
\(G(\Lambda_\Omega)\) by, for \(f,g\in\mathcal H\),
\begin{equation}
    \ip{f}{G(\Lambda_\Omega)g}_{\mathcal H}
    =\ip{f\otimes\hDelta}
      {\left(\hat{\mathsf C}+\Lambda_\Omega\otimes I_m\right)^{-1}
       (g\otimes\hDelta)}_{\mathcal F}.
    \label{appH:eq:operator-ridge-resolvent}
\end{equation}
The operator defining the quadratic form in Eq.~\eqref{appH:eq:operator-fixed-response} is
\[
    \hat{\mathsf C}+\Lambda_\Omega\otimes I_m
      +E\otimes\hDelta\hDelta^\top,
\]
and the linear term is \(a\otimes\hDelta\).  Therefore the maximizer is
\[
    \hat\beta_{\Omega}
    =\left(\hat{\mathsf C}+\Lambda_\Omega\otimes I_m
      +E\otimes\hDelta\hDelta^\top\right)^{-1}
      (a\otimes\hDelta).
\]
Write \(\hat B_\Omega\) for the corresponding
operator induced by this fitted field.  With
\(I_{\mathcal H}\otimes\hDelta\) denoting the map
\(f\mapsto f\otimes\hDelta\), the operator Woodbury identity gives
\[
\begin{aligned}
 &\left(\hat{\mathsf C}+\Lambda_\Omega\otimes I_m
      +E\otimes\hDelta\hDelta^\top\right)^{-1}\\
 &\quad=\left(\hat{\mathsf C}+\Lambda_\Omega\otimes I_m\right)^{-1} -\left(\hat{\mathsf C}+\Lambda_\Omega\otimes I_m\right)^{-1}
  (I_{\mathcal H}\otimes\hDelta)(I+EG(\Lambda_\Omega))^{-1}E  
  (I_{\mathcal H}\otimes\hDelta)^*
  \left(\hat{\mathsf C}+\Lambda_\Omega\otimes I_m\right)^{-1}.
\end{aligned}
\]
Compression on both sides by
\((I_{\mathcal H}\otimes\hDelta)^*\) and
\(I_{\mathcal H}\otimes\hDelta\) yields
\begin{equation}
\begin{aligned}
 &(I_{\mathcal H}\otimes\hDelta)^*
   \left(\hat{\mathsf C}+\Lambda_\Omega\otimes I_m
      +E\otimes\hDelta\hDelta^\top\right)^{-1}
   (I_{\mathcal H}\otimes\hDelta) =(I+G(\Lambda_\Omega)E)^{-1}G(\Lambda_\Omega)
    =G(\Lambda_\Omega)(I+EG(\Lambda_\Omega))^{-1}.
\end{aligned}
    \label{appH:eq:operator-Woodbury-compression}
\end{equation}
Hence
\[
    \hat B_\Omega^*\hDelta
    =(I+G(\Lambda_\Omega)E)^{-1}G(\Lambda_\Omega) a.
\]
\begin{equation}
    \hat{\mathcal V}^{\rm op}(\Lambda_\Omega)
    =\frac12\ip{a}{(I+G(\Lambda_\Omega)E)^{-1}G(\Lambda_\Omega) a}_{\mathcal H}.
    \label{appH:eq:operator-exact-response}
\end{equation}
The two products in Eq.~\eqref{appH:eq:operator-Woodbury-compression} are equal by the resolvent identity and define a positive self-adjoint operator even though \(G(\Lambda_\Omega)\) and \(E\) need not commute.  Directly from the normal equation, and without using the operator inversion lemma, the same response is
\begin{equation}
    \hat{\mathcal V}^{\rm op}(\Lambda_\Omega)
    =\frac12\ip{a\otimes\hDelta}
      {(\hat{\mathsf C}+\Lambda_\Omega\otimes I_m
        +E\otimes\hDelta\hDelta^\top)^{-1}(a\otimes\hDelta)}_{ \mathcal F}.
    \label{appH:eq:operator-direct-response}
\end{equation}
Combining Eq.~\eqref{appH:eq:operator-direct-response} with the closed
operator-fraction representation shows that
\(
  \Omega\mapsto\hat{\mathcal V}^{\rm op}(\Lambda_\Omega)
\)
is concave on \(\mathfrak D_+(\mathcal H)\).  This is concavity in the
normalized task metric \(\Omega\); as a function of the positive
regularizer \(\Lambda\), the fixed-response value is convex and decreasing
in the quadratic-form order.

\paragraph{The two asymptotic goals.}
The exact formulas above lead to two asymptotic questions.  First, for a fixed full-support metric \(\Omega\), we need a
deterministic equivalent of the scalar training response
\(\hat{\mathcal V}^{\rm op}(\Lambda_\Omega)\).   
Section~\ref{appH:sec:operator-MDE-tutorial} identifies the operator-valued task
resolvent, and Section~\ref{appH:sec:operator-DE} states the resulting limit under
explicit continuum local-law assumptions.  Second, final performance
scoring requires more than the maximized training value.  The basic random
observables are
\[
    \hat\psi_\Omega
    :=\hat B_\Omega^*\hDelta\in\mathcal H,
    \qquad
    \hat T_\Omega
    :=\hat B_\Omega^*\hat B_\Omega
      \in\Sone^+(\mathcal H).
\]
The first records the projection of every fitted task coefficient onto the
empirical signal; the second records all pairwise task-coefficient inner
products.  Gaussian regression of the reserved sample-mean rows then gives
the limits of the two task functions
\(
 \rho\mapsto\Delta^\top\hat\beta(\rho)
\)
and
\(
 \rho\mapsto\hmu(\rho)^\top\hat\beta(\rho)
\).
The linearized local law in
Section~\ref{appH:sec:operator-linearization} gives the limits
\((\psi_\Omega,T_\Omega)\), and Section~\ref{appH:sec:scoring} converts them into the
limiting \(\mathcal J\), \(\mathcal K\), and \(\mathcal L\) scores. Again, this is only formally proved for a measure $\nu$ which is a finite sum of Diracs.

\subsubsection{Application of matrix-Dyson equations}
\label{appH:sec:operator-MDE-tutorial}

The goal of this section is to replace a random feature-task inverse by a deterministic task operator:
for fixed \(\Omega\), the exact response in
Eq.~\eqref{appH:eq:operator-exact-response} depends on the random compressed
operator \(G(\Lambda_\Omega)\).  The continuum Matrix-Dyson equation
produces a bounded positive task operator \(Q_\Omega\) such that $    \norm{G(\Lambda_\Omega)-\RD\,Q_\Omega}_{\mathrm{op}}
    \xrightarrow{p}0.$

\paragraph{Roadmap from the generic MDE to the final operator equation.}
The paragraph ``Background~1'' states the generic Matrix-Dyson equation and the associated
local-law principle.  ``Background~2'' explains the Gaussian derivative
transfer and its relation to Stein's method.  ``Background~3'' shows how
concentration closes the random identity at the ridge point, and
``Background~4'' introduces the linearized stability equation needed for
gradients and risk.  Steps~1--5 then identify the Gaussian matrix \(Z\),
the row profiles, and the precise deterministic and random parts
\((\mathsf A,\mathsf W)\) for this model.  Steps~6--10 compute the
self-energy, identify the full deterministic resolvent \(\mathsf M\),
eliminate the sample-side blocks, and obtain the operator equation for
\(Q_\Lambda\).  Finally, independence of the reserved mean rows converts
the feature-block local law into the final result.

\paragraph{Background 1: the generic Matrix-Dyson equation and the meaning of \(\mathsf M\).}
Consider first a finite-dimensional truncation of the Hilbert spaces used
below.  Let
\(
 \mathsf H=\mathsf A+\mathsf W
\)
be Hermitian, with deterministic \(\mathsf A\), centered Gaussian
\(\mathsf W\), and random resolvent
\(
 \mathsf G(z)=(\mathsf H-zI)^{-1}
\).
The deterministic resolvent \(\mathsf M(z)\) is defined by
\begin{equation}
    -\mathsf M(z)^{-1}
    =zI-\mathsf A+\mathcal S[\mathsf M(z)],
    \qquad
    \mathcal S[X]=\E[\mathsf W X\mathsf W].
    \label{appH:eq:operator-generic-MDE}
\end{equation}
The object \(\mathsf M\) is the deterministic approximation of the full
Hermitian resolvent \(\mathsf G\), including both feature--task and
sample--task blocks.  The operator \(Q_\Lambda\) used by the estimator will
be only the coefficient of the feature--task block of \(\mathsf M(0)\).
Isotropic local laws of the kind needed for sample-covariance resolvents are
available in finite-dimensional settings, e.g., \cite{BloemendalErdosKnowlesYauYin2014},
and MDE-based local laws for correlated Hermitian models are developed in
\cite{AjankiErdosKruger2019MDE}.  In the present continuum task-space setting,
the corresponding uniform local law would need to be formally shown.  In
Theorem~\ref{appH:thm:operator-DE}, we only consider measures $\nu$ that are
weighted sums of Diracs (and thus that we are in finite dimension) to avoid
such consideration.

\paragraph{Background 2: Gaussian integration by parts produces the second-cumulant self-energy.}
The identity \(\E[g f(g)]=\E[f'(g)]\) is the Gaussian Stein identity
\cite{Stein1981}.  In random-matrix theory it is commonly applied to
resolvents to transfer a Gaussian entry to a matrix derivative
\cite{KhorunzhyKhoruzhenkoPastur1996}.  Write a finite truncation of the centered Gaussian operator as
\(
 \mathsf W=\sum_\alpha g_\alpha\mathsf E_\alpha
\),
where the \(\mathsf E_\alpha\) are deterministic Hermitian matrices and
\(
 \kappa_{\alpha\beta}=\E[g_\alpha g_\beta]
\).
The exact resolvent identity is
\[
    (\mathsf A-zI)\E[\mathsf G(z)]
      +\E[\mathsf W\mathsf G(z)]=I.
\]
For every pair of indices \(i,j\), multivariate Gaussian integration by
parts and
\(
 \partial_\beta\mathsf G=-\mathsf G\mathsf E_\beta\mathsf G
\)
give
\[
\begin{aligned}
    \E[(\mathsf W\mathsf G)_{ij}] =\sum_{\alpha,\beta,\ell}
      (\mathsf E_\alpha)_{i\ell}\kappa_{\alpha\beta}
      \E[\partial_\beta\mathsf G_{\ell j}] =-\sum_{\alpha,\beta,\ell}
      (\mathsf E_\alpha)_{i\ell}\kappa_{\alpha\beta}
      \E[(\mathsf G\mathsf E_\beta\mathsf G)_{\ell j}] =-\E[(\mathcal S[\mathsf G]\mathsf G)_{ij}].
\end{aligned}
\]
This is the same mechanism used in Stein's method: multiplication by a
Gaussian coordinate is replaced by differentiation of the test function.
Stein's method usually exploits the identity to characterize the Gaussian
law or compare another law with it; here it is exact because the entries are
Gaussian, and it closes the resolvent equation at second order.  All
cumulants of order at least three vanish.

\paragraph{Background 3: concentration closes the random identity.}
Let \(\overline{\mathsf G}=\E[\mathsf G]\).  Linearity of \(\mathcal S\)
gives
\[
\begin{aligned}
    \E[\mathcal S[\mathsf G]\mathsf G]
    ={}&\mathcal S[\overline{\mathsf G}]\overline{\mathsf G} +\E\left[\mathcal S[\mathsf G-\overline{\mathsf G}]
        (\mathsf G-\overline{\mathsf G})\right].
\end{aligned}
\]
The linearized equation is evaluated at $z=0$, while the feature-side
Schur complement contains
$\hat{\mathsf C}+\Lambda_\Omega\otimes I_m$.  Since
$\operatorname{Tr}\Omega=1$ implies $0\prec\Omega\preceq I$, the
quadratic-form inequality
$\Lambda_\Omega\succeq(\zeta+\lambda)I$ holds.  Thus $\lambda>0$
already places the covariance resolvent at a fixed negative spectral point
separated from the spectrum, even when $\zeta=0$; the resolvent is
uniformly bounded.  A local law and fluctuation averaging can then make the second
line negligible in the deterministic observables of interest.  Replacing
\(\overline{\mathsf G}\) by a deterministic \(\mathsf M\) yields
\(
 (\mathsf A-zI-\mathcal S[\mathsf M])\mathsf M=I
\),
which is equivalent to Eq.~\eqref{appH:eq:operator-generic-MDE}.  In the continuum
formulation, the additional issue is uniformity over the task metric and
the task directions used in scoring; these requirements are not present at
a fixed finite-dimensional truncation and we only give a formal proof in
Theorem~\ref{appH:thm:operator-DE} for measures $\nu$ that are weighted sums
of Diracs.

\paragraph{Background 4: the stability operator gives derivatives, two-resolvent statistics, and gradients.}
Perturb the deterministic part to \(\mathsf A+t\mathsf D\), write
\(
 \mathsf M_t=\mathsf M+tX+o(t)
\),
and differentiate Eq.~\eqref{appH:eq:operator-generic-MDE}.  The result is
\begin{equation}
    \mathcal B_{\mathsf M}[X]
    :=\mathsf M^{-1}X\mathsf M^{-1}-\mathcal S[X]
    =-\mathsf D.
    \label{appH:eq:operator-generic-stability}
\end{equation}
In this appendix the relevant perturbation is a task-regularizer direction
\(A\otimes I_m\).  Solving Eq.~\eqref{appH:eq:operator-generic-stability} then
gives the derivative of \(Q_\Lambda\); the adjoint form of the same solve
produces the task Gram and the outer regularizer gradient in
Section~\ref{appH:sec:operator-linearization}.

\paragraph{Step 1: encode the two centered samples in one Gaussian matrix.}
Let \(N=n_p+n_q\), let \(Z\in\R^{N\times m}\) have independent standard
Gaussian entries, and let \(e_r\in\R^N\) be the canonical vectors.  Define, like in
Section~\ref{sec:est2},
\[
    \Pi_p=\diag(0,I_{n_p-1},0,0_{n_q-1}),
    \qquad
    \Pi_q=\diag(0,0_{n_p-1},0,I_{n_q-1}),
\]
\begin{equation*}
    \Gamma(\rho)=\rho\alpha_p\Pi_p+(1-\rho)\alpha_q\Pi_q,
    \qquad \alpha_p=\frac m{n_p},\quad \alpha_q=\frac m{n_q}.
\end{equation*}
Orthogonal changes of row coordinates within the two samples separate one
mean row from the residual rows.  Gaussian rotational invariance preserves
the joint law.

\paragraph{Step 2: reserve the mean rows and write the sample means explicitly.}
After the preceding row rotations,
\begin{align*}
    \hmup&=\Delta+n_p^{-1/2}Z^\top e_1,
    \\
    \hmuq&=n_q^{-1/2}Z^\top e_{n_p+1},\\
    \hDelta&=\Delta+n_p^{-1/2}Z^\top e_1
      -n_q^{-1/2}Z^\top e_{n_p+1},\\
    \hmu(\rho)&=\rho\Delta+\rho n_p^{-1/2}Z^\top e_1
      +(1-\rho)n_q^{-1/2}Z^\top e_{n_p+1},
\end{align*}
and
\begin{equation*}
    \hC(\rho)=\frac1m Z^\top\Gamma(\rho)Z.
\end{equation*}
The reserved mean rows, indexed by \(1\) and \(n_p+1\), lie in the kernels of both
\(\Pi_p\) and \(\Pi_q\). Consequently, they do not enter the centered covariance resolvent.
Thus, \(\hDelta\) is independent of the Gaussian rows entering the
centered covariance resolvent.  This independence is what eventually
turns the feature-block coefficient \(Q_\Lambda\) into the factor
\(\RD\,Q_\Lambda\) in the compressed response.

\paragraph{Step 3: replace scalar row weights by task multiplication operators.}
For each \(r=1,\ldots,N\), define
\[
    \gamma_r(\rho)=(\Gamma(\rho))_{rr},
\]
and let \(\Gamma_r\) denote multiplication by \(\gamma_r\) on
\(\mathcal H\).  Equivalently,
\begin{equation*}
    \Gamma_r
    =\alpha_p(\Pi_p)_{rr}D_p
      +\alpha_q(\Pi_q)_{rr}D_q.
\end{equation*}
Writing \(z_r=Z^\top e_r\), the covariance operator is
\begin{equation*}
    \hat{\mathsf C}
    =\frac1m\sum_{r=1}^N\Gamma_r\otimes z_rz_r^\top.
\end{equation*}
This is the operator counterpart of a weighted Wishart matrix.  A
classical observation carries one scalar weight; here it carries the task
profile \(\Gamma_r\).  There are only two nonzero profiles:
\(\alpha_pD_p\) on the \(p\)-residual rows and \(\alpha_qD_q\) on the
\(q\)-residual rows.

\paragraph{Step 4: linearize the quadratic covariance operator.}
Let
\(
 \mathcal K_f=\mathcal H\otimes\R^m
\)
and
\(
 \mathcal K_s=\mathcal H^N
\).
Define \(\mathsf X:\mathcal K_f\to\mathcal K_s\) by
\begin{equation*}
    (\mathsf X u)_r(\rho)
    =\frac1{\sqrt m}\sqrt{\gamma_r(\rho)}\,z_r^\top u(\rho).
\end{equation*}
Then \(\mathsf X^*\mathsf X=\hat{\mathsf C}\).  For a fixed positive
task regularizer \(\Lambda\), set
\begin{equation}
    \mathsf H_\Lambda=\mathsf A_\Lambda+\mathsf W,
    \qquad
    \mathsf A_\Lambda=
    \begin{pmatrix}
       \Lambda\otimes I_m&0\\
       0&-I_{\mathcal K_s}
    \end{pmatrix},
    \qquad
    \mathsf W=
    \begin{pmatrix}
       0&\mathsf X^*\\
       \mathsf X&0
    \end{pmatrix}.
    \label{appH:eq:operator-H-A-W}
\end{equation}
Thus \(\mathsf A_\Lambda\) is the deterministic block-diagonal operator
and \(\mathsf W\) is centered and linear in the Gaussian rows.  This is
the same rectangular Hermitian linearization used in variance-profile
sample-covariance problems and in the scalar calculation of
Section~\ref{subsec:spec-ridge-hd} and Appendix~\ref{app:newprop}.  The general MDE viewpoint is developed in
\cite{AjankiErdosKruger2019MDE,AjankiErdosKruger2019QVE,AltErdosKruger2020}.  When
\(\Lambda=\Lambda_\Omega\) is unbounded, the display is interpreted as a
closed form sum or through bounded spectral truncations.

\paragraph{Step 5: verify that the linearization contains the ridge inverse already used by the estimator.}
The Schur complement of the lower-right block \(-I_{\mathcal K_s}\) is
\(
 \Lambda\otimes I_m+\mathsf X^*\mathsf X
 =\Lambda\otimes I_m+\hat{\mathsf C}
\).
Hence
\begin{equation*}
    [\mathsf H_\Lambda^{-1}]_{11}
    =(\hat{\mathsf C}+\Lambda\otimes I_m)^{-1}.
\end{equation*}
The upper-left block is exactly the ridge inverse in
Eq.~\eqref{appH:eq:operator-ridge-resolvent}; the larger Hermitian operator is only
a device for applying the generic MDE at \(z=0\).

\paragraph{Step 6: compute the self-energy block by block.}
For a block-diagonal test operator
\[
    \mathsf Y=\operatorname{diag}
      (Q\otimes I_m,T_1,\ldots,T_N),
\]
direct multiplication gives
\begin{equation*}
    \mathsf W\mathsf Y\mathsf W
    =\begin{pmatrix}
       \mathsf X^*\operatorname{diag}(T_1,\ldots,T_N)\mathsf X&0\\
       0&\mathsf X(Q\otimes I_m)\mathsf X^*
     \end{pmatrix}.
\end{equation*}
For the feature block, one row contributes
\[
    \frac1m
      M_{\sqrt{\gamma_r}}T_rM_{\sqrt{\gamma_r}}
      \otimes z_rz_r^\top.
\]
Taking expectation replaces \(z_rz_r^\top\) by \(I_m\).  For the
sample-side block, the \((r,s)\) entry is
\[
    \frac{z_r^\top z_s}{m}
      M_{\sqrt{\gamma_r}}Q M_{\sqrt{\gamma_s}}.
\]
It has zero expectation for \(r\ne s\), because the rows are independent
and centered, while for \(r=s\),
\(
 \E[z_r^\top z_r/m]=1
\).
Therefore
\begin{align}
    \mathcal S[\mathsf Y]_{11}
    &=\frac1m\sum_{r=1}^N
      M_{\sqrt{\gamma_r}}T_rM_{\sqrt{\gamma_r}}\otimes I_m,\notag\\
    \mathcal S[\mathsf Y]_{22,rs}
    &=\mathbf 1_{r=s}
      M_{\sqrt{\gamma_r}}Q M_{\sqrt{\gamma_r}}.
    \label{appH:eq:operator-self-energy}
\end{align}
This is the concrete second-cumulant contraction for the present model.
The feature block adds the contribution of every sample row, whereas a
sample row receives only its own task-side feedback.

\paragraph{Step 7: identify the full deterministic resolvent \(\mathsf M\).}
Feature-space orthogonal invariance forces the feature block of the stable
MDE solution to have the form \(Q_\Lambda\otimes I_m\).  Here
\(Q_\Lambda\) denotes the task-space coefficient of the feature block of
\(\mathsf M_\Lambda(0)\); before Step~10 it may depend on the finite
ratios, and the same symbol is used for its stable proportional limit.  The
diagonal structure of Eq.~\eqref{appH:eq:operator-self-energy} leaves the sample
rows decoupled.  Thus
\begin{equation*}
    \mathsf M_\Lambda(0)
    =\operatorname{diag}
      (Q_\Lambda\otimes I_m,-R_1,\ldots,-R_N),
\end{equation*}
where the sample-block MDE gives
\[
    R_r^{-1}
    =I_{\mathcal H}
      +M_{\sqrt{\gamma_r}}Q_\Lambda
       M_{\sqrt{\gamma_r}},
    \qquad
    R_r=
      (I_{\mathcal H}+M_{\sqrt{\gamma_r}}
       Q_\Lambda M_{\sqrt{\gamma_r}})^{-1}.
\]
Thus \(\mathsf M_\Lambda(0)\) is the full deterministic Hermitian
resolvent, while \(Q_\Lambda\) is its task-space coefficient on
the feature block.

\paragraph{Step 8: eliminate the sample blocks.}
Insert the preceding expression for \(-R_r\) into the feature-block part
of Eq.~\eqref{appH:eq:operator-generic-MDE}.  Removing the common factor \(I_m\)
gives
\begin{equation}
    (Q_\Lambda)^{-1}
    =\Lambda+\frac1m\sum_{r=1}^N
      M_{\sqrt{\gamma_r}}R_rM_{\sqrt{\gamma_r}}.
    \label{appH:eq:operator-finite-MDE}
\end{equation}
This is already a closed task-operator equation at finite
\((m,n_p,n_q)\); the remaining step merely groups identical row profiles.

\paragraph{Step 9: package one homogeneous row group.}
For a bounded positive multiplication operator \(D\), define
\begin{equation*}
    \mathcal S_{\alpha,D}(Q)
    =D(I+\alpha QD)^{-1}
    =(I+\alpha DQ)^{-1}D.
\end{equation*}
To derive this expression, take a row profile \(\Gamma_r=\alpha D\).
Then
\[
\begin{aligned}
    M_{\sqrt{\gamma_r}}R_rM_{\sqrt{\gamma_r}}
    &=\alpha D^{1/2}
      (I+\alpha D^{1/2}QD^{1/2})^{-1}D^{1/2} =\alpha D(I+\alpha QD)^{-1},
\end{aligned}
\]
where the second line uses the push-through identity
\((I+AB)^{-1}A=A(I+BA)^{-1}\).  Thus
\(\mathcal S_{\alpha,D}\) is the nonlinear task contribution obtained
after the sample block has been solved.  It should not be confused with
the linear full self-energy map \(\mathcal S[X]=\E[\mathsf W X\mathsf W]\)
in Eq.~\eqref{appH:eq:operator-generic-MDE}.

\paragraph{Step 10: sum the two row groups and pass to the proportional limit.}
There are \(n_p-1\) rows with profile \(\alpha_pD_p\) and \(n_q-1\)
rows with profile \(\alpha_qD_q\).  Since
\(
 (n_p-1)\alpha_p/m\to1
\)
and
\(
 (n_q-1)\alpha_q/m\to1
\),
Eq.~\eqref{appH:eq:operator-finite-MDE} becomes
\begin{equation}
    Q_\Lambda^{-1}
    =\Lambda
      +\mathcal S_{\alpha_p,D_p}(Q_\Lambda)
      +\mathcal S_{\alpha_q,D_q}(Q_\Lambda).
    \label{appH:eq:operator-MDE}
\end{equation}
This is the operator-valued Stieltjes fixed point for the present model.
\paragraph{Final conversion from \(Q_\Lambda\) to the compressed response.}
The local law identifies the feature block of the random Hermitian
resolvent with \(Q_\Lambda\otimes I_m\).  Conditional on the covariance
rows, the vector \(\hDelta\) is independent of that block and satisfies
\(\|\hDelta\|^2\to\RD\).  Thus, for deterministic task functions \(f,g\), we get
\[
    \ip{f}{G(\Lambda)g}_{\mathcal H}
    -\RD\ip{f}{Q_\Lambda g}_{\mathcal H}
    \xrightarrow{p}0.
\]
Under the uniform operator local law assumed in
Theorem~\ref{appH:thm:operator-DE}, this upgrades to
the final result.

\paragraph{Consistency check: recover the scalar effective-ridge equation when tasks decouple.}
When \(\lambda=0\) and \(\Lambda=\zeta I_{\mathcal H}\), multiplication
operators form an invariant class for Eq.~\eqref{appH:eq:operator-MDE}.  Uniqueness
therefore forces \(Q_\Lambda\) to be multiplication by
\(1/\omega(\rho)\), where
\begin{equation}
    \omega(\rho)
    =\zeta+
      \frac{\rho\omega(\rho)}
      {\omega(\rho)+\alpha_p\rho}
      +\frac{(1-\rho)\omega(\rho)}
      {\omega(\rho)+\alpha_q(1-\rho)}.
    \label{appH:eq:ridge-check}
\end{equation}
This is exactly the effective-ridge equation in
Eq.~(\ref{eq:omega-zeta-hd}) of Theorem~\ref{thm:spec-ridge}.  Pointwise,
\(1/\omega(\rho)\) is the limiting normalized trace of
\((\hC(\rho)+\zeta I_m)^{-1}\), while the full operator equation retains
the off-diagonal task coupling created by \(\Omega^{-1}\).

\subsubsection{Operator deterministic equivalent}
\label{appH:sec:operator-DE}

The exact fixed-metric response becomes deterministic only after a local
law for the feature block of the Hermitian linearization.  Because the task
space is infinite dimensional and the metric is optimized, the result must
also control the local law uniformly over a metric family, localize
near-maximizers in a compact set, and handle the unbounded inverse forms.
Finite-dimensional local-law prototypes are given in
\cite{BloemendalErdosKnowlesYauYin2014,AjankiErdosKruger2019MDE}, and
the existence and stability theory for the associated Dyson equations is
developed further in
\cite{AjankiErdosKruger2019QVE,AltErdosKruger2020}.
This thus provides a proof for Theorem~\ref{appH:thm:operator-DE}.

\begin{theorem}[Operator deterministic equivalent under the continuum MDE hypotheses]
\label{appH:thm:operator-DE}
Under Assumption~\ref{ass:hd}, suppose $\lambda>0$, $\zeta\geqslant0$, and a finite measure $\nu$ on $[0,1]$. Assume moreover that the measure $\nu$ is finite weighted sum of Diracs. Then, for every fixed $\Omega\in\mathfrak D_+(\mathcal H)$,
\[
    \hat{\mathcal V}^{\rm op}(\Lambda_\Omega)
    \xrightarrow{p}\mathcal V^{\rm op}(\Omega),
    \qquad \text{and}\qquad
    \hat\Psi ^{\rm nuc}
    \xrightarrow{p}
    \sup_{\Omega\in\mathfrak G_+}\mathcal V^{\rm op}(\Omega).
\]
\end{theorem}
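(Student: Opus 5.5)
The plan is to exploit the Dirac assumption to make everything finite-dimensional, and then to assemble the result from the scalar resolvent limits already proved. Write $\nu=\sum_{k=1}^K\nu_k\delta_{\rho_k}$ with $\nu_k>0$. Then $\mathcal H\cong\R^K$ with weighted inner product, $\Omega$ is a positive definite $K\times K$ matrix of trace one, and $\Lambda_\Omega=\zeta I+\lambda\Omega^{-1}$ is a bounded positive definite matrix with $\Lambda_\Omega\succeq(\zeta+\lambda)I$. The fixed-metric response \eqref{appH:eq:operator-exact-response} is then an explicit function of the $K\times K$ random matrix $G(\Lambda_\Omega)$ with entries
\[
\hat\Delta^\top \big[(\hat{\mathsf C}+\Lambda_\Omega\otimes I_m)^{-1}\big]_{kl}\,\hat\Delta .
\]
Here the inverse is of an $mK\times mK$ matrix built from $\hat C_p,\hat C_q$ and the bounded matrix $\Lambda_\Omega$.

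\emph{Step 1: a deterministic limit of $G(\Lambda_\Omega)$.} We condition on the covariance rows, using Steps~1--2 of the proof of Theorem~\ref{thm:spec-ridge}: $\hat\Delta$ is independent of them and has the form $\Delta+\varepsilon_p-\varepsilon_q$. Each entry of $G$ is then a quadratic form in an independent Gaussian vector plus a deterministic direction $\Delta$. Gaussian quadratic-form concentration, together with right-orthogonal invariance of the resolvent in feature space, gives
\[
G_{kl}-R\,\tfrac1m\tr\big[(\hat{\mathsf C}+\Lambda_\Omega\otimes I_m)^{-1}\big]_{kl}\to0,
\]
with all cross terms vanishing, exactly as in that proof. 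It remains to show that the $K\times K$ matrix of normalized block traces converges to $Q_\Lambda$, the solution of \eqref{appH:eq:operator-MDE}. Since $K$ is fixed, this is a block (operator-valued) deterministic equivalent for the sample covariance matrix $\frac1m\sum_r \Gamma_r\otimes z_rz_r^\top$ with $K\times K$ matrix-valued weights. I would prove it by the same leave-one-row Sherman--Morrison argument used in Appendix~\ref{app:newprop}, now with $K\times K$ blocks. Removing row $r$ and using concentration of $z_r^\top(\cdot)z_r/m$ yields, after summing over the two row groups, Eq.~\eqref{appH:eq:operator-finite-MDE} up to $o(1)$. Its limit is \eqref{appH:eq:operator-MDE}. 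The spectral separation $\Lambda\succeq(\zeta+\lambda)I$ gives the uniform bounds needed, and uniqueness and stability of the solution $Q_\Lambda\succ0$ give uniqueness of the subsequential limits. Almost-sure convergence follows from Borel--Cantelli with Gaussian concentration.

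\emph{Step 2: fixed-$\Omega$ value.} The map $G\mapsto\frac12\langle a,(I+GE)^{-1}Ga\rangle$ is continuous on positive semidefinite $K\times K$ matrices, because $I+GE$ is similar to $I+E^{1/2}GE^{1/2}\succeq I$. Therefore
\[
\hat{\mathcal V}^{\rm op}(\Lambda_\Omega)\to\mathcal V^{\rm op}(\Omega):=\tfrac12\langle a,(I+RQ_{\Lambda_\Omega}E)^{-1}RQ_{\Lambda_\Omega}a\rangle .
\]

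\emph{Step 3: supremum over $\Omega$, the main obstacle.} Here I use the exact identity \eqref{appH:eq:operator-exact-outer}. Both $\Omega\mapsto\hat{\mathcal V}^{\rm op}(\Lambda_\Omega)$ and its limit are concave on the convex set of trace-one positive matrices. Pointwise convergence of concave functions on the relative interior of a finite-dimensional convex set upgrades to uniform convergence on compact subsets. The difficulty is the boundary, where $\Omega$ becomes singular and $\Omega^{-1}$ blows up. In that direction, however, $\Lambda_\Omega$ only increases in Loewner order, and the response is decreasing in $\Lambda$. So near-boundary metrics are dominated: for $\Omega_\delta=(1-\delta)\Omega+\delta I/K$ we have $\Omega_\delta^{-1}\preceq(1-\delta)^{-1}\Omega^{-1}$. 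By concavity, the value at $\Omega_\delta$ is at least $(1-\delta)$ times the value at $\Omega$ plus a nonnegative term, since $\hat{\mathcal V}\geq0$. Hence restricting the supremum to $\{\Omega\succeq\delta I/K\}$ costs at most $\delta\sup\hat{\mathcal V}$. This quantity is $O_\P(1)$, bounded via $\Lambda\succeq\lambda I$ by $\frac{\|a\|^2R}{2\lambda}$. Uniform convergence on this compact set, followed by $\delta\to0$, gives
\[
\hat\Psi^{\rm nuc}\to\sup_{\Omega}\mathcal V^{\rm op}(\Omega),
\]
where the supremum is taken over the full-support metric set $\mathfrak G_+$.
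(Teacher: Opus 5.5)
Your proposal is correct, and it reaches the result by a genuinely different and more elementary route than the paper.

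\textbf{Fixed-$\Omega$ limit.} The paper uses the Hermitian linearization $\mathsf H_\Lambda=\mathsf A_\Lambda+\mathsf W$. Gaussian integration by parts produces the self-energy $\mathcal S$, and eliminating the sample blocks gives \eqref{appH:eq:operator-finite-MDE}. The paper then imports an MDE/isotropic local law from the literature to justify replacing the random feature block by $Q_\Lambda\otimes I_m$. You instead exploit that $K$ is fixed. A leave-one-row Woodbury argument with $K\times K$ blocks, as in Appendix~\ref{app:newprop}, yields the same equation \eqref{appH:eq:operator-finite-MDE} for the matrix of normalized block traces. The sample-side factor $(I+M_{\sqrt{\gamma_r}}QM_{\sqrt{\gamma_r}})^{-1}$ appears directly as the Woodbury denominator. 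Your conversion $G\approx R\,Q_\Lambda$ through the reserved mean rows coincides with the paper's final step.

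\textbf{Supremum over $\Omega$.} The paper only says one must ``localize near-maximizers in a compact set''; you give an actual argument. It combines:
\begin{itemize}
\item concavity of $\Omega\mapsto\hat{\mathcal V}^{\rm op}(\Lambda_\Omega)$, via the matrix-fractional function;
\item the interpolation $\Omega_\delta=(1-\delta)\Omega+\delta I/K$;
\item nonnegativity, and the a priori bound $\hat{\mathcal V}^{\rm op}\le\|a\|^2\|\hat\Delta\|^2/(2\lambda)$ coming from $\Lambda_\Omega\succeq(\zeta+\lambda)I$;
\item the convexity lemma, upgrading pointwise to uniform convergence on $\{\Omega\succeq\delta I/K\}$.
\end{itemize}

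\textbf{Trade-offs.} Your route is self-contained: it needs only Gaussian quadratic-form concentration at a spectral point separated from the spectrum. Its Woodbury error terms degrade with $K$, so it cannot be pushed to non-atomic $\nu$. The paper's MDE framework is heavier and, as written, leaves uniformity and localization unproved. However, it is built to extend to the continuum, and its stability operator is reused for $T_\Omega$ and for the gradient \eqref{appH:eq:operator-gradient}.

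\textbf{Two points to tighten.}
\begin{itemize}
\item You assert, as does the paper, uniqueness of the positive solution of \eqref{appH:eq:operator-MDE}. You should justify it. Set $F(Q)=[\Lambda+\mathcal S_{\alpha_p,D_p}(Q)+\mathcal S_{\alpha_q,D_q}(Q)]^{-1}$ and $c=(\zeta+\lambda)/(\|\Lambda\|+1)$. Then $F$ is Loewner-increasing, and for $t>1$ one has $F(tQ)\preceq \frac{t}{1+c(t-1)}F(Q)$. Applying this to the minimal $t\ge 1$ with $Q_2\preceq tQ_1$, for two positive fixed points $Q_1,Q_2$, forces $t=1$, and by symmetry $Q_1=Q_2$.
\item The similarity $I+GE\sim I+E^{1/2}GE^{1/2}$ requires $E$ invertible. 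It fails if an atom of $\nu$ sits at $\rho\in\{0,1\}$. Use $(I+GE)^{-1}G=G^{1/2}(I+G^{1/2}EG^{1/2})^{-1}G^{1/2}$ for the continuity step instead.
\end{itemize}
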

We conjecture that the assumption that $\nu$ is a weighted sum of Diracs is not necessary.

\subsubsection{Linearized Dyson equation, task Gram, and gradients}
\label{appH:sec:operator-linearization}

\paragraph{Why the ordinary operator MDE is not enough for scoring.}
The fixed-point equation determines the training response and the limit of
\(\hat B_\Omega^*\hDelta\), but the population criteria in
Section~\ref{appH:sec:scoring} also require all pairwise inner products of the fitted
coefficient field.  Those inner products form the positive trace-class
operator
\(
 \hat T_\Omega=\hat B_\Omega^*\hat B_\Omega
\).
It is a two-resolvent statistic and therefore comes from differentiating
the MDE, not from the fixed-point value alone.  Once
\((\psi_\Omega,T_\Omega)\) are known, Gaussian regression of the reserved
mean rows gives
\(
 \ell_\Omega=(s/\RD)\psi_\Omega
\)
and
\[
    m_\Omega(\rho)
    =\frac{\rho(s+\alpha_p)-(1-\rho)\alpha_q}{\RD}
      \psi_\Omega(\rho),
\]
with the zero convention when \(\RD=0\).  These are exactly the signal and
empirical-mean contractions inserted into the quadratic score formulas.

Linearizing the task MDE produces a bounded map on self-adjoint task
operators.  The limiting coefficient Gram is trace class, and its natural
duality is the operator trace pairing.

Let
\[
    S_p=\mathcal S_{\alpha_p,D_p}(Q_\Omega),
    \qquad
    S_q=\mathcal S_{\alpha_q,D_q}(Q_\Omega).
\]
Differentiating Eq.~\eqref{appH:eq:operator-MDE} in a bounded self-adjoint direction \(A\) gives
\begin{equation*}
    dQ_\Omega[A]
    =-\mathcal K_\Omega^{-1}[Q_\Omega A Q_\Omega],
\end{equation*}
where
\begin{equation*}
    \mathcal K_\Omega[X]
    =X-\alpha_pQ_\Omega S_pXS_pQ_\Omega
       -\alpha_qQ_\Omega S_qXS_qQ_\Omega.
\end{equation*}
This is the feature-block restriction of the generic stability operator in 
Eq.~\eqref{appH:eq:operator-generic-stability}.  Its preadjoint for the trace pairing is
\begin{equation*}
    \mathcal K_\Omega^*[Y]
    =Y-\alpha_pS_pQ_\Omega YQ_\Omega S_p
       -\alpha_qS_qQ_\Omega YQ_\Omega S_q.
\end{equation*}
Define
\begin{equation*}
    c_\Omega=(I+EG_\Omega)^{-1}a,
    \qquad
    \psi_\Omega=G_\Omega c_\Omega.
\end{equation*}
Let \(Y_\Omega\in\Sone^{\rm sa}(\mathcal H)\) solve
\begin{equation*}
    \mathcal K_\Omega^*[Y_\Omega]
    =c_\Omega\otimes c_\Omega,
    \qquad
    (c_\Omega\otimes c_\Omega)f
    =c_\Omega\ip{c_\Omega}{f}_{\mathcal H},
\end{equation*}
and set
\begin{equation*}
    T_\Omega=\RD\,Q_\Omega Y_\Omega Q_\Omega.
\end{equation*}
Under the local law strengthened to the linearized resolvent, the fitted coefficient field satisfies
\begin{equation*}
    \hat B_\Omega^*\hDelta
      \xrightarrow{p}\psi_\Omega
      \quad\text{in }\mathcal H,
    \qquad
    \hat B_\Omega^*\hat B_\Omega
      \xrightarrow{p}T_\Omega
      \quad\text{in trace norm}.
\end{equation*}
The same operator \(T_\Omega\) is the gradient observable.  For bounded self-adjoint perturbations \(A\) for which the trace pairing is finite,
\begin{equation}
    d\mathcal V^{\rm op}(\Lambda_\Omega)[A]
    =-\frac12\Tr(T_\Omega A).
    \label{appH:eq:operator-gradient}
\end{equation}
If a full-support optimizer \(\Omega_*\) is differentiable along a separating family of trace-zero perturbations, stationarity gives
\begin{equation}
    \Omega_*^{-1}T_*\Omega_*^{-1}
      =\tau_* I_{\mathcal H},
    \qquad
    T_*=\tau_*\Omega_*^2,
    \qquad
    \Tr\Omega_*=1,
    \label{appH:eq:operator-KKT}
\end{equation}
for some \(\tau_*>0\).

\subsubsection{Consistency check (vanishing nuclear penalty)}
\label{appH:sec:operator-checks}

For \(\lambda=0\), the task resolvent is multiplication by \(1/\omega(\rho)\), with \(\omega\) given by Eq.~\eqref{appH:eq:ridge-check}.  Set
\[
    \kappa(\rho)
    =1+\frac{\rho(1-\rho)\RD}{\omega(\rho)}.
\]
Then, the \(\lambda=0\) specialization of the optimized response
\(\widehat{\Psi}^{\mathrm{nuc}}\) in Eq.~\eqref{appH:eq:operator-exact-outer}, equivalently obtained from the fixed-metric response formula
Eq.~\eqref{appH:eq:operator-exact-response}, is
\begin{equation*}
  \frac12\int_0^1
      \frac{\RD}{\omega(\rho)\kappa(\rho)}
      \dd\nu(\rho).
\end{equation*}
The linearized equation gives the integral kernel
\begin{align*}
    d_2(\rho,\sigma)
    &=\frac{\alpha_p\rho\sigma}
      {(\omega(\rho)+\alpha_p\rho)(\omega(\sigma)+\alpha_p\sigma)}
      +\frac{\alpha_q(1-\rho)(1-\sigma)}
      {(\omega(\rho)+\alpha_q(1-\rho))(\omega(\sigma)+\alpha_q(1-\sigma))},\\
    k(\rho,\sigma)
    &=\frac{\RD}
      {\kappa(\rho)\kappa(\sigma)\omega(\rho)\omega(\sigma)
       (1-d_2(\rho,\sigma))}.
\end{align*}
Thus the operator response and task Gram reduce exactly to the
scalar-ridge theory in Section~\ref{subsec:spec-ridge-hd}.

\subsection{Score observables and population criteria}
\label{appH:sec:scoring}

The criteria $\mathcal J$, $\mathcal K$, and $\mathcal L$, and
their excess risks, are those defined in Section~\ref{sec:finite}.  We therefore record
only the new deterministic observables and the operator form of the
quadratic scoring identities.

Assume that the maximizing metric is unique, or that all maximizing metrics produce the same pair \((T_*,\psi_*)\).  Gaussian regression of the sample means gives the task functions
\begin{equation*}
    \ell_* =\frac{s}{\RD}\psi_*,
    \qquad
    m_*(\rho)
    =\frac{\rho(s+\alpha_p)-(1-\rho)\alpha_q}{\RD}\psi_*(\rho),
\end{equation*}
with all quantities set to zero when \(\RD=0\).  Here \(\ell_*\) is the limit of \(\rho\mapsto\Delta^\top\hat\beta(\rho)\), and \(m_*\) is the limit of \(\rho\mapsto\hmu(\rho)^\top\hat\beta(\rho)\).

Define
\begin{equation*}
    b_{v,*}=a+D_pm_*,
    \qquad
    b_{w,*}=-a+D_qm_*,
\end{equation*}
\begin{equation*}
    c_{v,*}=-\ip{a}{m_*}_{\mathcal H}
      -\frac12\ip{m_*}{D_pm_*}_{\mathcal H},
    \qquad
    c_{w,*}=\ip{a}{m_*}_{\mathcal H}
      -\frac12\ip{m_*}{D_qm_*}_{\mathcal H}.
\end{equation*}
The limiting quadratic coefficients depend on the fitted field only through \(T_*\), \(\ell_*\), and \(m_*\).  In particular,
\begin{equation*}
    \tr(A_v)=-\frac12\Tr(D_pT_*),
    \qquad
    \Delta^\top A_v\Delta
      =-\frac12\ip{\ell_*}{D_p\ell_*}_{\mathcal H},
    \qquad
    \ell_v^\top\Delta
      =\ip{b_{v,*}}{\ell_*}_{\mathcal H}.
\end{equation*}

Because $T_*$ is positive trace class and $D_p$ is bounded, the
Fredholm determinant $\det_{\mathcal H}(I+D_pT_*)$ is well defined,
as in Section~\ref{sec:chi-quadrature}.  The determinant lemma and Woodbury identity give,
without introducing square roots of task operators,
\begin{equation}
    \log\det_{\R^m}(I_m-2A_v)
    =\log\det_{\mathcal H}(I_{\mathcal H}+D_pT_*),
    \label{appH:eq:score-logdet}
\end{equation}
\begin{equation*}
    \ell_v^\top(I_m-2A_v)^{-1}\ell_v
    =\ip{b_{v,*}}
      {(I_{\mathcal H}+T_*D_p)^{-1}T_*b_{v,*}}_{\mathcal H}.
\end{equation*}
The Fredholm determinant in Eq.~\eqref{appH:eq:score-logdet} is positive because it equals the finite-dimensional determinant of the positive matrix \(I_m-2A_v\).

Therefore the limiting log normalizer and scores are
\begin{align*}
    \Lambda_*
    &=c_{v,*}
      -\frac12\log\det_{\mathcal H}(I+D_pT_*)
      +\frac12\ip{b_{v,*}}
        {(I+T_*D_p)^{-1}T_*b_{v,*}}_{\mathcal H},\\
    \mathcal J_*
    &=\tr(A_v)+\Delta^\top A_v\Delta+\ell_v^\top\Delta
      +\frac12\log\det_{\mathcal H}(I+D_pT_*)
      -\frac12\ip{b_{v,*}}
        {(I+T_*D_p)^{-1}T_*b_{v,*}}_{\mathcal H},\\
    \mathcal K_*
    &=\tr(A_v)+\Delta^\top A_v\Delta+\ell_v^\top\Delta
      +c_{v,*}+1-e^{\Lambda_*},\\
    \mathcal L_*
    &=\tr(A_v)+\Delta^\top A_v\Delta+\ell_v^\top\Delta+c_{v,*}
      -\frac12\Tr(D_qT_*)+c_{w,*}.
\end{align*}
The quadrature formulas in Section~\ref{appH:sec:numerics} evaluate these operator expressions without changing their definition.

\subsection{Algorithms and numerical implementation}
\label{appH:sec:algorithms}

This subsection turns the operator equations into numerical methods.
We first give a feature-space reweighted least-squares scheme stated directly
on the continuum task space.  We then discretize the task integral with a single quadrature rule and use the resulting matrices to solve the matrix-Dyson equation, perform deterministic reweighting, and evaluate the scores.

\subsubsection{Feature-space reweighting and spectral decoupling}
\label{appH:sec:feature-reweighting}

The squared nuclear norm also admits a feature-space fraction.  For
\(\Xi\in\mathbb S_+^m\), \(\tr\Xi=1\), define
\begin{equation*}
    \mathcal F_{\rm feat}( B,\Xi)
    =\tr(\Xi^\dagger B B^*)
    \quad\text{when }
    \operatorname{ran}( B)\subseteq\operatorname{ran}(\Xi),
\end{equation*}
with value \(+\infty\) otherwise.  Then
\begin{equation*}
    \| B\|_*^2
    =\min_{\Xi\succeq0,\ \tr\Xi=1}
      \mathcal F_{\rm feat}( B,\Xi).
\end{equation*}
For a smoothed full-range version, fix \(\varepsilon>0\) and use
\begin{equation*}
    \left(\tr[( B B^*+\varepsilon I_m)^{1/2}]\right)^2
    =\min_{\Xi\succ0,\ \tr\Xi=1}
      \tr[( B B^*+\varepsilon I_m)\Xi^{-1}].
\end{equation*}
For fixed \(\Xi^{(t)}\), put
\begin{equation*}
    R^{(t)}=\zeta I_m+\lambda(\Xi^{(t)})^{-1}.
\end{equation*}
The coefficient update decouples pointwise in \(\rho\):
\begin{equation*}
    [\hM(\rho)+R^{(t)}]\beta^{(t+1)}(\rho)=\hDelta,
    \qquad 0\leqslant\rho\leqslant1.
\end{equation*}
Form the feature Gram matrix
\begin{equation}
    S^{(t+1)}
    = B_{\beta^{(t+1)}} B_{\beta^{(t+1)}}^*
    =\int_0^1
      \beta^{(t+1)}(\rho)\beta^{(t+1)}(\rho)^\top\dd\nu(\rho),
    \label{appH:eq:feature-Gram}
\end{equation}
and update
\begin{equation*}
    \Xi^{(t+1)}
    =\frac{(S^{(t+1)}+\varepsilon I_m)^{1/2}}
      {\tr[(S^{(t+1)}+\varepsilon I_m)^{1/2}]}.
\end{equation*}
This is block-coordinate ascent for the smoothed jointly concave
feature-metric representation.  Damping and backtracking on the smoothed
objective make the iteration monotone.

The decoupled step is evaluated by the generalized-eigenvalue spectral
algorithm used in Section~\ref{sec:est2}.  Define the empirical
uncentered second moments
\[
    \hat\Sigma_p=\hCp+\hmup\hmup^\top,
    \qquad
    \hat\Sigma_q=\hCq+\hmuq\hmuq^\top,
\]
and the affine augmentations
\begin{equation*}
    \overline\Sigma_p^{(t)}=
    \begin{pmatrix}
      \hat\Sigma_p+R^{(t)}&\hmup\\
      \hmup^\top&1
    \end{pmatrix},
    \qquad
    \overline\Sigma_q^{(t)}=
    \begin{pmatrix}
      \hat\Sigma_q+R^{(t)}&\hmuq\\
      \hmuq^\top&1
    \end{pmatrix},
    \qquad
    \overline\delta=
    \begin{pmatrix}\hDelta\\0\end{pmatrix}.
\end{equation*}
A Schur-complement calculation gives
\begin{equation*}
    [\rho\overline\Sigma_p^{(t)}+(1-\rho)\overline\Sigma_q^{(t)}]^{-1}
      \overline\delta
    =\begin{pmatrix}
       \beta^{(t+1)}(\rho)\\
       -\hmu(\rho)^\top\beta^{(t+1)}(\rho)
     \end{pmatrix}.
\end{equation*}
Consequently,
\begin{equation*}
    \hat{\mathcal V}^{\rm feat}(R^{(t)})
    =\frac12\int_0^1
      \overline\delta^\top
      [\rho\overline\Sigma_p^{(t)}+(1-\rho)\overline\Sigma_q^{(t)}]^{-1}
      \overline\delta\dd\nu(\rho).
\end{equation*}
A single generalized eigendecomposition of
\((\overline\Sigma_p^{(t)},\overline\Sigma_q^{(t)})\) evaluates the continuum
response and its moment derivatives through the same divided-difference formulas.  Thus one outer feature-space update
requires one generalized eigendecomposition rather than a coupled system
on feature-task coordinates.

\subsubsection{Task-space reweighted least squares and quadrature}
\label{appH:sec:numerics}

A deterministic quadrature rule (such as Gauss--Legendre, as used in experiments)
\begin{equation*}
    \nu_k=\sum_{\ell=1}^k w_\ell\delta_{\rho_\ell},
    \qquad
    w_\ell>0,
    \qquad
    0<\rho_\ell<1,
\end{equation*}
turns the task operators into \(k\times k\) matrices.

In weighted coordinates, set
\[
    D_p=\diag(\rho_1,\ldots,\rho_k),
    \qquad D_q=I_k-D_p,
    \qquad E=\diag(\rho_\ell(1-\rho_\ell))_{\ell=1}^k,
\]
\[
    a=(\sqrt{w_1},\ldots,\sqrt{w_k})^\top,
    \qquad
    \Lambda_\Omega=\zeta I_k+\lambda\Omega^{-1},
    \qquad
    \Omega\in\mathbb S_{++}^k,
    \quad \tr\Omega=1.
\]
Define the weighted coefficient matrix
\[
     B=(\sqrt{w_1} B(\rho_1)\ \cdots\
      \sqrt{w_k} B(\rho_k))\in\R^{m\times k}.
\]
Thus the \(\ell\)-th column of \( B\) is
\(\sqrt{w_\ell} B(\rho_\ell)\).  The quadrature score is
\begin{align*}
   \!\!\! \hat\Phi_{k,\lambda,\zeta}( B)
    ={}&\hDelta^\top B a
      -\frac12\tr( B^\top\hCp B D_p)
      -\frac12\tr( B^\top\hCq B D_q)-\frac12\tr( B^\top\hDelta\hDelta^\top B E)
      -\frac\zeta2\tr( B^\top B)
      -\frac\lambda2\norm{ B}_*^2.
\end{align*}
The same task metric is then used in two distinct numerical procedures.  The first is an empirical reweighted
least-squares algorithm.  The second replaces the empirical coefficient
Gram by the Gram obtained from the Matrix-Dyson stability equation.

\paragraph{Deterministic task-space reweighting}

The deterministic counterpart uses the same metric update but replaces the
empirical Gram \(( B^{(t)})^\top B^{(t)}\) by the Matrix-Dyson prediction
\(T_{\Omega^{(t)}}\), obtained from the quadrature stability equation in 
Eq.~\eqref{appH:eq:matrix-Gram}.  More precisely, define
\begin{equation*}
    \Phi_{\varepsilon}^{(k)}(\Omega)
    =\mathcal V_k(\Omega)
      -\frac{\lambda\varepsilon}{2}\tr(\Omega^{-1}).
\end{equation*}
Using the operator envelope formula in Eq.~\eqref{appH:eq:operator-gradient} and
\(d\Lambda_\Omega=-\lambda\Omega^{-1}(d\Omega)\Omega^{-1}\) gives
\begin{equation*}
    \nabla_\Omega\Phi_{\varepsilon}^{(k)}(\Omega)
    =\frac\lambda2\Omega^{-1}
      (T_\Omega+\varepsilon I_k)\Omega^{-1}.
\end{equation*}
Hence the trace-one KKT equation is the fixed-point update
\begin{equation}
    \widetilde\Omega^{(t+1)}
    =\frac{(T_{\Omega^{(t)}}+\varepsilon I_k)^{1/2}}
      {\tr[(T_{\Omega^{(t)}}+\varepsilon I_k)^{1/2}]}.
    \label{appH:eq:task-deterministic-update}
\end{equation}
A practical iteration therefore solves the MDE for \(Q_{\Omega^{(t)}}\), solves
the stability equation for \(T_{\Omega^{(t)}}\), and then applies
Eq.~\eqref{appH:eq:task-deterministic-update}.  Damping and backtracking on
\(\Phi_{\varepsilon}^{(k)}\) provide a monotone version of this fixed-point
scheme.  All outer matrices have dimension \(k\), independently of the
feature dimension \(m\).

\paragraph{Matrix-Dyson solve and numerical evaluation}

The quadrature approximation of the operator MDE is
\begin{equation*}
    Q^{-1}=\Lambda_\Omega
      +D_p(I_k+\alpha_pQD_p)^{-1}
      +D_q(I_k+\alpha_qQD_q)^{-1}.
\end{equation*}
Its deterministic response is
\begin{equation*}
    \mathcal V_k(\Omega)
    =\frac12a^\top(I_k+\RD\,QE)^{-1}\RD\,Q a.
\end{equation*}
With
\[
    S_p=D_p(I_k+\alpha_pQD_p)^{-1},
    \qquad
    S_q=D_q(I_k+\alpha_qQD_q)^{-1},
\]
the stability map is
\begin{equation*}
    \mathcal L_Q[X]
    =Q^{-1}XQ^{-1}
      -\alpha_pS_pXS_p
      -\alpha_qS_qXS_q.
\end{equation*}
The task Gram is obtained from
\begin{equation}
    \mathcal L_Q[T]=\RD cc^\top,
    \qquad
    c=(I_k+E\RD\,Q)^{-1}a.
    \label{appH:eq:matrix-Gram}
\end{equation}

For fixed $\Omega$, the monotone Picard iteration
\[
  Q^{(j+1)}=\left[\Lambda_\Omega
      +D_p(I_k+\alpha_pQ^{(j)}D_p)^{-1}
      +D_q(I_k+\alpha_qQ^{(j)}D_q)^{-1}\right]^{-1},
  \qquad Q^{(0)}=0,
\]
converges to the minimal positive fixed point by the standard monotone iteration
argument in ordered cones~\cite{Amann1976}, using the order-reversing property
of matrix inversion in the Loewner order~\cite{Bhatia2007}. Under the
uniqueness condition for the Dyson fixed point, this is the positive solution.

Near the solution, Newton's acceleration solves
\begin{equation*}
    \mathcal L_Q[\Delta Q]
    =Q^{-1}-\Lambda_\Omega
      -D_p(I_k+\alpha_pQD_p)^{-1}
      -D_q(I_k+\alpha_qQD_q)^{-1},
\end{equation*}
with backtracking chosen to keep \(Q+t\Delta Q\succ0\) and decrease the
residual.  After solving Eq.~\eqref{appH:eq:matrix-Gram}, projected gradient ascent
on \(\Omega\succeq\delta I_k\), \(\tr\Omega=1\), uses
\[
    \nabla_\Omega\mathcal V_k
    =\frac\lambda2\Omega^{-1}T\Omega^{-1}.
\]
For moderate \(k\), a dense \(k^2\times k^2\) solve is sufficient for the
Gram equation; a matrix-free Krylov method applies \(\mathcal L_Q\) at
\(O(k^3)\) cost per iteration.

The same quadrature rule evaluates the feature Gram in
Eq.~\eqref{appH:eq:feature-Gram} as
\[
    S^{(t+1)}\simeq\sum_{\ell=1}^k w_\ell
      \beta^{(t+1)}(\rho_\ell)\beta^{(t+1)}(\rho_\ell)^\top.
\]
It also replaces the Fredholm determinant, trace, and inverse in
Section~\ref{appH:sec:scoring} by their matrix counterparts.  Increasing \(k\) until
the response, scores, and outer metric stabilize provides a direct
numerical refinement check.  Warm starts are effective across nearby
parameters and successive quadrature rules.

\subsection{Nuclear penalty experiments}
We provide in \myfig{nuclearasympt} and \myfig{nuclearsweep} simple experiments highlighting the results in this appendix. \myfig{nuclearasympt} performs a sweep in $\alpha_q$ and compares empirical estimates with their deterministic asymptotic limit (for $m=100$), while \myfig{nuclearsweep} compares deterministic equivalents for ridge and nuclear penalties, showing the benefits of the nuclear penalty for a wide range of values of $\alpha_p$ and $\alpha_q$.

\begin{figure}
    \centering
    \includegraphics[width=0.4\linewidth]{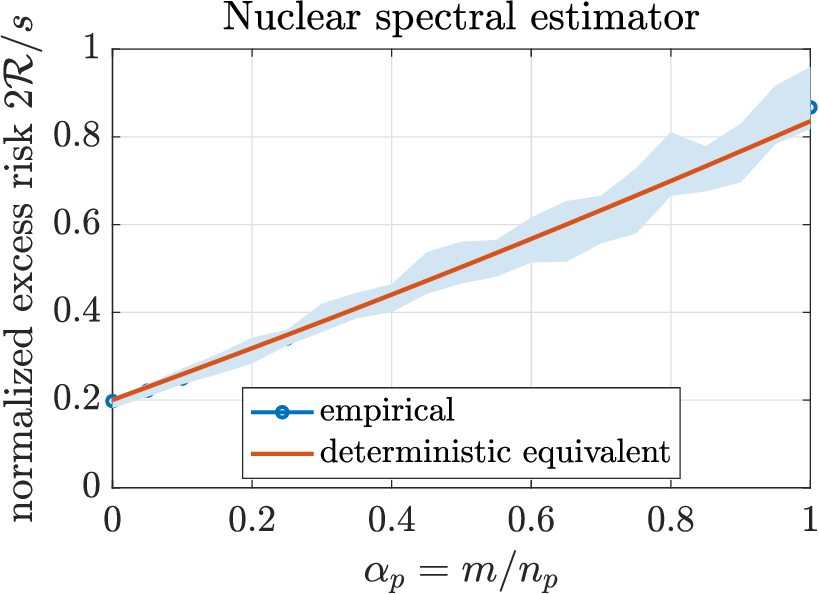}

\caption{Performance of the estimator with nuclear penalty, when varying $\alpha_p$ ($\alpha_q=0.1$ fixed, with $\alpha_p$ swept over $[0,1]$). \label{fig:nuclearasympt}}
\end{figure}

\begin{figure}
   \centering
    \includegraphics[width=0.4\linewidth]{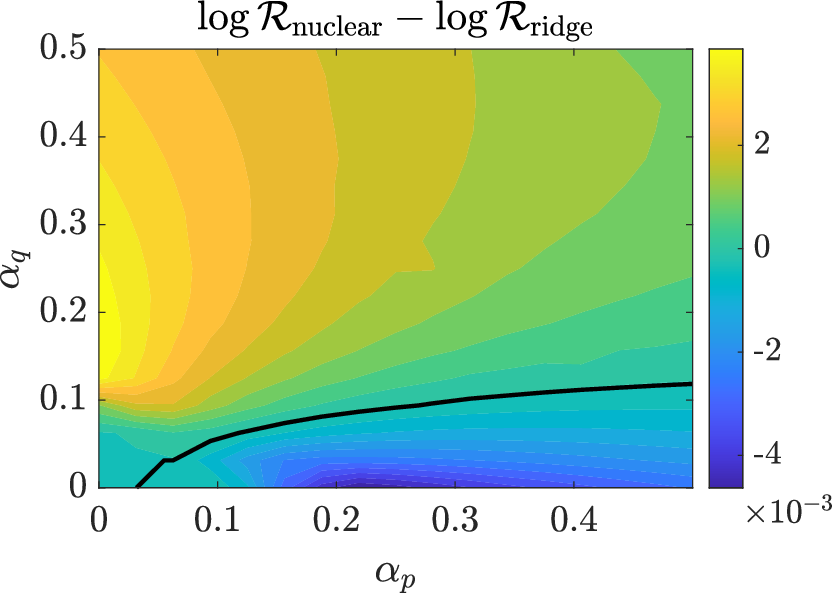}

\caption{Comparison of two spectral estimators for a given $ s= 1$ when regularization
parameters are optimized. The black line is the zero-level line.\label{fig:nuclearsweep}}
\end{figure}

\bibliographystyle{unsrt}

\bibliography{references}

\begin{thebibliography}{10}

\bibitem{Bach2026}
Francis Bach.
\newblock A spectral framework for closed-form relative density estimation,
  2026.
\newblock arXiv:2605.10668.

\bibitem{SugiyamaNakanoKawanabe2008}
Masashi Sugiyama, Taiji Suzuki, Shinichi Nakajima, Hisashi Kashima, Paul von
  B{\"u}nau, and Motoaki Kawanabe.
\newblock Direct importance estimation for covariate shift adaptation.
\newblock {\em Annals of the Institute of Statistical Mathematics},
  60(4):699--746, 2008.

\bibitem{KanamoriHidoSugiyama2009}
Takafumi Kanamori, Shohei Hido, and Masashi Sugiyama.
\newblock A least-squares approach to direct importance estimation.
\newblock {\em Journal of Machine Learning Research}, 10:1391--1445, 2009.

\bibitem{SugiyamaSuzukiKanamori2012}
Masashi Sugiyama, Taiji Suzuki, and Takafumi Kanamori.
\newblock {\em Density Ratio Estimation in Machine Learning}.
\newblock Cambridge University Press, 2012.

\bibitem{NguyenWainwrightJordan2010}
XuanLong Nguyen, Martin~J. Wainwright, and Michael~I. Jordan.
\newblock Estimating divergence functionals and the likelihood ratio by convex
  risk minimization.
\newblock {\em IEEE Transactions on Information Theory}, 56(11):5847--5861,
  2010.

\bibitem{DonskerVaradhan1975}
Monroe~D. Donsker and S.~R.~Srinivasa Varadhan.
\newblock Asymptotic evaluation of certain {M}arkov process expectations for
  large time, {I}.
\newblock {\em Communications on Pure and Applied Mathematics}, 28(1):1--47,
  1975.

\bibitem{Qin1998}
Jing Qin.
\newblock Inferences for case-control and semiparametric two-sample density
  ratio models.
\newblock {\em Biometrika}, 85(3):619--630, 1998.

\bibitem{CandesSur2020}
Emmanuel~J. Cand{\`e}s and Pragya Sur.
\newblock The phase transition for the existence of the maximum likelihood
  estimate in high-dimensional logistic regression.
\newblock {\em The Annals of Statistics}, 48(1):27--42, 2020.

\bibitem{ChardonLerasleMourtada2026}
Hugo Chardon, Matthieu Lerasle, and Jaouad Mourtada.
\newblock Finite-sample performance of the maximum likelihood estimator in
  logistic regression, 2024.
\newblock arXiv:2411.02137.

\bibitem{HoerlKennard1970}
Arthur~E. Hoerl and Robert~W. Kennard.
\newblock Ridge regression: Biased estimation for nonorthogonal problems.
\newblock {\em Technometrics}, 12(1):55--67, 1970.

\bibitem{HuangSmolaGrettonBorgwardtScholkopf2007}
Jiayuan Huang, Alexander~J. Smola, Arthur Gretton, Karsten~M. Borgwardt, and
  Bernhard Sch{\"o}lkopf.
\newblock Correcting sample selection bias by unlabeled data.
\newblock In {\em Advances in Neural Information Processing Systems}, 2007.

\bibitem{MenonOng2016}
Aditya~Krishna Menon and Cheng~Soon Ong.
\newblock Linking losses for density ratio and class-probability estimation.
\newblock In {\em International Conference on Machine Learning}, 2016.

\bibitem{SurCandes2019}
Pragya Sur and Emmanuel~J. Cand{\`e}s.
\newblock A modern maximum-likelihood theory for high-dimensional logistic
  regression.
\newblock {\em Proceedings of the National Academy of Sciences},
  116(29):14516--14525, 2019.

\bibitem{salehi2019impact}
Fariborz Salehi, Ehsan Abbasi, and Babak Hassibi.
\newblock The impact of regularization on high-dimensional logistic regression.
\newblock In {\em Advances in Neural Information Processing Systems}, 2019.

\bibitem{BaiSilverstein2010}
Zhidong Bai and Jack~W. Silverstein.
\newblock {\em Spectral Analysis of Large Dimensional Random Matrices}.
\newblock Springer, 2nd edition, 2010.

\bibitem{HachemLoubatonNajim2007}
Walid Hachem, Philippe Loubaton, and Jamal Najim.
\newblock Deterministic equivalents for certain functionals of large random
  matrices.
\newblock {\em The Annals of Applied Probability}, 17(3):875--930, 2007.

\bibitem{dobriban2018prediction}
Edgar Dobriban and Stefan Wager.
\newblock High-dimensional asymptotics of prediction: Ridge regression and
  classification.
\newblock {\em The Annals of Statistics}, 46(1):247--279, 2018.

\bibitem{montanari2022interpolation}
Andrea Montanari and Yiqiao Zhong.
\newblock The interpolation phase transition in neural networks: Memorization
  and generalization under lazy training.
\newblock {\em The Annals of Statistics}, 50(5):2816--2847, 2022.

\bibitem{Bach2024RandomProjections}
Francis Bach.
\newblock High-dimensional analysis of double descent for linear regression
  with random projections.
\newblock {\em SIAM Journal on Mathematics of Data Science}, 6(1):26--50, 2024.

\bibitem{mccullagh2019generalized}
Peter McCullagh and John~A. Nelder.
\newblock {\em Generalized Linear Models}.
\newblock Chapman \& Hall, 1989.

\bibitem{BoydVandenberghe2004}
Stephen Boyd and Lieven Vandenberghe.
\newblock {\em Convex Optimization}.
\newblock Cambridge University Press, 2004.

\bibitem{NocedalWright2006}
Jorge Nocedal and Stephen~J. Wright.
\newblock {\em Numerical Optimization}.
\newblock Springer, 2nd edition, 2006.

\bibitem{broniatowski2006minimization}
Michel Broniatowski and Amor Keziou.
\newblock Minimization of $\varphi$-divergences on sets of signed measures.
\newblock {\em Studia Scientiarum Mathematicarum Hungarica}, 43(4):403--442,
  2006.

\bibitem{Gautschi2004}
Walter Gautschi.
\newblock {\em Orthogonal Polynomials: Computation and Approximation}.
\newblock Oxford University Press, 2004.

\bibitem{banzato2025existence}
Erika Banzato, Mathias Drton, Kian Saraf-Poor, and Hongjian Shi.
\newblock Existence of direct density ratio estimators, 2025.
\newblock arXiv:2502.12738.

\bibitem{SoudryHofferNacsonGunasekarSrebro2018}
Daniel Soudry, Elad Hoffer, Mor~Shpigel Nacson, Suriya Gunasekar, and Nathan
  Srebro.
\newblock The implicit bias of gradient descent on separable data.
\newblock {\em Journal of Machine Learning Research}, 19(70):1--57, 2018.

\bibitem{GunasekarLeeSoudrySrebro2018}
Suriya Gunasekar, Jason~D. Lee, Daniel Soudry, and Nathan Srebro.
\newblock Implicit bias of gradient descent on linear convolutional networks.
\newblock In {\em Advances in Neural Information Processing Systems}, 2018.

\bibitem{van2000asymptotic}
Aad~W. van~der Vaart.
\newblock {\em Asymptotic Statistics}.
\newblock Cambridge University Press, 1998.

\bibitem{Vershynin2018}
Roman Vershynin.
\newblock {\em High-Dimensional Probability: An Introduction with Applications
  in Data Science}.
\newblock Cambridge University Press, 2018.

\bibitem{thrampoulidis2014gaussian}
Christos Thrampoulidis, Samet Oymak, and Babak Hassibi.
\newblock The {Gaussian} min-max theorem in the presence of convexity, 2014.
\newblock arXiv:1408.4837.

\bibitem{TOH2015}
Christos Thrampoulidis, Samet Oymak, and Babak Hassibi.
\newblock Regularized linear regression: A precise analysis of the estimation
  error.
\newblock In {\em Conference on Learning Theory}, 2015.

\bibitem{CorlessGonnetHareJeffreyKnuth1996}
Robert~M. Corless, Gaston~H. Gonnet, David E.~G. Hare, David~J. Jeffrey, and
  Donald~E. Knuth.
\newblock On the {L}ambert {W} function.
\newblock {\em Advances in Computational Mathematics}, 5:329--359, 1996.

\bibitem{SimonTraceIdeals2005}
Barry Simon.
\newblock {\em Trace Ideals and Their Applications}.
\newblock American Mathematical Society, 2005.

\bibitem{brent2013algorithms}
Richard~P. Brent.
\newblock {\em Algorithms for Minimization Without Derivatives}.
\newblock Prentice Hall, 1973.

\bibitem{ArgyriouEvgeniouPontil2008}
Andreas Argyriou, Theodoros Evgeniou, and Massimiliano Pontil.
\newblock Convex multi-task feature learning.
\newblock {\em Machine Learning}, 73(3):243--272, 2008.

\bibitem{daubechies2010iteratively}
Ingrid Daubechies, Ronald DeVore, Massimo Fornasier, and C.~Sinan
  G{\"u}nt{\"u}rk.
\newblock Iteratively reweighted least squares minimization for sparse
  recovery.
\newblock {\em Communications on Pure and Applied Mathematics}, 63(1):1--38,
  2010.

\bibitem{BachJenattonMairalObozinski2012}
Francis Bach, Rodolphe Jenatton, Julien Mairal, and Guillaume Obozinski.
\newblock Optimization with sparsity-inducing penalties.
\newblock {\em Foundations and Trends in Machine Learning}, 4(1):1--106, 2012.

\bibitem{AjankiErdosKruger2019MDE}
Oskari~H. Ajanki, L{\'a}szl{\'o} Erd{\H{o}}s, and Torben~H. Kr{\"u}ger.
\newblock Stability of the matrix {Dyson} equation and random matrices with
  correlations.
\newblock {\em Probability Theory and Related Fields}, 173(1--2):293--373,
  2019.

\bibitem{BelghaziEtAl2018}
Mohamed~Ishmael Belghazi, Aristide Baratin, Sai Rajeshwar, Sherjil Ozair,
  Yoshua Bengio, Aaron Courville, and Devon Hjelm.
\newblock Mutual information neural estimation.
\newblock In {\em International Conference on Machine Learning}, 2018.

\bibitem{PooleEtAl2019}
Ben Poole, Sherjil Ozair, Aaron van~den Oord, Alexander~A. Alemi, and George
  Tucker.
\newblock On variational bounds of mutual information.
\newblock In {\em International Conference on Machine Learning}, 2019.

\bibitem{RahimiRecht2007}
Ali Rahimi and Benjamin Recht.
\newblock Random features for large-scale kernel machines.
\newblock In {\em Advances in Neural Information Processing Systems}, 2007.

\bibitem{WilliamsSeeger2001}
Christopher K.~I. Williams and Matthias Seeger.
\newblock Using the {N}ystr{\"o}m method to speed up kernel machines.
\newblock In {\em Advances in Neural Information Processing Systems}, 2001.

\bibitem{LouartLiaoCouillet2018}
Cosme Louart, Zhenyu Liao, and Romain Couillet.
\newblock A random matrix approach to neural networks.
\newblock {\em The Annals of Applied Probability}, 28(2):1190--1248, 2018.

\bibitem{MeiMontanari2022}
Song Mei and Andrea Montanari.
\newblock The generalization error of random features regression: Precise
  asymptotics and the double descent curve.
\newblock {\em Communications on Pure and Applied Mathematics}, 75(4):667--766,
  2022.

\bibitem{HuLu2023}
Hong Hu and Yue~M. Lu.
\newblock Universality laws for high-dimensional learning with random features.
\newblock {\em IEEE Transactions on Information Theory}, 69(3):1932--1964,
  2023.

\bibitem{GeraceEtAl2020}
Federica Gerace, Bruno Loureiro, Florent Krzakala, Marc M{\'e}zard, and Lenka
  Zdeborov{\'a}.
\newblock Generalisation error in learning with random features and the hidden
  manifold model.
\newblock In {\em International Conference on Machine Learning}, 2020.

\bibitem{LoureiroEtAl2021GenericFeatures}
Bruno Loureiro, C{\'e}dric Gerbelot, Hugo Cui, Sebastian Goldt, Florent
  Krzakala, Marc M{\'e}zard, and Lenka Zdeborov{\'a}.
\newblock Learning curves of generic features maps for realistic datasets with
  a teacher-student model.
\newblock In {\em Advances in Neural Information Processing Systems}, 2021.

\bibitem{LoureiroEtAl2022ConvexLosses}
Bruno Loureiro, Cedric Gerbelot, Maria Refinetti, Gabriele Sicuro, and Florent
  Krzakala.
\newblock Fluctuations, bias, variance \& ensemble of learners: Exact
  asymptotics for convex losses in high-dimension.
\newblock In {\em International Conference on Machine Learning}, 2022.

\bibitem{Johnstone2001}
Iain~M. Johnstone.
\newblock On the distribution of the largest eigenvalue in principal components
  analysis.
\newblock {\em The Annals of Statistics}, 29(2):295--327, 2001.

\bibitem{BaikBenArousPeche2005}
Jinho Baik, G{\'e}rard Ben~Arous, and Sandrine P{\'e}ch{\'e}.
\newblock Phase transition of the largest eigenvalue for nonnull complex sample
  covariance matrices.
\newblock {\em The Annals of Probability}, 33(5):1643--1697, 2005.

\bibitem{ThrampoulidisAbbasiHassibi2018}
Christos Thrampoulidis, Ehsan Abbasi, and Babak Hassibi.
\newblock Precise error analysis of regularized {M}-estimators in high
  dimensions.
\newblock {\em IEEE Transactions on Information Theory}, 64(8):5592--5628,
  2018.

\bibitem{Bornemann2010Fredholm}
Folkmar Bornemann.
\newblock On the numerical evaluation of {F}redholm determinants.
\newblock {\em Mathematics of Computation}, 79(270):871--915, 2010.

\bibitem{Kato1995}
Tosio Kato.
\newblock {\em Perturbation Theory for Linear Operators}.
\newblock Springer, 1995.

\bibitem{BloemendalErdosKnowlesYauYin2014}
Alex Bloemendal, L{\'a}szl{\'o} Erd{\H{o}}s, Antti Knowles, Horng-Tzer Yau, and
  Jun Yin.
\newblock Isotropic local laws for sample covariance and generalized {Wigner}
  matrices.
\newblock {\em Electronic Journal of Probability}, 19(33):1--53, 2014.

\bibitem{Stein1981}
Charles~M. Stein.
\newblock Estimation of the mean of a multivariate normal distribution.
\newblock {\em The Annals of Statistics}, 9(6):1135--1151, 1981.

\bibitem{KhorunzhyKhoruzhenkoPastur1996}
Alexei~M. Khorunzhy, Boris~A. Khoruzhenko, and Leonid~A. Pastur.
\newblock Asymptotic properties of large random matrices with independent
  entries.
\newblock {\em Journal of Mathematical Physics}, 37(10):5033--5060, 1996.

\bibitem{AjankiErdosKruger2019QVE}
Oskari~H. Ajanki, L{\'a}szl{\'o} Erd{\H{o}}s, and Torben~H. Kr{\"u}ger.
\newblock {\em Quadratic Vector Equations on Complex Upper Half-Plane}.
\newblock Memoirs of the American Mathematical Society. American Mathematical
  Society, 2019.

\bibitem{AltErdosKruger2020}
Johannes Alt, L{\'a}szl{\'o} Erd{\H{o}}s, and Torben~H. Kr{\"u}ger.
\newblock The {Dyson} equation with linear self-energy: Spectral bands, edges
  and cusps.
\newblock {\em Documenta Mathematica}, 25:1421--1539, 2020.

\bibitem{Amann1976}
Herbert Amann.
\newblock Fixed point equations and nonlinear eigenvalue problems in ordered
  {Banach} spaces.
\newblock {\em SIAM Review}, 18(4):620--709, 1976.

\bibitem{Bhatia2007}
Rajendra Bhatia.
\newblock {\em Positive Definite Matrices}.
\newblock Princeton University Press, 2007.

\end{thebibliography}

\end{document}